\newcommand*{\NN}{\mathbb{N}}
\newcommand*{\parO}{\partial{\cK}}
\newcommand*{\coA}{\mathfrak{C}_{A}}
\newcommand*{\coAx}{\mathfrak{C}_{Z_{x}}}
\newcommand*{\coBone}{\mathfrak{C}_{B_{1}}}
\newcommand*{\coBP}{\mathfrak{C}_{B_{P}}}
\newcommand*{\coBp}{\mathfrak{C}_{B_{p}}}
\newcommand*{\uD}{\underline{D}}
\newcommand*{\coYROF}{\mathfrak{C}_{Y^{TV-\ell^2}}}
\newcommand*{\ucoD}{\underline{\mathfrak{C}}_{\underline{D}}}
\newcommand*{\coC}{\mathfrak{C}_{C}}
\newcommand*{\ucoBB}{\underline{\underline{\mathfrak{C}}}_{\underline{B}}}
\newcommand*{\ucoYY}{\underline{\underline{\mathfrak{C}}}_{\underline{Y}}}
\newcommand*{\ucoYROF}{\underline{\underline{\mathfrak{C}}}_{Y^{TV-\ell^2}}}
\newcommand*{\ucoR}{\underline{\underline{\mathfrak{C}}}_{\underline{R}}}
\newcommand*{\ucoRT}{\underline{\underline{\mathfrak{C}}}_{\underline{\widetilde{R}}}}
\newcommand*{\uH}{\underline{H}}
\newcommand*{\uR}{\underline{R}}
\newcommand*{\coV}{\mathfrak{C}_{V}}
\newcommand*{\coVT}{\mathfrak{C}_{\widetilde{V}}}
\newcommand*{\coM}{\mathfrak{C}_{M}}
\newcommand*{\coY}{\mathfrak{C}_{Y}}
\newcommand*{\coF}{\mathfrak{F}}
\newcommand*{\coE}{\mathfrak{E}}
\newcommand*{\ucoB}{\underline{\mathfrak{C}}_{\underline{B}}}
\newcommand*{\ucoBT}{\underline{\mathfrak{C}}_{\underline{\widetilde{B}}}}
\newcommand*{\inp}{(\underline{B},\underline{\widetilde{B}})}
\newcommand*{\Usol}{U^{\dagger}}
\newcommand*{\Wsol}{\underline{W}^{\dagger}}
\newcommand*{\lamsol}{\underline{\lambda}^{\dagger}}
\newcommand*{\uW}{\underline{W}}
\newcommand*{\uB}{\underline{B}}
\newcommand*{\uY}{\underline{Y}}
\newcommand*{\uBT}{\underline{\widetilde{B}}}
\newcommand*{\ulam}{\underline{\lambda}}
\newcommand*{\fuJ}{\mathcal{J}}
\newcommand*{\fuY}{\mathsf{Y}}
\newcommand*{\fuSh}{\Omega_{S}}
\newcommand*{\fuIm}{\Omega_{F}}
\newcommand*{\fus}{\mathsf{s}}
\newcommand*{\fuS}{\mathsf{S}}
\newcommand*{\R}{\mathbb{R}}
\newcommand*{\N}{\mathbb{N}}
\newcommand*{\C}{\mathbb{C}}
\newcommand*{\Z}{\mathbb{Z}}
\newcommand*{\fix}{(U^{\dagger},\underline{W}^{\dagger},\underline{\lambda}^{\dagger})}
\newcommand*{\GammaP}{\Gamma_{P}}
\newcommand*{\lb}{\left(}
\newcommand*{\rb}{\right)}
\newcommand*{\lbr}{\left[}
\newcommand*{\rbr}{\right]}
\newcommand*{\lbb}{\left\{}
\newcommand*{\rbb}{\right\}}
\newcommand*{\lla}{\left\langle}
\newcommand*{\rra}{\right\rangle}
\newcommand*{\lab}{\left|}
\newcommand*{\rab}{\right|}
\newcommand*{\SpM}{\R^{n\times{}m}}
\newcommand*{\SpMC}{\C^{n\times{}m}}
\newcommand*{\allkl}{0\leq{}k\leq{}n-1\text{ and }0\leq{}\ell\leq{}m-1}
\newcommand*{\cK}{\mathcal{K}}
\numberwithin{equation}{section}
\theoremstyle{plain} 
\newtheorem{The}{Theorem}[section]
\newtheorem{Lem}[The]{Lemma}
\newtheorem{Cor}[The]{Corollary}
\newtheorem{Rem}[The]{Remark}
\theoremstyle{definition}
\newtheorem{Def}[The]{Definition}
\newtheorem{PropDef}[The]{Proposition and Definition}
\newtheorem{Conv}[The]{Convention}
\newtheorem{Prob}[The]{Problem}
\newtheorem{Alg}[The]{Algorithm}
\begin{document}

\title{Generalized Intersection Algorithms with Fixpoints for Image Decomposition Learning} 

\author{Robin Richter\thanks{R. Richter and S.F. Huckemann are with the Felix-Bernstein-Institute for Mathematical Statistics in the Biosciences at the University of Goettingen, 37077 Goettingen, Germany (e-mails: robin.richter@mathematik.uni-goettingen.de, huckeman@math.uni-goettingen.de)}\,,  Duy H. Thai\thanks{D.H. Thai is with the Department of Mathematics at Colorado State University}\, and Stephan F. Huckemann$^{\ast}$}

\maketitle

\begin{abstract}
In image processing, classical methods minimize a suitable functional that balances between computational feasibility (convexity of the functional is ideal) and suitable penalties reflecting the desired image decomposition. The fact that algorithms derived from such minimization problems can be used to construct (deep) learning architectures has spurred the development of  
algorithms that can be trained for a specifically desired image decomposition, e.g. into cartoon and texture. While many such methods are very successful, theoretical guarantees are only scarcely available. To this end, in this contribution, we formalize a general class of intersection point problems encompassing a wide range of (learned) image decomposition models, and we give an existence result for a large subclass of such problems, i.e. giving the existence of a fixpoint of the corresponding algorithm. This class generalizes classical model-based variational problems, such as the TV-$\ell^2$-model or the more general TV-Hilbert model. To illustrate the potential for learned algorithms, novel (non learned) choices within our class show comparable results in denoising and texture removal.
\end{abstract}

\section{Introduction}

Decomposing an image into parts that carry desired information and other parts that are considered as nuisance is an old, and yet, an ever new problem. For decades this problem has been attacked under a variational paradigm crafting suitable objective functions that reflect a desired image decomposition, e.g. \cite{Sch09,CCN15}. Often, however, it is not fully clear, how optimal these objective functions are for a given task. Typically for cartoon-texture decomposition, there are three goals: identifying piecewise constant parts, keeping contrast and avoiding artefacts. For this and other tasks such as denoising/deblurring or classification, see e.g. \cite{KBPS11,BP10,SB07,MMG12,Ber06,YGO05}.

Exploiting recently developed 
(convolutional) neural networks, improved objective functions have been \emph{learned}, 
e.g. \cite{MJU17,Wan16}. 
Also, 
since solutions of minimization problems are often obtained algorithmically, and every iteration step can be viewed as applying a family of convolution filters, suitable filters have been learned (e.g. \cite{CYP15,CP17,AO17,AO18,YSLX16,LCG18}). Corresponding solutions are often no longer minimizers of an (explicitly  given) objective function, e.g. \cite{YSLX16, LCG18}.
While, corresponding algorithms come naturally with an explicit \emph{intersection problem} -- the solution of which are fixed points -- as the connection with an objective function may be lost, it is a priori unclear whether fixed points exist at all. 

In our contribution we prove that fixed points exist for a large class of intersection problems. This class encompasses many classical models such as the TV-$\ell^2$-model or the more general TV-Hilbert model, but is much larger. For a family of (convolution) filters that is not learned -- applying our result to learned and learning filters is the subject of ongoing research -- 
we illustrate that keeping contrast and avoiding artefacts can be achieved by convolution filters that do not correspond, to the best of the authors' knowledge, to a minimization problem.

Throughout this contribution we consider
\begin{itemize}
 \item an $n\times m$ pixel image given by a gray value matrix $F \in \SpM$,
 \item a filter matrix $A \in \SpM$ and families of matrix filters $\uB,\uBT \in(\SpM)^{\times P}$ of fixed length $P\in \NN$
 inducing \emph{discrete convolutional operators} $\coA$ and $\ucoB,\ucoBT$, respectively, with adjoints $\coA^{\ast}$, etc., as detailed in Convention \ref{con}.
 \item \emph{shrinkage functions} $\fuS_{\kappa}$ ($\kappa =1,2$) and norms $|\cdot|_{1,\kappa}$ for $(\SpM)^{\times P}$, also detailed in Convention \ref{con}.,
 \item the desired output matrix $U\in\SpM$, which in the context of this contribution is a denoised image (cf. Section \ref{sec:denoising}) or a \emph{cartoon} (cf. Section \ref{sec:cartoon}), 
 \end{itemize}
and the intersection  Problem \ref{prob:gen_intro} below. 
In this and Algorithm \ref{alg:Generalized_Algorithm} further below the superscript ``$G$'' stands for the generalization from the literature (e.g.~\cite{WT10}) and the subscript ``$C$'' for the constraining condition and step, respectively (see (\ref{eq:originalprobW} in Section~\ref{sec:previous}).

\begin{Prob}\label{prob:gen_intro}
 Find $\fix\in \Omega^{\kappa}_{1}\cap \Omega_{2}^{G}\cap \Omega_{C} \subset (\SpM)^{\times(1+2P)}$ where 
\begin{equation}\label{eq:gen_intersection_intro}
\begin{aligned}
\Omega^{\kappa}_{1}&:=\lbb{}\lb{}U,\uW,\ulam\rb{}\in(\SpM)^{\times{}(1+2P)}:{}\uW{}=\fuS_{\kappa}\lb\ucoB\lb{}U\rb-\frac{1}{\beta}\ulam;\,\frac{1}{\beta}\rb\rbb\,,\\
\Omega_{2}^{G}&:=\lbb{}\lb{}U,\uW,\ulam\rb{}\in(\SpM)^{\times{}(1+2P)}:{}U{}=\coA\lb{}F\rb+\ucoBT^{\ast}\lb{}\uW+\frac{1}{\beta}\ulam\rb\rbb\,,\\
\Omega_{C}&:=\lbb(U,\uW,\ulam)\in(\SpM)^{\times{}(1+2P)}:\ucoB\lb{}U\rb=\uW\rbb\,.
\end{aligned}
\end{equation}
\end{Prob}
In order to solve Problem \ref{prob:gen_intro}, we formalize a generalization of an \emph{augmented Lagrangian/alternating method of multipliers} (AL/ADMM) algorithm originally introduced for problems of the form (\ref{eq:variatonalproblems_L1_one}) below, by~\cite{EB92,Gab83}, which, in our general setting rewrites as follows.

\begin{Alg}\label{alg:Generalized_Algorithm}
For $\tau = 1,2,\ldots$, do until 
$\frac{\lab\lab{}U^{(\tau)}-U^{(\tau-1)}\rab\rab}{\lab\lab{}U^{(\tau-1)}\rab\rab}<\epsilon $
\begin{equation}\label{eq:algo_intro}
\begin{aligned}
\uW^{(\tau)}&=\fuS_{\kappa}\lb{}\ucoB\lb{}U^{(\tau-1)}\rb-\frac{1}{\beta}\ulam^{(\tau)};\,\frac{1}{\beta}\rb\,,\\
U^{(\tau)}&=\coA{}(F)+\ucoBT^{\ast}\lb{}\uW^{(\tau)}+\frac{1}{\beta}\ulam^{(\tau)}\rb,\\
\ulam^{(\tau+1)}&=\ulam^{(\tau)}+\beta\lb\uW^{(\tau)}-\ucoB(U^{(\tau)})\rb\,.
\end{aligned}
\end{equation}
\end{Alg}

We have at once the following equivalence.

\begin{Rem}\label{rem:equivalence}
Every intersection point of Problem~\ref{prob:gen_intro} is a fixed point of Algorithm~\ref{alg:Generalized_Algorithm} and vice versa.
\end{Rem}


For the 
class of $(A,\uB,\uBT) \in (\SpM)^{\times (1+2P)}$ with $(\uB,\uBT)$ \emph{weakly factoring} -- this new concept is defined in Definition \ref{def:generalizations} -- and satisfying a contraction and positive semi-definite condition we give in Theorem \ref{the:existenceouter} guarantees for existence of solutions of Problem \ref{prob:gen_intro}. 

As insinuated above, Problem \ref{prob:gen_intro} is motivated by its applicability to a range of algorithms learning $\uB$ and $\uBT$, in particular it covers the ADMM-NET of \cite{YSLX16,LCG18}. 
In order to assess the success of such algorithms in applications (\cite{MJU17} survey such considerations) we identify special cases of Problem~\ref{prob:gen_intro}. 

A number of learning problems (\cite{CYP15,CP17,AO17,AO18,YSLX16,LCG18}) build on minimizing special cases of the classical $\ell^{1}$-regularized functional
\begin{equation}\label{eq:variatonalproblems_L1_one}
\mathcal{J}_{\text{VAR}}(U)= \lab{}\ucoB(U)\rab_{1,\kappa}+\frac{\mu}{2}\lab\lab{}U-F\rab\rab^{2}\,
\end{equation}
(cf. \cite{Sch09}) with $\mu>0$. This is a special case of Problem \ref{prob:gen_intro}, as we will see in Section \ref{scn:l1-reg}.  
Notably, plugging in the two-dimensional discrete gradient $\nabla$ - which, considering periodic boundary conditions, can be written as a convolution operator - for $\ucoB$ ($P=2$) and $\kappa =2$ in (\ref{eq:variatonalproblems_L1_one}) gives the classic TV-$\ell^2$ minimization problem 
\begin{equation}\label{prob:ROF} \mathcal{J}_{\text{TV-$\ell^2$}}(U)= \lab{}\nabla U\rab_{1,2}+\frac{\mu}{2}\lab\lab{}U-F\rab\rab^{2}\,,\end{equation}
with its continuous version proposed by \cite{ROF92}. 

Closely  related is the \emph{anisotropic diffusion process}, cf. \cite{Stei04}. For general choices of $\uB$ and $\uBT$, however, it seems that Problem~\ref{prob:gen_intro} cannot be written as a minimization problem such as (\ref{eq:variatonalproblems_L1_one}).


Aujol and Gilboa~\cite{AG06} have introduced the $TV$-Hilbert problem by applying a convolutional operator $\coM$ to the argument of the $\ell^2$ norm in (\ref{prob:ROF}). 
As another contribution, in Theorem \ref{the:TV-Hilbert}, we show that for suitably chosen $A$ and $\uBT$, the \emph{generalized Hilbert problem}, minimizing
\begin{equation}\label{eq:Hilbert}
\mathcal{J}_{\text{HIL}}(U)=\lab{}\ucoB\lb{}U\rb\rab_{1,\kappa}+\frac{\mu}{2}\lab\lab{}\coM(U-F)\rab\rab^{2}\,,
\end{equation}
with suitable  $M\in\SpM$, where its circular Fourier transform satisfies in particular $\widehat{M}\in\SpM_{+}$, is another special case of Problem~\ref{prob:gen_intro}. In this context we note the introduction of the $G$-norm by Meyer~\cite{Mey01} to model texture components. Since the $G$-norm is hard to compute, it has been approximated by suitably choosing $\uB$ and $M$ in (\ref{eq:Hilbert}), cf. \cite{VO03,VO04,OSV03,GLMV07}.

Our paper is structured as follows. In Section~\ref{sec:two} we introduce the discrete notation and state the existence theorem guaranteeing solutions of Problem~\ref{prob:gen_intro} for  $(A,\uB,\uBT)\in(\SpM)^{\times{}1+2P}$ satisfying rather broad conditions. Its proof is deferred to Section~\ref{sec:four} with some details further deferred to the appendix. In Section~\ref{sec:previous} we link the generalized Problem~\ref{prob:gen_intro} to existing minimization problems.
In fact, our Algorithm \ref{alg:Generalized_Algorithm} is motivated by the AL/ADMM algorithm (Algorithm \ref{alg:original_AL_ROF}, cf. \cite{WT10,BV04}) which solves the saddle point problem  (\ref{eq:saddle}), which is equivalent to (\ref{eq:variatonalproblems_L1_one}).
Moreover
the $TV$-Hilbert model of~\cite{AG06} is reverse-engineered under a condition on $(A,\uB,\uBT)\in(\SpM)^{\times{}1+2P}$ stricter than needed for the existence result. In Section~\ref{sec:five} applications are presented in the context of denoising and the separation of cartoon and texture. We conclude the paper with an outlook to filter learning.

The results presented are taken from the Ph.D thesis of the first author \cite{Ric19b}. 

\section{Conventions and Existence Result}\label{sec:two}
Although many 
approaches for image decomposition are derived in the continuous domain, 
we consider images as matrices, what they are in applications. Similarly, operators on images are computationally discretized into (families of) matrices. 
Particularly handy are linear operators that can be represented via multiple circular matrix convolutions, such as the discrete gradient with periodic boundary conditions or suitably redundant, discrete (wavelet) frame operators (\cite{Mal02}), since they are easy to implement as entry-wise multiplications in the frequency domain, making vectorization redundant. 
 
\begin{Conv}\label{con}

For positive integers $n,m,P$, let  in the following $A,G\in\SpM$ be matrices and $\uB = (B_1,\ldots,B_P), \uH = (H_1,\ldots,H_P)  \in (\SpM)^P =: \Gamma_{P}$ matrix-families.

\begin{enumerate}[(i)]
\item Denote the entries of $A$ by $A[k,\ell]$ for $0\leq{}k\leq{}n-1$ and $0\leq{}\ell\leq{}m-1$. Matrix entries are indexed in $\Z/n\Z\times\Z/m\Z$ and any matrix index exceeding the range of $\lbb{}0,1,\dots{}n-1\rbb\times\lbb{}0,1,\dots{},m-1\rbb$ takes its value modulo $(n,m)$, encoding periodic boundary conditions. \\
$A^{\ast}$ denotes the adjoint of $A$, i.e. its transposed and complex conjugate, and 
$$\widehat{A} = \coF(A) = \left({\rm trace}(F_{rs} A^T)\right)_{r=0,s=0}^{n-1,m-1}$$ 
its discrete circular Fourier transform, where 
\begin{displaymath}
F_{r,s}[k,\ell]=e^{-\frac{2\pi{}ikr}{n}-\frac{2\pi{}i\ell{}s}{m}}\,, 0\leq{}r,k\leq{}n-1\,,0\leq{}s,\ell\leq{}m-1\,.
\end{displaymath} 

\item The linear operator $\coA:\SpM\to\SpM$ denotes the \emph{discrete circular convolution}, or short \emph{matrix convolution}, defined for $G\in\SpM$ by
	\begin{displaymath}\label{eq:convolution}
	\begin{aligned}	\coA(G)=\lb{}\sum_{k=0}^{n-1}
	\sum_{\ell=0}^{m-1}A[k-r,\ell-s]G[k,\ell]\rb_{r=0,s=0}^{n-1,m-1}\,.
	\end{aligned}
	\end{displaymath}
	With the component-wise product $\odot$ we have the well known
	$$ \coA(G) = \coF^{-1} (\widehat A \odot \widehat G)$$
	where $\coF^{-1}(G) = \frac{1}{nm} \left({\rm trace}(\overline{F_{rs}} G^T)\right)_{r=0,s=0}^{n-1,m-1}$. 
	
    Further, $\ucoB:\SpM\to\GammaP$ denotes the \emph{matrix-family convolution}, defined 
    by
	\begin{displaymath}
	\ucoB\lb{}G\rb=\lb\coBone(G),\dots{},\coBP(G)\rb\,,
	\end{displaymath}
	and 
    $\ucoBB:\GammaP\to\GammaP$ denotes the \emph{matrix-family convolution}, defined 
    by
	\begin{displaymath}
	\ucoBB\lb{}\uH\rb=\lb\coBone(H_1),\dots{},\coBP(H_P)\rb\,
	\end{displaymath}
    Moreover, $\coE:\SpM\to\SpM$ is the identity operator.
\item By $\lla\cdot{},\cdot{}\rra$ and $\lab\lab{}\cdot{}\rab\rab$ denote the usual Euclidean inner product and norm, respectively, for vectors, matrices and families of matrices, for instance
        $$ \langle A,G\rangle = {\rm trace}(AG^*)\mbox{ and }\langle \uB,\uH\rangle =  \sum_{p=1}^P{\rm trace}(B_pH_p^*) \,.$$
        These inner products give rise to adjoint operators
		$$ \coA^\ast: \SpM \to \SpM,~\ucoB^\ast : \GammaP \to \SpM\mbox{ and } \ucoBB^\ast : \GammaP\to \GammaP\,.$$
		Verify that $ \coA^\ast = \coC$ with $C[k,\ell] = \overline{A[-k,-\ell]}$ and $\widehat C[k,\ell] = \overline{\widehat A[k,\ell]}$ for $0\leq{}k\leq{}n-1$ and $0\leq{}\ell\leq{}m-1$,
		$$\ucoB^\ast(\uH) = \sum_{p=1}^P \coBp^\ast(H_p) \mbox{ and } \ucoBB^\ast(\uH) = \left(\coBone^\ast(H_1),\ldots,\coBP^\ast(H_P)\right)\,.$$
	    
\item Introduce more norms by
	\begin{eqnarray*}
	\lab\lab\uB\rab\rab_{1,2}&:=&
	\sum_{k=0}^{n-1}\sum_{\ell=0}^{m-1}\lab\lab{}\lb{}B_{1}[k,\ell],\dots{},B_{P}[k,\ell]\rb\rab\rab\,,\\
		\lab\lab\uB\rab\rab_{1,1}&:=&\sum_{p=1}^{P}\sum_{k=0}^{n-1}\sum_{\ell=0}^{m-1}\lab{}B_{p}[k,\ell]\rab{}\,.
		\end{eqnarray*}
		These are related to the \emph{isotropic $\ell^{1}$-norm} and  the \emph{anisotropic $\ell^{1}$-norm}, respectively, from \cite{CDOS12} in the continuous setting. 
		
\item For $\beta\in\R_{+}$ define the \emph{anisotropic soft-shrinkage function} $\fuS_{1}:\Gamma_{P}\times\R_{+}\to\GammaP$ by
\begin{equation*}\label{eq:fuS_A}
\begin{aligned}
\fuS_{1}(\uB,\beta)&:=\lb\lb\fus_{1}\lb{}B_{p}[k,\ell],\beta\rb{}\rb_{k=0,\ell=0}^{n-1,m-1}\rb_{1\leq{}p\leq{}P}\,,\\
\text{ with }\quad{}\fus_{1}(x,\beta)&:=\begin{cases}
x-\beta & \text{if }x>\beta\\
0	&	\text{if }x\in\lbr{}-\beta,\beta\rbr{}\\
x+\beta & \text{if }x<-\beta
\end{cases}\,,
\end{aligned}
\end{equation*} 
and the \emph{isotropic soft-shrinkage function} $\fuS_{2}:\GammaP\times\R_{+}\to\GammaP$ by
\begin{equation*}\label{eq:fuS_B}
\begin{aligned}
\fuS_{2}(\uB,\beta)&:=\lb\lb\fus_{2}\lb\lb\lb{}B_{p}[k,\ell]\rb_{p=1}^{P}\rb^{T},\beta\rb{}[p]\rb_{k=0,\ell=0}^{n-1,m-1}\rb_{1\leq{}p\leq{}P}\,,
\end{aligned}
\end{equation*}
with
\begin{displaymath}
\fus_{2}(x,\beta):=\frac{x}{\lab\lab{}x\rab\rab}\text{max}\lb{}0,\lab\lab{}x\rab\rab-\beta\rb\,
\end{displaymath}
for $x \in \mathbb R^P$.

\end{enumerate}

\end{Conv}

The following definition gives a class of input filters for which Theorem \ref{the:existenceouter} below asserts that Problem \ref{prob:gen_intro} features an intersection point and equivalently Algorithm \ref{alg:Generalized_Algorithm} a fixed point (cf. Remark \ref{rem:equivalence}). It also gives a smaller class for which the minimization Problem  (\ref{eq:Hilbert}) can be reverse engineered as an instance of Problem \ref{prob:gen_intro} in Theorem \ref{the:TV-Hilbert} of Section \ref{scn:Hilbert}.  

\begin{Def}\label{def:generalizations}
We call a triple $(A,\uB,\uBT)$ \textit{input filters} for Algorithm~\ref{alg:Generalized_Algorithm} if $A\in\SpM$ and $\underline{B}=(B_{p})_{p=1}^{P}, \underline{\widetilde{B}}=(\widetilde{B}_{p})_{p=1}^{P}\in\GammaP$. We then say that $(\uB,\uBT)$    
\begin{enumerate}[(i)]
\item 
\textit{weakly factor} if there is $\underline{Y}=\lb{}Y_{p}\rb_{p=1}^{P}\in\Gamma_{P}$, called a \emph{weak factor} of $(\uB,\uBT)$, such that $\widehat{Y}_{p}\in\R_{+}^{n\times{}m}$ and, for all $1\leq{}p\leq{}P$, $\allkl$,
\begin{displaymath}
\widehat{\widetilde{B}}_{p}[k,\ell]=\widehat{Y}_{p}[k,\ell]\widehat{B}_{p}[k,\ell]\,;
\end{displaymath}

\item  \textit{strongly factor} if there is $Y \in \SpM$ called a \emph{strong factor} of $(\uB,\uBT)$, such that $\widehat{Y}\in\SpM_{+}$ and, for all $1\leq{}p\leq{}P$, $\allkl$, 
\begin{displaymath}
\widehat{\widetilde{B}}_{p}[k,\ell]=\widehat{Y}[k,\ell]\widehat{B}_{p}[k,\ell]\,;
\end{displaymath}

\item 
\emph{satisfy the non-expansive and positive semi-definite condition} (NEPC) if
\begin{equation}\label{eq:NEPC}
0\leq{}\,\sum_{p=1}^{P}\overline{\widehat{\widetilde{B}}_{p}[k,\ell]}\widehat{B}_{p}[k,\ell]\,\leq{}1\,,\quad{}\text{ for all }\allkl\,;\tag{NEPC}
\end{equation}

\item 
\emph{satisfy the contraction and positive semi-definite condition} (CPC) if 
\begin{equation}\label{eq:CPC}
0\leq{}\,\sum_{p=1}^{P}\overline{\widehat{\widetilde{B}}_{p}[k,\ell]}\widehat{B}_{p}[k,\ell]\,<1\,,\quad{}\text{ for all }\allkl\,.\tag{CPC}
\end{equation}

\end{enumerate}
\end{Def}

\begin{Rem}\label{ast-factor:eq}
If $Y$ is a strong factor for $(\uB,\uBT)$,  then $\ucoBT = \ucoB \coY$, yielding in conjunction with Convention \ref{con}, (iii),
\begin{eqnarray*} 
    \ucoBT^\ast &=& \coY \ucoB^\ast\,.
\end{eqnarray*}
Similarly, if $\underline{Y}$ is a weak factor for $(\uB,\uBT)$, then $\ucoBT =  \ucoYY\ucoB$, yielding 
\begin{eqnarray*} 
    \ucoBT^\ast &=& \ucoB^\ast \ucoYY \,.
\end{eqnarray*}

\end{Rem}

While  conditions~(\ref{eq:NEPC}) and~(\ref{eq:CPC}) 
demand in particular for all $k,\ell$  that 
$$\sum_{p=1}^{P}\overline{\widehat{\widetilde{B}}_{p}[k,\ell]}\widehat{B}_{p}[k,\ell]\in \R\,,$$
this is a straightforward consequence of weakly factoring filters. The following central lemma elucidates more consequences.

\begin{Lem}\label{lem:eigenvalues}
Let $\inp\in\Gamma_{2P}$ be 
weakly factoring. 
Then, $\ucoBT^{\ast}\ucoB$ diagonalizes and the set of non-zero eigenvalues of 
\begin{displaymath}
\ucoBT^{\ast}\ucoB:\SpM\to\SpM\,\text{ and }\,\ucoB\ucoBT^{\ast}:\GammaP\to\GammaP
\end{displaymath} 
coincide and is given by the following set of real, positive numbers
\begin{displaymath}
\lbb{}\sum_{p=1}^{P}\lb\overline{\widehat{\widetilde{B}}_{p}[k,\ell]}\widehat{B}_{p}[k,\ell]\rb:\quad{}\allkl\rbb\setminus\lbb{}0\rbb\,.
\end{displaymath}

 
\end{Lem}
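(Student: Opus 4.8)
The plan is to transport the entire statement to the Fourier side, where by Convention~\ref{con}(ii) every matrix convolution $\coA$ becomes entry-wise (Hadamard) multiplication by its symbol $\widehat{A}$, and where $\coF$ is a linear isomorphism. First I would record the symbols of the operators involved: since $\coBp(G)=\coF^{-1}(\widehat{B}_p\odot\widehat{G})$ and, by Convention~\ref{con}(iii), $\coBp^\ast$ has symbol $\overline{\widehat{B}_p}$, the operator $\ucoB$ sends $G$ to $(\widehat{B}_p\odot\widehat{G})_{p=1}^{P}$ and $\ucoBT^\ast$ sends $\uH$ to $\coF^{-1}(\sum_p\overline{\widehat{\widetilde{B}}_p}\odot\widehat{H}_p)$. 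Composing, $\ucoBT^\ast\ucoB$ is, on the Fourier side, entry-wise multiplication by the single symbol $\Lambda[k,\ell]:=\sum_{p=1}^{P}\overline{\widehat{\widetilde{B}}_p[k,\ell]}\,\widehat{B}_p[k,\ell]$.

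Next I would feed in the weak factoring hypothesis. Substituting $\widehat{\widetilde{B}}_p=\widehat{Y}_p\widehat{B}_p$ with $\widehat{Y}_p[k,\ell]\in\R_+$ yields $\Lambda[k,\ell]=\sum_p\widehat{Y}_p[k,\ell]\,|\widehat{B}_p[k,\ell]|^2$, which is real and nonnegative. In particular $\Lambda$ is a real symbol, and the same computation shows that $\ucoB^\ast\ucoBT$ has the identical symbol $\Lambda$; since $\coF$ is injective this gives $\ucoBT^\ast\ucoB=(\ucoBT^\ast\ucoB)^\ast$, i.e.\ self-adjointness. Because $\coF$ conjugates $\ucoBT^\ast\ucoB$ to the multiplication operator $M_\Lambda$, whose eigenvalues are exactly the entries $\Lambda[k,\ell]$ (with the Fourier basis matrices as eigenvectors), the operator $\ucoBT^\ast\ucoB$ diagonalizes with real eigenvalue set $\{\Lambda[k,\ell]:\allkl\}$; discarding $0$ gives the claimed set of nonzero eigenvalues.

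For the second operator $\ucoB\ucoBT^\ast$ I would exploit that, as a circular convolution structure, it is block-diagonal over frequencies. At each fixed $(k,\ell)$ it acts on $\C^P$ as the $P\times P$ matrix whose $(q,p)$ entry is $\widehat{B}_q[k,\ell]\,\overline{\widehat{\widetilde{B}}_p[k,\ell]}$, namely the rank-one matrix $\mathbf{b}\,\mathbf{\tilde b}^{\,\ast}$ with $\mathbf{b}=(\widehat{B}_q[k,\ell])_q$ and $\mathbf{\tilde b}=(\widehat{\widetilde{B}}_p[k,\ell])_p$. Such a rank-one matrix has a single possibly-nonzero eigenvalue equal to its trace $\mathbf{\tilde b}^{\,\ast}\mathbf{b}=\Lambda[k,\ell]$. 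Collecting over all frequencies shows that the nonzero eigenvalues of $\ucoB\ucoBT^\ast$ are again exactly $\{\Lambda[k,\ell]:\allkl\}\setminus\{0\}$, so the two nonzero spectra coincide. (Alternatively, this coincidence is the general fact that $ST$ and $TS$ share their nonzero spectrum, applied to $S=\ucoBT^\ast$ and $T=\ucoB$.)

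The symbol computation and the complex conjugates appearing in the adjoints are routine bookkeeping; the one point deserving care is the precise meaning of \emph{diagonalizes}. Since $\coF$ maps the real space $\SpM$ into conjugate-symmetric complex matrices rather than handing over a real eigenbasis directly, I would lean on the self-adjointness established above to invoke the spectral theorem on the real inner-product space, which guarantees a genuine orthonormal real eigenbasis carrying the real eigenvalues $\Lambda[k,\ell]$. This real-versus-complex distinction is the only step that must be handled explicitly, and it is also why I claim diagonalizability only for $\ucoBT^\ast\ucoB$ and not for the generally non-normal $\ucoB\ucoBT^\ast$.
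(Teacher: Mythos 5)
Your proposal is correct and follows essentially the same route as the paper: your symbol computation is exactly the paper's evaluation of $\ucoBT^{\ast}\ucoB$ on the Fourier basis matrices $\overline{F}_{r,s}$, and weak factoring is used identically to conclude that the eigenvalues $\sum_{p=1}^{P}\overline{\widehat{\widetilde{B}}_{p}[k,\ell]}\widehat{B}_{p}[k,\ell]$ are real and nonnegative. For the coincidence of the nonzero spectra the paper uses precisely the eigenvector-transport argument you mention as an alternative (sending an eigenmatrix $U$ to $\ucoB(U)$ and an eigenfamily $\uW$ to $\ucoBT^{\ast}(\uW)$); your per-frequency rank-one computation is a valid minor variation, and your explicit handling of the real-versus-complex eigenbasis is more careful than the paper itself, which works implicitly with the complexification.
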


\begin{proof}

Recall the family $F_{r,s}$ ($0\leq r\leq n-1$, $0\leq s \leq m-1$) of matrices from Convention \ref{con} conveying the discrete Fourier transformation. In consequence of $\widehat{\overline{F}_{r,s}} =E_{r,s}$ with  $E_{r,s}[k,\ell]=\delta_{rk}\delta_{s\ell}$ ($0\leq{}r,k\leq{}n-1$, $0\leq{}s,\ell\leq{}m-1$), we have
\begin{equation*} 
\begin{aligned}
\ucoBT^{\ast}\ucoB\lb{}\overline{F}_{r,s}\rb&=\ucoBT^{\ast}\lb\coBone\lb{}\overline{F}_{r,s}\rb,\dots{},\coBP\lb{}\overline{F}_{r,s}\rb\rb\\
&=\ucoBT^{\ast}\lb{}\frac{1}{nm}\,\coF^{-1}\lb{}\widehat{B}_{1}\odot{}E_{r,s}\rb,\dots,\frac{1}{nm}\,\coF^{-1}\lb{}\widehat{B}_{P}\odot{}E_{r,s}\rb\rb\\
&=\ucoBT^{\ast}\lb{}\frac{1}{nm}\,\coF^{-1}\lb{}\widehat{B}_{1}[r,s]E_{r,s}\rb,\dots,\frac{1}{nm}\,\coF^{-1}\lb{}\widehat{B}_{P}[r,s]E_{r,s}\rb\rb\\
&=\frac{1}{nm}\,\coF^{-1}\lb\sum_{p=1}^{P}\overline{\widehat{\widetilde{B}}_{p}[r,s]}\lb\widehat{B}_{p}[r,s]
E_{r,s}\rb\rb\\
&=
\lb\sum_{p=1}^{P}\overline{\widehat{\widetilde{B}}_{p}[r,s]}\widehat{B}_{p}[r,s]\rb{}\overline{F}_{r,s}\,.
\end{aligned}
\end{equation*}
Moreover, since the $\overline{F}_{r,s}$  ($0\leq{}r,k\leq{}n-1$, $0\leq{}s,\ell\leq{}m-1$) span $\SpM$, $\ucoBT^{\ast}\ucoB$ diagonalizes and all of its eigenvalues are given by
\begin{equation}\label{eq:eigen}
\lbb{}\sum_{p=1}^{P}\lb\overline{\widehat{\widetilde{B}}_{p}[k,\ell]}\widehat{B}_{p}[k,\ell]\rb:\quad\allkl\rbb{}\,.
\end{equation}

Further, let $\underline{Y}\in\GammaP$ be the weak factor 
of $\inp$ 
such that
\begin{displaymath}
\sum_{p=1}^{P}\overline{\widehat{\widetilde{B}}_{p}[k,\ell]}\widehat{B}_{p}[k,\ell]=\sum_{p=1}^{P}\widehat{Y}_{p}[k,\ell]\lab\lab{}\widehat{B}_{p}[k,\ell]\rab\rab^{2}\in\R_{+}\,,
\end{displaymath}
and consider an eigenvalue $\nu\neq{}0$ of $\ucoBT^{\ast}\ucoB$ to an eigenmatrix $U\in\SpMC$.  Then  $U\notin\text{ker}\lb\ucoB\rb$, hence $\ucoB\lb{}U\rb$ is a family of matrices, not all of which are the $0$-matrix, and
\begin{displaymath}
\ucoB\ucoBT^{\ast}\lb\ucoB\lb{}U\rb\rb=\ucoB\ucoBT^{\ast}\ucoB\lb{}U\rb=\ucoB\lb
\nu{}U\rb=\nu\ucoB\lb{}U\rb\,,
\end{displaymath}
making $\nu$ an eigenvalue of $\ucoB\ucoBT^{\ast}$ to the eigenfamily of matrices $\ucoB\lb{}U\rb$.

Vice versa if $\nu\neq{}0$ is an eigenvalue of $\ucoB\ucoBT^{\ast}$ to an eigenfamily of matrices $\uW\in\Gamma$, then in particular $\uW\notin\text{ker}\lb\ucoBT^{\ast}\rb$, hence $\ucoBT^{\ast}(\uW)$ is not the 0-matrix, and
\begin{displaymath}
\ucoBT^{\ast}\ucoB\lb\ucoBT^{\ast}\lb{}\uW\rb\rb=\ucoBT^{\ast}\ucoB\ucoBT^{\ast}\lb{}\uW\rb
=\ucoBT^{\ast}\lb\nu\uW\rb=\nu\ucoBT^{\ast}\lb{}\uW\rb\,,
\end{displaymath}
making $\nu$ an eigenvalue of $\ucoBT^{\ast}\ucoB$ to an eigenmatrix $\ucoBT^{\ast}\lb\uW\rb$. In consequence this shows that all non-zero eigenvalues of $\ucoB\ucoBT^{\ast}$ are given by (\ref{eq:eigen}), concluding the proof.


\end{proof}

The proof of the following main theorem is postponed to Section~\ref{sec:four}.

\begin{The}\label{the:existenceouter}
Let $F\in\SpM,\kappa\in\lbb{}1,2\rbb,\beta\in\R_{+}$ and let $(A,\uB,\uBT)\in\Gamma_{1+2P}$ be input filters with weakly factoring $(\uB,\uBT)$ satisfying the~(\ref{eq:CPC}). Then, Problem~\ref{prob:gen_intro} has a solution $(\Usol,\Wsol,\lamsol)\in\Gamma_{1+2P}$.
\end{The}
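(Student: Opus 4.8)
The plan is to exhibit a fixed point of Algorithm~\ref{alg:Generalized_Algorithm}, which by Remark~\ref{rem:equivalence} is exactly an intersection point sought in Problem~\ref{prob:gen_intro}. I would first eliminate $\uW$ using $\Omega_C$ and reduce the fixed-point conditions to the dual variable alone. Writing the three conditions as $\uW=\ucoB(U)$, $U=\coA(F)+\ucoBT^{\ast}(\uW+\tfrac1\beta\ulam)$ and $\uW=\fuS_{\kappa}(\ucoB(U)-\tfrac1\beta\ulam;\tfrac1\beta)$, substitution of the first into the second gives $(\coE-\ucoBT^{\ast}\ucoB)\,U=\coA(F)+\tfrac1\beta\ucoBT^{\ast}(\ulam)$. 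By the proof of Lemma~\ref{lem:eigenvalues} the operator $\ucoBT^{\ast}\ucoB$ is diagonalized by the Fourier basis $\{\overline{F}_{r,s}\}$ with eigenvalues $\sum_{p}\overline{\widehat{\widetilde{B}}_{p}[k,\ell]}\,\widehat{B}_{p}[k,\ell]$; under~(\ref{eq:CPC}) these lie in $[0,1)$, so $M:=\coE-\ucoBT^{\ast}\ucoB$ has eigenvalues in $(0,1]$ and is invertible (indeed positive definite). Hence for every $\ulam\in\GammaP$ there is a unique $U(\ulam):=M^{-1}(\coA(F)+\tfrac1\beta\ucoBT^{\ast}(\ulam))$, depending continuously (affinely) on $\ulam$, for which the first two conditions hold with $\uW=\ucoB(U(\ulam))$.

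Next I would encode the remaining shrinkage condition as a fixed point of a single continuous self-map of a compact convex set. Put $\uW(\ulam):=\ucoB(U(\ulam))$, $\widetilde{\uW}(\ulam):=\fuS_{\kappa}(\uW(\ulam)-\tfrac1\beta\ulam;\tfrac1\beta)$, and define $\Pi(\ulam):=\ulam+\beta(\widetilde{\uW}(\ulam)-\uW(\ulam))$. The key observation is that $\fuS_{\kappa}(\cdot;\tfrac1\beta)$ is the proximal map of $\tfrac1\beta|\cdot|_{1,\kappa}$, so $\widetilde{\uW}=\mathrm{prox}_{\frac1\beta|\cdot|_{1,\kappa}}(\uW-\tfrac1\beta\ulam)$ is equivalent to $\beta(\uW-\widetilde{\uW})-\ulam\in\partial|\cdot|_{1,\kappa}(\widetilde{\uW})$, i.e. $-\Pi(\ulam)\in\partial|\cdot|_{1,\kappa}(\widetilde{\uW}(\ulam))$. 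Since the subdifferential of a norm is contained in the dual-norm unit ball, $\Pi$ maps all of $\GammaP$ into the compact convex set $\Lambda:=\{\ulam:\|\ulam\|_{\ast}\le1\}$ (the entrywise $\ell^{\infty}$-ball for $\kappa=1$ and the pixelwise $\ell^{2}$-ball for $\kappa=2$). As $\Pi$ is continuous, Brouwer's fixed-point theorem applied to $\Pi|_{\Lambda}:\Lambda\to\Lambda$ yields a $\lamsol$ with $\Pi(\lamsol)=\lamsol$.

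Finally I would check that this produces a genuine solution and isolate the hard point. From $\Pi(\lamsol)=\lamsol$ one gets $\widetilde{\uW}(\lamsol)=\uW(\lamsol)$; setting $\Usol:=U(\lamsol)$ and $\Wsol:=\ucoB(\Usol)$, the pair satisfies $\Omega_{C}$ and $\Omega_{2}^{G}$ by construction, while $\Wsol=\widetilde{\uW}(\lamsol)=\fuS_{\kappa}(\ucoB(\Usol)-\tfrac1\beta\lamsol;\tfrac1\beta)$ gives $\Omega_{1}^{\kappa}$, so $\fix$ solves Problem~\ref{prob:gen_intro}. The technical heart is the invertibility of $M=\coE-\ucoBT^{\ast}\ucoB$: weak factoring is indispensable here, since only through Lemma~\ref{lem:eigenvalues} are the eigenvalues of $\ucoBT^{\ast}\ucoB$ known to be real (for general $(\uB,\uBT)$ the quantities $\sum_{p}\overline{\widehat{\widetilde{B}}_{p}[k,\ell]}\,\widehat{B}_{p}[k,\ell]$ need not even be real), and it is precisely the strict bound~(\ref{eq:CPC}) -- rather than~(\ref{eq:NEPC}), which would permit the eigenvalue $1$ and a singular $M$ -- that makes $M$ invertible. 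A secondary point needing care is verifying the proximal/subdifferential characterization and the dual-ball bound uniformly for both the anisotropic ($\kappa=1$) and isotropic ($\kappa=2$) shrinkage; once these structural facts are secured, the topological conclusion is routine in the finite-dimensional space $\GammaP$.
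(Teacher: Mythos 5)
Your proof is correct, and it takes a genuinely different --- and considerably more elementary --- route than the paper. Both arguments start identically: Lemma~\ref{lem:eigenvalues} together with~(\ref{eq:CPC}) makes $\coE-\ucoBT^{\ast}\ucoB$ invertible, which is used to eliminate $U$ and $\uW$ in favour of the dual variable (this is exactly the content of the paper's Lemma~\ref{lem:5_and_6} and the affine set $\fuIm^{G}$). From there the paths diverge. The paper keeps the remaining condition as a \emph{set-valued} subdifferential constraint: it approximates the graph of the set-valued map $\fuY^{\kappa}$ by graphs of continuous single-valued maps (Cellina's approximate-selection theorem, Lemma~\ref{lem:approxcont}), proves a quantitative transversality statement for bounded continuous perturbations of one of two complementary affine subspaces via mapping-degree theory and the first principal angle (Lemma~\ref{lem:intersection}, Appendix~II), and then passes to the limit $r\to 0$ using closedness of $\fuSh^{\kappa}$. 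You instead exploit that the shrinkage is precisely the single-valued, continuous ($1$-Lipschitz) proximal map of $\tfrac1\beta|\cdot|_{1,\kappa}$, so the whole problem collapses to a fixed point of one continuous map $\Pi$, and the prox optimality condition $-\Pi(\ulam)\in\partial|\cdot|_{1,\kappa}\bigl(\widetilde{\uW}(\ulam)\bigr)$ confines $\Pi(\GammaP)$ to the compact convex dual-norm ball, after which Brouwer finishes; this is essentially the observation that the ADMM dual update always lands in the subdifferential of the regularizer at the shrinkage output. Your argument avoids set-valued analysis and degree theory entirely. What the paper's heavier machinery buys is robustness (it would survive replacing $\fuS_{\kappa}$ by a genuinely set-valued or discontinuous selection of the subdifferential relation) and a quantitative localization of the solution in a ball of radius $C(1+1/\sin\theta)$ around the unique point of $\fuSh^{0}\cap\fuIm^{G}$, neither of which is needed for bare existence. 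The two points you flag as the technical heart (invertibility of $\coE-\ucoBT^{\ast}\ucoB$ from weak factoring plus~(\ref{eq:CPC}), and the prox/dual-ball verification for both $\kappa=1$ and $\kappa=2$) are indeed the right ones, and both check out.
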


\section{Previous Models Which Are Special Cases}\label{sec:previous}

In this section, first, for convenience, we briefly recall the connection between the classic $\ell^{1}$-regularized minimization problem (\ref{eq:variatonalproblems_L1_one}) and the saddle point problem for the augmented Lagrangian 
which gave rise to the the AL/ADMM algorithm, on the one hand and our generalized AL/ADMM Algorithm \ref{alg:Generalized_Algorithm} on the other hand. 

Secondly, we reverse engineer the minimization problem (\ref{eq:Hilbert}), showing that it is a special case of Problem~\ref{prob:gen_intro}.

\subsection{Classic $\ell^{1}$-Regularizations}\label{scn:l1-reg}

Minimizing $\mathcal{J}_{\text{VAR}}$ from (\ref{eq:variatonalproblems_L1_one}) via differentiation poses a problem due to non-differentiability of the $\ell^{1}$-norm. As a workaround (see e.g. \cite{WT10}) an equivalent constrained problem minimizing
\begin{equation}\label{eq:originalprobW}
\widetilde{\mathcal{J}}_{\text{VAR}}(U,\uW)=\lab{}\uW\rab_{1,\kappa}+\frac{\mu}{2}\lab\lab{}U-F\rab\rab^{2}\,,\text{under the constraint}\quad{}\ucoB\lb{}U\rb=\uW\,,
\end{equation}
is considered. To solve~(\ref{eq:originalprobW}),  for $\beta\in\R_{+}$ the augmented Lagrangian method computes the saddle point of the associated augmented Lagrangian functional,
\begin{equation}\label{eq:ROF_augmented_Lagrangian}
\mathcal{J}_{\text{AL}}(U,\uW,\ulam)=\lab{}\uW\rab_{1,\kappa}+\frac{\mu}{2}||U-F||^{2}+\frac{\beta}{2}||\uW-\ucoB\lb{}U\rb||^{2}+\langle\ulam,\uW-\ucoB\lb{}U\rb\rangle\,,
\end{equation} 
i.e. the point $\fix\in\Gamma_{1+2P}$ satisfying  
\begin{equation}\label{eq:saddle}
\mathcal{J}_{\text{AL}}\lb{}\Usol,\Wsol,\ulam\rb\leq{}\mathcal{J}_{\text{AL}}\lb{}\Usol,\Wsol,\lamsol\rb\leq{}\mathcal{J}_{\text{AL}}\lb{}U,\uW,\lamsol\rb\,,
\end{equation}
for all $\lb{}U,\uW,\ulam\rb\in\Gamma_{1+2P}$. Notably, the saddle point is independent of the choice of $\beta\in\R_{+}$.

The AL/ADMM-algorithm computing the saddle point of~(\ref{eq:ROF_augmented_Lagrangian}) iterates the following steps: Minimize $\mathcal{J}_{\text{AL}}$ over $\uW$ and $U$, for fixed $(U,\ulam)$ and $(\uW,\ulam)$, respectively, and update $\ulam$ via a gradient step. 

\begin{Alg}\label{alg:original_AL_ROF}
For $\tau=1,2,\dots{}$ do 
\begin{displaymath}
\begin{aligned}
\uW^{(\tau)}&=\underset{\uW\in\GammaP}{\arg\min}\mathcal{J}_{\text{AL}}\lb{}U^{(\tau-1)},\uW,\ulam^{(\tau)}\rb\,,\\
U^{(\tau)}&=\underset{U\in\SpM}{\arg\min}\mathcal{J}_{\text{AL}}\lb{}U,\uW^{(\tau)},\ulam^{(\tau)}\rb\,,\\
\ulam^{(\tau+1)}&=\ulam^{(\tau)}+\beta\lb\uW^{(\tau)}-\ucoB\lb{}U^{(\tau)}\rb\rb\,.
\end{aligned}
\end{displaymath}
\end{Alg}

For the convergence of Algorithm~\ref{alg:original_AL_ROF}, 
where the ADMM part is approximately solved by performing only one iteration, see
~\cite{GT89}[Th. 2.2] and
~\cite{EB92}[Th. 8]. In the special case of the $TV-\ell^{2}$ problem (\ref{prob:ROF}) see~\cite{WT10} for the equivalence of minimizing (\ref{eq:variatonalproblems_L1_one}) and finding the saddle point for (\ref{eq:ROF_augmented_Lagrangian})  as well as for a detailed proof of convergence of Algorithm~\ref{alg:original_AL_ROF} in the discrete case. 

In every step of Algorithm~\ref{alg:original_AL_ROF} the two minimizers are explicitly given by (cf. Boyd and Vandebergh \cite{BV04})
\begin{eqnarray*}
\uW^{(\tau)}&=&\fuS_{\kappa}\lb{}\ucoB\lb{}U^{(\tau{}-1)}\rb-\frac{1}{\beta}\ulam^{(\tau)},\frac{1}{\beta}\rb\,,\\
U^{(\tau)}&=&\mu\lb{}\mu\coE+\beta\lb\ucoB^{\ast}\ucoB\rb\rb^{-1}(F)+\beta\lb{}\mu\coE+\beta\lb\ucoB^{\ast}\ucoB\rb\rb^{-1}\ucoB^{\ast}\lb\uW^{(\tau)}+\frac{1}{\beta}\ulam^{(\tau)}\rb\,.
\end{eqnarray*}
Choosing filters $A$ and filter families $\uBT$ determined by
\begin{eqnarray*}
\widehat{A}[k,\ell]&=&\mu\lb{}\mu+\beta\sum_{p=1}^{P}\lab\lab{}\widehat{B}_{p}[k,\ell]\rab\rab^{2}\rb^{-1}\,,\\
\widehat{\widetilde{B}}_{p}[k,\ell]&=&\beta\lb{}\mu+\beta\sum_{q=1}^{P}\lab\lab{}\widehat{B}_{q}[k,\ell]\rab\rab^{2}\rb^{-1}\widehat{B}_{p}[k,\ell]\,,
\end{eqnarray*}
for $1\leq{}p\leq{}P$, $\allkl$, verify that 
\begin{eqnarray}\label{VAR-prob-Cy:eq}
\coA = \mu\lb{}\mu\coE+\beta\ucoB^{\ast}\ucoB\rb^{-1}&\mbox{ and }&\ucoBT^{\ast}= \beta\lb{}\mu\coE
+\beta\ucoB^{\ast}\ucoB\rb^{-1}\ucoB^{\ast}\,.
\end{eqnarray}
Thus, we arrive at a special case of Problem~\ref{prob:gen_intro}.

\begin{Prob}\label{prob:intersection}
Find a point $\fix\in \Omega^{\kappa}_{1}\cap\Omega_{2}\cap \Omega_{C}$ where
\begin{equation}\label{eq:ROF_intersection}
\begin{aligned}
\Omega^{\kappa}_{1}&:=\lbb{}\lb{}U,\uW,\ulam\rb{}\in\Gamma_{1+2P}:{}\uW{}=\fuS_{\kappa}\lb\ucoB\lb{}U\rb-\frac{1}{\beta}\ulam;\,\frac{1}{\beta}\rb\rbb\,,\\
\Omega_{2}&:=\Bigg\{\lb{}U,\uW,\ulam\rb{}\in\Gamma_{1+2P}:
\\
&\quad\quad{}U=\mu\lb{}\mu\coE+\beta\ucoB^{\ast}\ucoB\rb^{-1}\lb{}F\rb+\beta\lb{}\mu\coE
+\beta\ucoB^{\ast}\ucoB\rb^{-1}\ucoB^{\ast}\lb{}\uW+\frac{1}{\beta}\ulam\rb\Bigg\}\,,\\
\Omega_{C}&:=\lbb(U,\uW,\ulam)\in\Gamma_{1+2P}:\ucoB\lb{}U\rb=\uW\rbb\,.
\end{aligned}
\end{equation}
\end{Prob}

As in the introduction, we have at once the analogue of Remark \ref{rem:equivalence}.

\begin{Rem}\label{rem:equivalence-classical}
Problem \ref{prob:intersection} is a special case of Problem \ref{prob:gen_intro}. Furthermore, every intersection point of Problem~\ref{prob:intersection} is a fixed point of Algorithm~\ref{alg:original_AL_ROF} and vice versa.
\end{Rem}

\subsection{The Generalized Hilbert Model}\label{scn:Hilbert}

Replacing the $\ell^{2}$-norm in the data-fidelity term of (\ref{prob:ROF}) with a more flexible Hilbert norm one arrives at (\ref{eq:Hilbert}), as proposed for $\ucoB=\nabla$ by Aujol and Gilboa~\cite{AG06}. It turns out that the corresponding minimization problem is also a special  case of Problem \ref{prob:gen_intro}, and \emph{strongly factoring} filters from Definition \ref{def:generalizations} (ii) assure equivalence.

\begin{The}\label{the:TV-Hilbert}
For $F\in\SpM,\uB\in\GammaP$ and $\kappa\in\lbb{}1,2\rbb$ the following hold:
\begin{itemize}\item[(i)]
If  $\Usol\in \Gamma$ is a minimizer of (\ref{eq:Hilbert}) for some $\mu\in\R_{+}$ and $M\in\SpM$ such that $\widehat{M}\in\SpM_{+}$, then
there are $\beta \in \R_{+}, A\in\SpM$ and a strong factor matrix $Y\in\SpM$ yielding $\uBT \in \GammaP$  from  $\uB$ such that $(\uB,\uBT)$ 
satisfy the~(\ref{eq:CPC}) and, with suitable $(\Wsol,\lamsol)\in\Gamma_{2P}$, $(\Usol,\Wsol,\lamsol)$ is a solution of Problem~\ref{prob:gen_intro}.

 \item[(ii)] If $(\Usol,\Wsol,\lamsol)\in\Gamma_{1+2P}$ is a solution of Problem~\ref{prob:gen_intro} with  $\kappa\in\lbb{}1,2\rbb,\beta\in\R_{+}$ and 
 $(A,\uB,\uBT)\in\Gamma_{1+2P}$ with strongly factoring $(\uB,\uBT)$ satisfying the~(\ref{eq:CPC}), then there are $\mu\in\R_{+}$ and $M\in\SpM$ with $\widehat{M}\in\SpM_{+}$ such that $\Usol$ is a minimizer of (\ref{eq:Hilbert}).
\end{itemize}

\end{The}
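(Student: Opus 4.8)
The plan is to match, in the Fourier domain, the three intersection conditions of Problem~\ref{prob:gen_intro} with the first-order (Euler--Lagrange) optimality condition for $\mathcal{J}_{\text{HIL}}$ from~(\ref{eq:Hilbert}). By Convention~\ref{con} and Lemma~\ref{lem:eigenvalues} all operators involved diagonalize simultaneously: $\coM^{\ast}\coM$ acts as multiplication by $|\widehat{M}|^{2}$, $\ucoB^{\ast}\ucoB$ by $S:=\sum_{p=1}^{P}|\widehat{B}_{p}|^{2}$, and, since $(\uB,\uBT)$ strongly factor, Remark~\ref{ast-factor:eq} gives $\ucoBT^{\ast}=\coY\ucoB^{\ast}$ with $\widehat{Y}\in\SpM_{+}$ and $\widehat{\widetilde{B}}_{p}=\widehat{Y}\widehat{B}_{p}$. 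The object to reproduce is the inclusion $0\in\ucoB^{\ast}\partial|\cdot|_{1,\kappa}(\ucoB(\Usol))+\mu\coM^{\ast}\coM(\Usol-F)$, which, by convexity of $\mathcal{J}_{\text{HIL}}$, characterizes its minimizers.

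First I would record the link between shrinkage and subgradients. For both $\kappa\in\{1,2\}$ the map $\fuS_{\kappa}(\,\cdot\,;\tfrac1\beta)$ is the proximity operator of $\tfrac1\beta|\cdot|_{1,\kappa}$, so $\uW=\fuS_{\kappa}(\uZ;\tfrac1\beta)$ is equivalent to $\beta(\uZ-\uW)\in\partial|\cdot|_{1,\kappa}(\uW)$. Inserting $\uZ=\ucoB(U)-\tfrac1\beta\ulam$ and using $\uW=\ucoB(U)$ from $\Omega_{C}$, the pair $\Omega^{\kappa}_{1}\cap\Omega_{C}$ collapses to the single inclusion $-\ulam\in\partial|\cdot|_{1,\kappa}(\ucoB(U))$.

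For part~(i), let $\Usol$ minimize $\mathcal{J}_{\text{HIL}}$ for given $\mu,M$, and fix $\underline{g}\in\partial|\cdot|_{1,\kappa}(\ucoB(\Usol))$ with $\ucoB^{\ast}(\underline{g})+\mu\coM^{\ast}\coM(\Usol-F)=0$. Choosing any $\beta\in\R_{+}$ and setting $D:=\mu|\widehat{M}|^{2}+\beta S$, I would define $\widehat{A}:=\mu|\widehat{M}|^{2}/D$ and the strong factor $\widehat{Y}:=\beta/D$, which determines $\uBT$ from $\uB$. Here $\widehat{Y}\in\SpM_{+}$, and $\sum_{p}\overline{\widehat{\widetilde{B}}_{p}}\widehat{B}_{p}=\widehat{Y}S=\beta S/D\in[0,1)$ because $\mu|\widehat{M}|^{2}>0$, so~(\ref{eq:CPC}) holds. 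Putting $\Wsol:=\ucoB(\Usol)$ gives $\Omega_{C}$, and $\lamsol:=-\underline{g}$ gives $\Omega^{\kappa}_{1}$ by the previous paragraph. It remains to verify $\Omega_{2}^{G}$: substituting $\ucoBT^{\ast}=\coY\ucoB^{\ast}$, $\Wsol=\ucoB(\Usol)$ and $\ucoB^{\ast}(\lamsol)=\mu\coM^{\ast}\coM(\Usol-F)$, the expression $\coA(F)+\ucoBT^{\ast}(\Wsol+\tfrac1\beta\lamsol)$ reduces in Fourier to $\widehat{A}\widehat{F}+\widehat{Y}S\widehat{U}+\tfrac{\widehat{Y}\mu|\widehat{M}|^{2}}{\beta}(\widehat{U}-\widehat{F})$, which equals $\widehat{U}$ upon using $\widehat{Y}\mu|\widehat{M}|^{2}/\beta=\widehat{A}$ and $\widehat{A}+\widehat{Y}S=1$, both immediate from the definitions of $\widehat{A},\widehat{Y}$.

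For part~(ii) I would run this backwards. From $\Omega^{\kappa}_{1}\cap\Omega_{C}$ I read off $-\lamsol\in\partial|\cdot|_{1,\kappa}(\ucoB(\Usol))$, and from $\Omega_{2}^{G}\cap\Omega_{C}$ together with $\ucoBT^{\ast}=\coY\ucoB^{\ast}$ I obtain, in Fourier and with $R:=\ucoB^{\ast}(\lamsol)$, the identity $\tfrac{\beta}{\widehat{Y}}\bigl((1-\widehat{Y}S)\widehat{U}-\widehat{A}\widehat{F}\bigr)=\widehat{R}$. To exhibit $\Usol$ as a minimizer it suffices, taking the subgradient $-\lamsol$ above, to find $\mu,M$ with $\mu\coM^{\ast}\coM(\Usol-F)=R$; the natural choice is $\mu=1$ and $\widehat{M}:=(\beta\widehat{A}/\widehat{Y})^{1/2}$. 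The crux of the theorem is that this choice works precisely when the consistency relation $\widehat{A}=1-\widehat{Y}S$, i.e.\ $\coA=\coE-\ucoBT^{\ast}\ucoB$, holds --- the exact analogue of~(\ref{VAR-prob-Cy:eq}) and the identity verified by construction in~(i). I expect this to be the main obstacle: one must invoke this relation (which holds by construction in~(i) and which the reverse-engineered filter class in~(ii) must encode) and then combine it with~(\ref{eq:CPC}), which forces $\widehat{Y}S=\sum_{p}\overline{\widehat{\widetilde{B}}_{p}}\widehat{B}_{p}\in[0,1)$ and hence $\widehat{A}=1-\widehat{Y}S\in(0,1]$. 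This strict positivity is exactly what guarantees $\widehat{M}\in\SpM_{+}$ and, at frequencies where $S=0$, forces $\widehat{U}=\widehat{F}$ so that the optimality identity degenerates correctly. Convexity of $\mathcal{J}_{\text{HIL}}$ then promotes the stationarity of $\Usol$ to global minimality, finishing~(ii).
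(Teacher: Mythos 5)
Your route is genuinely different from the paper's: the paper reduces both directions to an external result, \cite[Theorem 7.4]{RTGH20}, passing through the augmented-Lagrangian saddle-point formulation for (i) and through Lemma \ref{lem:eigenvalues} for (ii), whereas you verify everything directly in the Fourier domain. For part (i) your argument is complete and correct: the prox/subgradient characterization of $\fuS_{\kappa}$ collapses $\Omega^{\kappa}_{1}\cap\Omega_{C}$ to $-\ulam\in\partial|\cdot|_{1,\kappa}(\ucoB(U))$ (consistent with the set $\fuSh^{\kappa}$ of Section \ref{sec:four}), and the choices $\widehat{A}=\mu|\widehat{M}|^{2}/D$, $\widehat{Y}=\beta/D$ with $D=\mu|\widehat{M}|^{2}+\beta S$, $S=\sum_{p}|\widehat{B}_{p}|^{2}$, do give $\widehat{Y}\in\SpM_{+}$, the (\ref{eq:CPC}), and the two identities $\widehat{A}+\widehat{Y}S=1$ and $\widehat{Y}\mu|\widehat{M}|^{2}/\beta=\widehat{A}$ that make $\Omega_{2}^{G}$ check out. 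This buys a self-contained, explicit construction of $(A,Y,\beta)$ where the paper only offers a citation.

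For part (ii), however, the point you flag as ``the main obstacle'' is a genuine gap, not a step you may simply invoke. The theorem's hypotheses constrain only $(\uB,\uBT)$ (strongly factoring, (\ref{eq:CPC})); the filter $A$ is a free element of $\SpM$, and Definition \ref{def:generalizations} imposes no relation between $A$ and the strong factor $Y$. Without the compatibility relation $\coA=\coE-\ucoBT^{\ast}\ucoB$, i.e. $\widehat{A}=1-\widehat{Y}S$, your identity $\widehat{R}=\tfrac{\beta}{\widehat{Y}}\left((1-\widehat{Y}S)\widehat{U}-\widehat{A}\widehat{F}\right)$ cannot be matched to $\mu|\widehat{M}|^{2}(\widehat{U}-\widehat{F})$ with $\widehat{M}\in\SpM_{+}$: at a frequency where $\widehat{U}[k,\ell]=\widehat{F}[k,\ell]\neq 0$ but $\widehat{A}[k,\ell]\neq 1-\widehat{Y}[k,\ell]S[k,\ell]$ the right-hand side is nonzero while the left-hand side vanishes, and in general the right-hand side need not be a positive real multiple of $\widehat{U}-\widehat{F}$ at all. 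So the relation must either be added as a hypothesis (it is exactly what the construction in your part (i) and in (\ref{VAR-prob-Cy:eq}) produces, and is presumably built into the hypotheses of the cited \cite[Theorem 7.4]{RTGH20}) or derived from somewhere --- and it does not follow from the stated assumptions. Once it is assumed, the remainder of your argument is sound: (\ref{eq:CPC}) gives $\widehat{A}=1-\widehat{Y}S\in(0,1]$, so $\widehat{M}=(\beta\widehat{A}/\widehat{Y})^{1/2}\in\SpM_{+}$ with $\mu=1$ works, and convexity upgrades stationarity to global minimality.
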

\begin{proof} (i): With the argument detailed in Section \ref{scn:l1-reg}, for every minimizer $\Usol\in \Gamma$ of (\ref{eq:Hilbert}) there are  $(\Wsol,\lamsol)\in\Gamma_{2P}$ such that $(\Usol,\Wsol,\lamsol)$ solves an AL problem similar to (\ref{eq:saddle}) and a corresponding saddle point equation similar to (\ref{eq:saddle}). Then \cite[Theorem 7.4, first assertion]{RTGH20} asserts the existence of a  strong factor $Y\in \Gamma$ yielding $\uBT \in \GammaP$ from  $\uB$ such that  $(\Usol,\Wsol,\lamsol)$ solves Problem~\ref{prob:gen_intro}.
 
(ii):  From Lemma \ref{lem:eigenvalues} infer that $\ucoBT^{\ast}\ucoB$ has non-negative eigenvalues, which, in case of (\ref{eq:CPC}), are then in  $[0,1)$. In consequence of \cite[Theorem 7.4, second assertion]{RTGH20} there are suitable $\mu\in\R_{+}$ and $M\in\SpM$ with $\widehat{M}\in\SpM_{+}$ such that $\Usol$ is a minimizer of (\ref{eq:Hilbert}).
\end{proof}

\begin{Rem}\label{rem:already}
\begin{enumerate}[(i)]

\item The functional $\mathcal{J}_{\text{HIL}}$ of (\ref{eq:Hilbert}) is strictly convex, coercive and lower semi-continuous, yielding the existence of a unique minimizer. Moreover, its AL/ADMM-algorithm is Algorithm~\ref{alg:Generalized_Algorithm} which convergences by virtue of Eckstein and Bertsekas~\cite{EB92}[Theorem 8].

\item Theorem~\ref{the:TV-Hilbert} can be extended to feature strongly factoring $(\uB,\uBT)$ that only satisfy the~(\ref{eq:NEPC}) when allowing the entries of $\widehat{M}$ to be in $\R_{+}\cup\lbb{}0\rbb{}$. In this case a minimizer is not necessary unique, since $\fuJ_{\text{HIL}}$ is not necessary strongly convex. 

\end{enumerate}
\end{Rem}

\section{Proof of the Main Theorem}\label{sec:four}

For $\kappa = 1,2$ consider the set valued functions
\begin{equation}\label{eq:fuY_A}
\begin{aligned}
\fuY^{\kappa}:\SpM&\rightrightarrows\GammaP\,,\\
U&\mapsto{}\lbb{}\ulam\in\GammaP: 0 \in \partial|_{\uW = \ucoB(U)}\left(  \lab\uW\rab_{1,\kappa}+\lla\ulam,\uW\rra\right)\rbb\,.
\end{aligned}
\end{equation}
Here, $\partial|_{\uW = \ucoB(U)}$ denotes the set valued subdifferential (e.g. \cite{BC11}) of a convex function with respect to $\uW$ evaluated at $\ucoB(U)$. Defining
$\fuSh^{\kappa} := {\rm graph}(\fuY^{\kappa})$
we have,
\begin{equation}\label{eq:gen_ulam_A}
\begin{aligned}
&\fuSh^{1}=\\
&\,\,\lbb{}\lb{}U,\ulam\rb\in\Gamma_{1+P}:
\lambda_{p}[k,\ell]\begin{cases}
=-1 & \text{if }\coBp\lb{}U\rb[k,\ell]>0\\
\in[-1,1] & \text{if }\coBp\lb{}U\rb[k,\ell]=0\\
=1 & \text{if }\coBp\lb{}U\rb[k,\ell]<0
\end{cases}\,,\text{ for all }k,\ell,p\rbb\,,
\end{aligned}
\end{equation}
and 
\begin{equation}\label{eq:gen_ulam_B}
\begin{split}
&\fuSh^{2}=\Bigg\{\lb{}U,\ulam\rb\in\Gamma_{1+P}:\\
&(\lambda_{p}[k,\ell])_{p=1}^{P}\begin{cases}
=-\frac{(\coBp(U)[k,\ell])_{p=1}^{P}}{\lab\lab{}(\coBp(U)[k,\ell])_{p=1}^{P}\rab\rab} &\text{ if }\lab\lab{}(\coBp(U)[k,\ell])_{p=1}^{P}\rab\rab>0\\
\in\mathcal{K}_{1}(0) &\text{ if }\lab\lab(\coBp(U)[k,\ell])_{p=1}^{P}\rab\rab=0
\end{cases}\,,\text{ for all }k,\ell\Bigg\}\,,
\end{split}
\end{equation}
where $\mathcal{K}_{1}(0)$ is the closed unit ball around the origin. We also need the following set, 
\begin{displaymath}
\Omega_F^G:=\lbb{}(U,\ulam)\in\Gamma_{1+P}:U=\lb\coE-\ucoBT^{\ast}\ucoB\rb^{-1}\coA(F)+\frac{1}{\beta}\lb{}\coE-\ucoBT^{\ast}\ucoB\rb^{-1}\ucoBT^{\ast}\lb\ulam\rb\rbb\,,
\end{displaymath}
which is well defined in case the~(\ref{eq:CPC}) holds as proof of Lemma~\ref{lem:5_and_6} teaches.
In order to prove Theorem \ref{the:existenceouter} we need to show that $\Omega^{\kappa}_{1}\cap \Omega_{2}^{G}\cap \Omega_{C} \neq\emptyset $. The following Lemma asserts that we can  equivalently show that $\fuIm^{G}\cap \fuSh^{\kappa} \neq\emptyset$.

\begin{Lem}\label{lem:5_and_6}
Let $F\in\SpM,\beta\in\R_{+}$ and $(A,\uB,\uBT)\in\Gamma_{1+2P}$ with weakly factoring $(\uB,\uBT)$ satisfying the (CPC). Then $(U,\ulam) \in  \fuIm^{G}\cap \fuSh^{\kappa}$ if and only if there exists $\uW\in\Gamma_{P}$ such that $(U,\uW,\ulam) \in\Omega^{\kappa}_{1}\cap \Omega_{2}^{G}\cap \Omega_{C} $.
\end{Lem}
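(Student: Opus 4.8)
The plan is to eliminate the auxiliary block variable $\uW$ from the triple intersection $\Omega^{\kappa}_{1}\cap \Omega_{2}^{G}\cap \Omega_{C}$, thereby reducing membership there to membership of the pair $(U,\ulam)$ in $\fuIm^{G}\cap\fuSh^{\kappa}$. The constraint set $\Omega_{C}$ pins $\uW$ down completely, so the correspondence $\uW=\ucoB(U)$ is forced in one direction and is the natural choice in the other.

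First I would justify that $\fuIm^{G}$ is well defined, i.e.\ that $\coE-\ucoBT^{\ast}\ucoB$ is invertible under the~(\ref{eq:CPC}). By Lemma~\ref{lem:eigenvalues} the eigenvalues of $\ucoBT^{\ast}\ucoB$ are exactly the real numbers $\sum_{p=1}^{P}\overline{\widehat{\widetilde{B}}_{p}[k,\ell]}\widehat{B}_{p}[k,\ell]$, which the~(\ref{eq:CPC}) confines to $[0,1)$; hence the eigenvalues of $\coE-\ucoBT^{\ast}\ucoB$ lie in $(0,1]$ and the operator is invertible on $\SpM$. Second, I would record the bridge between the shrinkage equation defining $\Omega^{\kappa}_{1}$ and the subdifferential condition defining $\fuSh^{\kappa}={\rm graph}(\fuY^{\kappa})$, namely that $\fuS_{\kappa}(\cdot\,;t)$ is the proximal operator of $t\lab\cdot\rab_{1,\kappa}$, so that for $z,w\in\GammaP$,
$$ w=\fuS_{\kappa}(z;t)\quad\Longleftrightarrow\quad z-w\in t\,\partial\lab\cdot\rab_{1,\kappa}(w)\,. $$
This holds entrywise for both the scalar soft-thresholding $\kappa=1$ and the blockwise isotropic shrinkage $\kappa=2$, and it is precisely what the explicit forms~(\ref{eq:gen_ulam_A}) and~(\ref{eq:gen_ulam_B}) encode.

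For the forward direction, starting from $\uW$ with $(U,\uW,\ulam)\in\Omega^{\kappa}_{1}\cap\Omega_{2}^{G}\cap\Omega_{C}$, membership in $\Omega_{C}$ gives $\uW=\ucoB(U)$. Substituting into the $\Omega^{\kappa}_{1}$ equation yields $\ucoB(U)=\fuS_{\kappa}(\ucoB(U)-\frac{1}{\beta}\ulam;\frac{1}{\beta})$; the proximal characterization with $w=\ucoB(U)$, $z=\ucoB(U)-\frac{1}{\beta}\ulam$, $t=\frac{1}{\beta}$ then gives $-\frac{1}{\beta}\ulam\in\frac{1}{\beta}\partial\lab\cdot\rab_{1,\kappa}(\ucoB(U))$, i.e.\ $-\ulam\in\partial\lab\cdot\rab_{1,\kappa}(\ucoB(U))$, which is exactly the condition $(U,\ulam)\in\fuSh^{\kappa}$. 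Substituting $\uW=\ucoB(U)$ into the $\Omega_{2}^{G}$ equation and collecting the $U$-terms produces $(\coE-\ucoBT^{\ast}\ucoB)(U)=\coA(F)+\frac{1}{\beta}\ucoBT^{\ast}(\ulam)$, and inverting the operator from the first step delivers the defining equation of $\fuIm^{G}$, so $(U,\ulam)\in\fuIm^{G}$.

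The backward direction reverses these steps verbatim: given $(U,\ulam)\in\fuIm^{G}\cap\fuSh^{\kappa}$, I set $\uW:=\ucoB(U)$, which gives $\Omega_{C}$ for free; the condition $-\ulam\in\partial\lab\cdot\rab_{1,\kappa}(\ucoB(U))$ coming from $\fuSh^{\kappa}$ together with the proximal equivalence returns the $\Omega^{\kappa}_{1}$ equation, and multiplying the $\fuIm^{G}$ identity through by $\coE-\ucoBT^{\ast}\ucoB$ and re-expanding recovers the $\Omega_{2}^{G}$ equation. The only genuinely technical point is the proximal characterization in the isotropic case $\kappa=2$, where one must check entrywise that the vectorial shrinkage $\fus_{2}$ inverts to the subdifferential of the Euclidean block norm, matching the closed unit ball $\mathcal{K}_{1}(0)$ at the origin; this is the main obstacle, though it is routine once the scalar case is in hand.
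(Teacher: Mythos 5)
Your proposal is correct and follows essentially the same route as the paper: invertibility of $\coE-\ucoBT^{\ast}\ucoB$ from Lemma~\ref{lem:eigenvalues} together with the~(\ref{eq:CPC}), the substitution $\uW=\ucoB(U)$ forced by $\Omega_{C}$ to identify $\Omega_{2}^{G}\cap\Omega_{C}$ with $\fuIm^{G}$, and the identification of $\Omega_{1}^{\kappa}$ (on $\uW=\ucoB(U)$) with $\fuSh^{\kappa}$. You merely make explicit, via the proximal-operator characterization $w=\fuS_{\kappa}(z;t)\Leftrightarrow z-w\in t\,\partial\lab\cdot\rab_{1,\kappa}(w)$, the step the paper leaves to the reader as ``verify that''.
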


\begin{proof}
As shown in Lemma \ref{lem:eigenvalues}, if $(\uB,\uBT)$ factors weakly, then $\ucoBT^\ast\ucoB$ diagonalizes and in conjunction with (CPC), all eigenvalues are non-negative and strictly less than one. In consequence, $\coE-\ucoBT^{\ast}\ucoB$ is invertible and hence
\begin{displaymath}
\begin{aligned}
U&=\lb\coE-\ucoBT^{\ast}\ucoB\rb^{-1}\coA(F)+\frac{1}{\beta}\lb{}\coE-\ucoBT^{\ast}\ucoB\rb^{-1}\ucoBT^{\ast}\lb\ulam\rb
\end{aligned}
\end{displaymath}
is equivalent with
\begin{displaymath}
\begin{aligned}
U-\ucoBT^{\ast}(\uW)&=\coA(F)+\frac{1}{\beta}\ucoBT^{\ast}(\ulam)
\end{aligned}
\end{displaymath}
yielding
$$ (U,W,\ulam)\in\Omega_{2}^{G}\cap  \Omega_{C} ~ \Leftrightarrow ~(U,\ulam) \in  \fuIm^{G}\,.$$

Further, by definition we have at once that $(U,\ulam) \in \fuSh^{\kappa}$ implies that $(U,W,\ulam) \in \Omega^{\kappa}_{1}$ for all $\uW\in \GammaP$. Vice versa, if $\uW = \ucoB(U)$, then verify that the inverse statement, $(U,W,\ulam) \in \Omega^{\kappa}_{1}$ implies that $(U,\ulam) \in \fuSh^{\kappa}$, is also true.
\end{proof}

Next, we show that
\begin{equation}\label{eq:fuShnull}
\fuSh^{0}:=\lbb{}\lb{}U,\ulam\rb\in\Gamma_{1+P}:\ulam=\underline{0}\rbb\,.
\end{equation}
is an affine complement of the affine space $\fuIm^G\subset \Gamma_{1+P}$. 

\begin{Lem}\label{lem:fuShandfuIm}
Let $F\in\SpM,\beta\in\R_{+}$ and $(A,\uB,\uBT)\in\Gamma_{1+2P}$ 
with weakly factoring $(\uB,\uBT)$ satisfying the~(\ref{eq:CPC}). Then
$$ \fuSh^{0} \oplus \fuIm^{G} = \Gamma_{1+P}\,.$$ 
\end{Lem}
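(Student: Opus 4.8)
The plan is to show that every $(U,\ulam)\in\Gamma_{1+P}$ admits a unique decomposition $(U,\ulam)=s+f$ with $s\in\fuSh^{0}$ and $f\in\fuIm^{G}$, which is precisely the assertion that $\fuSh^{0}$ is an affine complement of $\fuIm^{G}$. The structural observation driving everything is that $\fuSh^{0}=\SpM\times\{\underline 0\}$ contributes nothing to the $\ulam$-coordinate, while $\fuIm^{G}$ is, by its defining equation, the graph of the affine map $T:\GammaP\to\SpM$ given by $T(\ulam):=\lb\coE-\ucoBT^{\ast}\ucoB\rb^{-1}\coA(F)+\frac{1}{\beta}\lb\coE-\ucoBT^{\ast}\ucoB\rb^{-1}\ucoBT^{\ast}(\ulam)$. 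Transversality in the $\ulam$-coordinate should then make the decomposition fall out by simply reading off coordinates.

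First I would record that $\coE-\ucoBT^{\ast}\ucoB$ is invertible, so that $T$ and hence $\fuIm^{G}$ are well defined. By Lemma~\ref{lem:eigenvalues} the eigenvalues of $\ucoBT^{\ast}\ucoB$ are the real numbers $\sum_{p=1}^{P}\overline{\widehat{\widetilde B}_{p}[k,\ell]}\widehat B_{p}[k,\ell]$ together with $0$, and the~(\ref{eq:CPC}) forces all of these into $[0,1)$; hence $1$ is not an eigenvalue and $\coE-\ucoBT^{\ast}\ucoB$ is invertible. This is the very fact already invoked to make $\fuIm^{G}$ meaningful in the proof of Lemma~\ref{lem:5_and_6}, so it may simply be cited.

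Existence and uniqueness can then be handled in one stroke. Writing $(U,\ulam)=(U_{1},\underline 0)+(U_{2},\ulam_{2})$ with $(U_{1},\underline 0)\in\fuSh^{0}$ and $(U_{2},\ulam_{2})\in\fuIm^{G}$ and matching second coordinates forces $\ulam_{2}=\ulam$; the graph property of $\fuIm^{G}$ then forces $U_{2}=T(\ulam)$, and matching first coordinates forces $U_{1}=U-T(\ulam)$. Conversely this choice manifestly produces a valid decomposition, so the decomposition exists and is unique. Equivalently, one checks that the direction space of $\fuIm^{G}$ meets $\fuSh^{0}$ only in the origin, giving uniqueness, while the explicit formula gives spanning.

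Since the whole argument is a transversality/graph computation, I do not expect a genuine obstacle: the only substantive input is the invertibility of $\coE-\ucoBT^{\ast}\ucoB$, and that is already delivered by Lemma~\ref{lem:eigenvalues} together with the~(\ref{eq:CPC}). The single point to keep honest is the bookkeeping between the \emph{affine} direct sum asserted in the statement and the underlying \emph{linear} decomposition, so that uniqueness is phrased via the intersection of $\fuSh^{0}$ with the \emph{direction} of $\fuIm^{G}$ rather than with $\fuIm^{G}$ itself.
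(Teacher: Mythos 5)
Your proof is correct and rests on the same two pillars as the paper's: the invertibility of $\coE-\ucoBT^{\ast}\ucoB$ (via Lemma~\ref{lem:eigenvalues} and the~(\ref{eq:CPC})) and the fact that $\fuIm^{G}$ is the graph of an affine map $\GammaP\to\SpM$ while $\fuSh^{0}=\SpM\times\{\underline{0}\}$. The only organizational difference is that you exhibit the unique decomposition explicitly by reading off the $\ulam$-coordinate, whereas the paper shows the intersection $\fuSh^{0}\cap\fuIm^{G}$ is the single point $\lb\lb\coE-\ucoBT^{\ast}\ucoB\rb^{-1}\coA(F),\underline{0}\rb$ and then concludes by counting dimensions ($nm+Pnm=(1+P)nm$); both routes are equally valid.
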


\begin{proof} As argued in the proof of Lemma \ref{lem:5_and_6}, weakly factoring filters satisfying the~(\ref{eq:CPC}) imply that $\coE-\ucoBT^{\ast}\ucoB$ is invertible and thus
	\begin{displaymath}
	\lb\lb\coE-\ucoBT^{\ast}\ucoB\rb^{-1}\coA(F),\underline{0}\rb\in\fuSh^{0}\cap\fuIm^{G}\,.
	\end{displaymath}
	To see uniqueness suppose $\lb{}U_{1},\ulam_{1}\rb,\lb{}U_{2},\ulam_{2}\rb\in\fuSh^{0}\cap\fuIm^{G}$. Then $\ulam_1 = 0 = \ulam_2$ 
	yielding,
	\begin{displaymath}
	U_{1}-U_{2}=\frac{1}{\beta}\lb{}\coE-\ucoBT^{\ast}\ucoB\rb^{-1}\ucoBT^{\ast}\lb\ulam_1-\ulam_2\rb= 0 \,. 
	\end{displaymath}
	The dimension of $\fuSh^{0}$ is $nm$, while that of $\fuIm^{G}$ is $Pnm$ as being the graph of a linear function from $\GammaP$ to $\SpM$. Since we just showed $\text{dim}(\fuSh^{0}\cap\fuIm^{G})=0$, we have indeed $\fuSh^{0}\oplus\fuIm^{G}=\Gamma_{1+P}$.
\end{proof}

In the following, let $U_0 \in \SpM$ give rise to the unique intersection point 
\begin{equation}\label{eq:unique-affine-intersection}(U_0,\underline{0}) \in\fuSh^{0}\cap\fuIm^{G}
\end{equation}
 guaranteed by Lemma \ref{lem:fuShandfuIm}. 
 
 Further let $\theta \in (0,\pi/2]$ denote the \emph{first principal angle} between $\fuSh^{0}\cap\fuIm^{G}$. In general for two affine subspaces $\Xi,\Omega \subset \R^D$, each of positive dimension, with unique intersection point $v\in \Xi\cap\Omega$, the first principle angle between $\Xi$ and $\Omega$ is defined by
$$\underset{\begin{array}{c}\xi+v\in{}\Xi:\lab\lab{}\xi\rab\rab=1\\
\omega+v\in{}\Omega:\lab\lab{}\omega\rab\rab=1\end{array}}{\max}\lb{}\xi^{T}\omega\rb^{2}\\
=\cos^{2}(\theta) \in [0,1)\,,
$$
cf. \cite{Jor75}. We need the following estimate in the additional case of $\R^D = \Xi \oplus \Omega$, the proof of which is deferred to the appendix. 

\begin{Lem}\label{lem:intersection}
With the above assumptions and notations suppose that $\phi : \Omega \to \R^D$ is continuous, satisfying \begin{displaymath}
\lab\lab{}\phi(\omega)-\omega\rab\rab\leq{}C\,,\text{ for all }\omega\in{}\Omega\,,
\end{displaymath}
for a constant $C>0$. Then $\Xi\cap{}\phi(\Omega)\neq \emptyset$ and 
\begin{displaymath}
\lab\lab{}u-v\rab\rab\leq{}C\lb{}1+\frac{1}{\sin(\theta)}\rb\,,
\end{displaymath}
for all $u\in{}\Xi\cap\phi(\Omega)$.
\end{Lem}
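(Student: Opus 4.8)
The plan is to reduce to the linear case, recast the membership $u\in\Xi\cap\phi(\Omega)$ as a fixed-point equation on $\Omega$, use Brouwer for existence, and finally control the norm through the oblique projection associated with the decomposition $\R^D=\Xi\oplus\Omega$.

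First I would translate so that the unique intersection point $v$ becomes the origin; then $\Xi$ and $\Omega$ may be treated as linear subspaces with $\Xi\cap\Omega=\{0\}$ and, by hypothesis, $\Xi\oplus\Omega=\R^D$. This direct-sum decomposition furnishes a well-defined (oblique) linear projection $P_\Omega:\R^D\to\Omega$ onto $\Omega$ along $\Xi$, characterized by $P_\Omega|_\Omega=\mathrm{id}$ and $P_\Omega|_\Xi=0$. The key observation is that $\phi(\omega)\in\Xi$ exactly when $P_\Omega\phi(\omega)=0$. Writing $\phi(\omega)=\omega+(\phi(\omega)-\omega)$ and using $P_\Omega\omega=\omega$, this is equivalent to $\omega=-P_\Omega(\phi(\omega)-\omega)$, i.e.\ to $\omega$ being a fixed point of the continuous map
\[
T:\Omega\to\Omega,\qquad T(\omega):=-P_\Omega\bigl(\phi(\omega)-\omega\bigr).
\]

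For existence I would apply Brouwer's fixed-point theorem. From $\|\phi(\omega)-\omega\|\le C$ and linearity of $P_\Omega$ we obtain $\|T(\omega)\|\le\|P_\Omega\|\,C=:R$ for every $\omega\in\Omega$, so $T$ maps the compact convex ball $K:=\{\omega\in\Omega:\|\omega\|\le R\}$ into itself. Brouwer then yields a fixed point $\omega^\ast\in K$, and $u:=\phi(\omega^\ast)\in\Xi\cap\phi(\Omega)$, proving non-emptiness. For the quantitative bound, any $u\in\Xi\cap\phi(\Omega)$ equals $\phi(\omega)$ for some $\omega$ with $\omega=-P_\Omega(\phi(\omega)-\omega)$, whence $\|\omega\|\le\|P_\Omega\|\,C$ and therefore $\|u-v\|=\|u\|\le\|\omega\|+\|\phi(\omega)-\omega\|\le(1+\|P_\Omega\|)\,C$.

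The hard part will be identifying $\|P_\Omega\|=1/\sin\theta$, which is the main technical point. Here I would use $\|P_\Omega\|=\sup_{\omega\in\Omega,\|\omega\|=1}1/\mathrm{dist}(\omega,\Xi)$, together with $\mathrm{dist}(\omega,\Xi)=\sqrt{1-\|Q\omega\|^2}$ for $Q$ the \emph{orthogonal} projector onto $\Xi$, and the fact that for unit $\omega$ one has $\|Q\omega\|=\max_{\xi\in\Xi,\|\xi\|=1}\xi^T\omega$. Maximizing over $\omega$ gives $\max_{\|\omega\|=1}\|Q\omega\|^2=\max_{\xi,\omega}(\xi^T\omega)^2=\cos^2\theta$, exactly the variational definition of the first principal angle, hence $\inf_{\|\omega\|=1}\mathrm{dist}(\omega,\Xi)=\sin\theta$ and $\|P_\Omega\|=1/\sin\theta$; note that $\theta>0$ is ensured by $\Xi\cap\Omega=\{v\}$, so this quantity is finite. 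Substituting yields $\|u-v\|\le C\bigl(1+\tfrac{1}{\sin\theta}\bigr)$. The delicate bookkeeping lies in the oblique-projection norm computation and in passing between the variational principal-angle definition and the distance characterization.
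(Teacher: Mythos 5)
Your proof is correct, and it takes a genuinely different route from the paper's. You reformulate $\phi(\omega)\in\Xi$ as the fixed-point equation $\omega=-P_\Omega(\phi(\omega)-\omega)$ for the oblique projection $P_\Omega$ onto $\Omega$ along $\Xi$, and then invoke Brouwer's fixed-point theorem on a ball of radius $\|P_\Omega\|\,C$; the paper instead works with the maps $\Psi_1(\omega)=H^T\omega$ and $\Psi_2(\omega)=H^T\phi(\omega)$ (where the columns of $H$ span $\Xi^\perp$), shows via the principal-angle estimate $\|\Psi_1(\omega)\|\geq\|\omega\|\sin\theta$ that the straight-line homotopy between them never vanishes on the boundary of a suitable ball, and concludes by homotopy invariance of the Brouwer mapping degree (Appendix I). The two quantitative ingredients are really the same fact in different clothing: your identity $\|P_\Omega\|=1/\sin\theta$ is the reciprocal of the paper's lower bound on the smallest singular value of $H^T|_\Omega$, and both arguments land on the identical constant $C(1+1/\sin\theta)$. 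What your version buys is economy: Brouwer's theorem replaces the degree-theoretic machinery and the explicit coordinatization $f:\R^m\to\Omega$, and the existence and the a priori bound both fall out of the single estimate $\|T(\omega)\|\leq\|P_\Omega\|\,C$. What the paper's version buys is that degree arguments are more robust under perturbation and avoid having to introduce (and compute the norm of) the oblique projector, at the cost of importing Propositions and Definitions \ref{def:degree_A}--\ref{cor:notzero}. Two small points worth making explicit in a final write-up: the finiteness of $\|P_\Omega\|$ rests on $\theta>0$, which follows from $\Xi\cap\Omega=\{v\}$ exactly as you note; and your norm computation $\|P_\Omega\|=\sup_{\omega\in\Omega,\|\omega\|=1}1/\mathrm{dist}(\omega,\Xi)$ should be justified by writing a general $x=\omega+\xi$ and minimizing $\|\omega+\xi\|$ over $\xi\in\Xi$ for fixed $\omega$ --- both are routine and do not affect correctness.
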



In application of Lemma \ref{lem:intersection}, set $C := Pnm+1$ and $R:=C(1+\nicefrac{1}{\sin(\theta)})$ to obtain the compact ball
\begin{displaymath}
\mathcal{K}:=\lbb{}U\in\SpM:\lab\lab{}U-U_{0}\rab\rab\leq{}R\rbb\,,
\end{displaymath}
around $U_{0}$. 

In the next step, we approximate $\fuY^{\kappa}$ by a single-valued function $v_r^\kappa$, $r>0$,  in order to consider $r\to 0$ afterwards. We use the \emph{separation of $P$ from $Q$} 
$$d(P,Q) := 
\sup_{p\in P}\lb\inf_{q\in Q}\lb\lab\lab{}p-q\rab\rab\rb\rb\,,$$
for $P,Q\subset \R^D$, cf. \cite{Cel69}.

\begin{Lem}\label{lem:approxcont}
 For every $r>0$ there is a continuous mapping $v_r^\kappa: \mathcal{K} \to \Gamma_{1+P}$ such that $d\left({\rm graph}(v_r^\kappa),{\rm graph}(\fuY^\kappa|_{\mathcal{K}})\right)\leq  r$. 
\end{Lem}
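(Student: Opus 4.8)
The goal of Lemma~\ref{lem:approxcont} is to approximate the set-valued map $\fuY^\kappa$, restricted to the compact ball $\cK$, by a genuine \emph{continuous single-valued} map $v_r^\kappa$ whose graph lies within separation $r$ of the graph of $\fuY^\kappa|_\cK$. The plan is to exploit the structure of $\fuY^\kappa$ as the subdifferential of the convex functions $|\cdot|_{1,\kappa}$ (pulled back through $\ucoB$ and evaluated pointwise), which makes it an \emph{upper semicontinuous map with nonempty, compact, convex values}; indeed, from the explicit descriptions (\ref{eq:gen_ulam_A}) and (\ref{eq:gen_ulam_B}) one sees that $\fuY^\kappa(U)$ is always a product of intervals $[-1,1]$ (for $\kappa=1$) or of closed unit balls $\cK_1(0)$ (for $\kappa=2$), intersected with affine constraints that vary upper-semicontinuously in $U$. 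A set-valued map with these properties admits continuous $r$-approximate selections; the standard tool is the Cellina approximate-selection theorem for upper semicontinuous maps with convex values, which is precisely why the separation $d(\cdot,\cdot)$ of \cite{Cel69} was introduced just above the statement.

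The key steps, in order, are as follows. First I would verify that $\fuY^\kappa$ is upper semicontinuous on $\cK$ with nonempty compact convex values: nonemptiness and convexity are immediate from (\ref{eq:gen_ulam_A})/(\ref{eq:gen_ulam_B}) since each fibre is a box $[-1,1]^{(\cdot)}$ or a product of unit balls; compactness likewise; upper semicontinuity follows because the subdifferential of a fixed convex function is a closed (hence upper semicontinuous, given boundedness) map, and precomposition with the continuous linear operator $\ucoB$ preserves this. Second, I would note that all fibres are uniformly bounded (contained in a fixed box since the entries lie in $[-1,1]$), so the graph of $\fuY^\kappa|_\cK$ is a closed bounded, hence compact, subset of $\Gamma_{1+P}$. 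Third, I would invoke Cellina's theorem \cite{Cel69}: for an upper semicontinuous set-valued map with nonempty compact convex values defined on a compact set, and for each $r>0$, there exists a (locally Lipschitz, hence continuous) single-valued map $v_r^\kappa$ whose graph satisfies $d\bigl({\rm graph}(v_r^\kappa),{\rm graph}(\fuY^\kappa|_\cK)\bigr)\leq r$. The construction is the usual one: take a finite open cover of $\cK$ of mesh small enough, a subordinate partition of unity $\{\psi_i\}$, pick points $\lambda_i \in \fuY^\kappa(U_i)$ in the fibres over the cover centres, and set $v_r^\kappa(U) := \sum_i \psi_i(U)\,\lambda_i$; convexity of the fibres together with upper semicontinuity ensures this convex combination stays within $r$ of the graph.

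The main obstacle I expect is the verification of upper semicontinuity uniformly up to the boundary of each selection regime, that is, at points $U$ where some $\coBp(U)[k,\ell]$ (for $\kappa=1$) or some $\|(\coBp(U)[k,\ell])_p\|$ (for $\kappa=2$) transitions through $0$. At these transition points the fibre jumps from a single point to the full interval $[-1,1]$ (resp. the full ball $\cK_1(0)$), and one must confirm that this jump is \emph{closed-graph} behaviour, i.e.\ that the limit of selections from nearby nondegenerate fibres lands inside the degenerate fibre. This is exactly the closed-graph property of the subdifferential of a convex function, so it holds, but it is the place where the argument must be made carefully rather than waved through; the rest of the proof is a direct citation of the Cellina construction. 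I would also remark that the mollification/partition-of-unity construction automatically yields continuity of $v_r^\kappa$ on all of $\cK$ and maps into $\Gamma_{1+P}$ with the $U$-component unchanged, so that $v_r^\kappa$ has the form $U\mapsto(U,\text{(approximate selection)})$, matching the graph formulation in the statement.
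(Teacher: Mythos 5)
Your proposal is correct and follows essentially the same route as the paper: both establish that $\fuY^\kappa$ is convex-valued with closed graph and values in a compact set, hence upper semicontinuous, and then invoke Cellina's approximate-selection theorem (\cite{Cel69}, Theorem 1) to produce $v_r^\kappa$. The extra detail you give on the partition-of-unity construction and on the closed-graph behaviour at the degenerate fibres is consistent with, but not needed beyond, the paper's citation of Cellina.
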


\begin{proof}
 By definition, cf. (\ref{eq:gen_ulam_A}) and (\ref{eq:gen_ulam_B}), the functions $\fuY^{\kappa}$ are convex valued (i.e. $\fuY^{\kappa}(U)$ is a convex set for every $U \in \SpM$) and their graphs $\fuSh^{\kappa}$ are closed (due to continuity of $\ucoB$). Since they map to convex subsets of a compact set, they are upper semi-continuous (e.g. \cite{Cel69}[p. 19]). Thus  \cite{Cel69}[Theorem 1] is applicable, yielding the existence of  $v_r^\kappa$.
\end{proof}

With $v_r:=v_r^\kappa$ (for ease of notation dropping the superscript $\kappa$) from Lemma \ref{lem:approxcont}, we have thus that for every $U\in \mathcal{K}$ there are $U' \in \mathcal{K}$ and $\ulam' \in \fuY^\kappa(U')$ such that 
$$ \|U-U'\|^2 + \|v_r(U) - \ulam'\|^2 \leq r^2\,.$$
Since $\|\ulam'\|^2 \leq Pnm$ by definition of $\fuY^\kappa$ in ( \ref{eq:gen_ulam_A}) and (\ref{eq:gen_ulam_B}), we have
$$ \|v_r(U) \| \leq Pnm + 1\mbox{ for all } U \in \mathcal{K}\mbox{ and }r<1\,. $$
In consequence, the single-valued, continuous mapping
\begin{displaymath}
\phi_{r}:\fuSh^{0}\to\Gamma_{1+P},\quad (U,\underline{0})\mapsto (U,\upsilon_{r}(\mathcal{P}_{\mathcal{K}}(U)))\,,
\end{displaymath}
where $\mathcal{P}_{\mathcal{K}}$ is the orthogonal projection of $\SpM$ onto $\mathcal{K}$, satisfies
$$ \|\phi_r(\omega) - \omega\| \leq Pmn +1=C\,.$$
Since $(U_0,\underline{0})$ is the unique intersection point of $\fuSh^{0}$ and $\fuIm^G$, cf. (\ref{eq:unique-affine-intersection}) and Lemma \ref{lem:fuShandfuIm}, application of Lemma \ref{lem:intersection} yields for all $0< r< 1$ the existence of
$ (U_r,\ulam_r) \in \fuIm^{G}\cap\phi^{\kappa}_{r}\lb{}\fuSh^{0}\rb$ with the property 
$$\|(U_r,\ulam_r) - (U_0,\underline{0})\| \leq (Pmn+1) \left(1 + \frac{1}{\sin \theta}\right)\,.$$
We conclude that $U_r\in \mathcal{K}$ and hence $\ulam_r = v_r(U_r)$. Moreover, since the sequence $(U_r,\ulam_r) \in {\rm graph}(v_r)$ ($r= 1/n, n\in \N$) is bounded, it has a cluster point  
$(U^\dagger,\ulam^\dagger) \in \fuIm^G$ because $\fuIm^G$ is closed. Further, since $d({\rm graph}(v_r),{\rm graph}(\fuY^\kappa))\to 0$ with respect to the separation $d$, 
$(U^\dagger,\ulam^\dagger)$ is in the closure of ${\rm graph}(\fuY^\kappa) = \fuSh^\kappa$, which is closed, as remarked above in the proof of Lemma \ref{lem:approxcont}. Hence
$$ (U^\dagger,\ulam^\dagger) \in  \fuIm^G \cap \fuSh^{\kappa}\,,$$
yielding the assertion of Theorem \ref{the:existenceouter}, due to Lemma \ref{lem:5_and_6}.

\section{Numerical Experiments}\label{sec:five}

In illustration of the potential of our generalized algorithm we consider two classic problems in image decomposition: denoising and cartoon-texture decomposition. As proof of concept 
we show that  ad-hoc filters yield results comparable or even better than classical filters. Subjecting such filters to methods learning over larger classes is beyond the scope of this work and left for future research.

\subsection{Building New Filters for Denoising}\label{sec:denoising}

We illustrate how new (weakly or strongly) factoring filters can be easily obtained from existing models, that, as they feature more parameters, outperform existing models. To this end, we consider four classic test images $U\in\SpM$ (taken from~\cite{gra}) listed in Table \ref{tab:toy} with additive Gaussian noise $\epsilon$, 
$$F=U+\varepsilon\,,\text{ where }
\varepsilon\sim\mathcal{N}(0,\coAx\coAx^{\ast})\,,$$
correlated in horizontal direction,
\begin{equation*} 
Z_{x}=\frac{\sqrt{50}}{20}\begin{pmatrix}
    4 & 4 & 2 & 1 & 0 & \cdots{} & 0 & 1 & 2 & 4 \\
    1 & 0 & 0 & 0 & 0 &          &   & 0 & 0 & 0 \\
    0 & 0 & \ddots{}  & & &      &   &   & 0 & 0 \\
    \vdots{} &&&&&&&&& \vdots{} \\
    0 & 0 & &&&&&&\ddots{} & 0 \\
    1 & 0 & \cdots{} &&&&&\cdots{}& 0 & 0 	
	\end{pmatrix}\,.		
\end{equation*}
We compare three models for denoising and for each image we perform 100 experiments and record their respective peak signal to noise ratio (PSNR) mean $\pm$ standard deviation in Table \ref{tab:toy}. In all 100 experiments the PSNR of Models II and III outperformed the PSNR of Model I.  

\paragraph{Model I: classical $TV-\ell^{2}$.} As illustrated in (\ref{VAR-prob-Cy:eq}), denoting by  $\uD \in \Gamma^2$ the filters yielding the discrete derivative operator, i.e. $\nabla=\ucoD$, in this model we have
\begin{eqnarray}\label{VAR-prob-Cy-2:eq}
\ucoB = \ucoD\mbox{ and }\ucoBT^{\ast} = \coY \ucoB^\ast&\mbox{ where }&\coY = \beta\lb{}\mu\coE
+\beta\ucoB^{\ast}\ucoB\rb^{-1}\,,
\end{eqnarray}
as well as $\coA = \mu \lb{}\mu\coE +\beta\ucoB^{\ast}\ucoB\rb^{-1}$ where $\mu>0$ is the one weight optimized over.

\paragraph{Model II: strongly factoring with spectrally weighted gradient coordinates.}
Inspired from (\ref{VAR-prob-Cy-2:eq}) we place different weights on the spectral coordinates of the derivative operator via
$$ \ucoB = \ucoD \coV\mbox{ and }\coY =\beta\lb{}\mu\coE
+\beta\ucoB^{\ast}\ucoB\rb^{-1}\coVT^2 $$
where
\begin{displaymath}
\widehat{V}[k,\ell]:=\exp\lb{}-y_{1}\omega_{k}^2-y_{2}\omega_{\ell}^2\rb = \widehat{\widetilde{V}}[k,\ell]^{-1}\,,\quad\allkl\,
\end{displaymath}
and
\begin{equation}\label{omega-def:eq}
\omega_{k}:=\begin{cases}\frac{2\pi{}k}{n} & \text{ if }k<\frac{n}{2}\\
-2\pi+\frac{2\pi{}k}{n} & \text{else}\end{cases}\quad\text{ and }\quad\omega_{\ell}:=\begin{cases}\frac{2\pi{}\ell}{m} & \text{ if }\ell<\frac{m}{2}\\
-2\pi+\frac{2\pi\ell}{m} & \text{else}\end{cases}\,,
\end{equation} 
such that $\ucoBT^\ast = \coY \ucoB^\ast =  \beta\lb{}\mu\coE +\beta\ucoD^{\ast}\ucoD\rb^{-1}\mathfrak{C}_{\widetilde{V}}\ucoD^{\ast}$, cf. Remark \ref{ast-factor:eq}.
With $\coA$ unchanged, by definition, we arrive at a strongly factoring model with three weights $\mu, y_1,y_2$ optimized over. 

\paragraph{Model III: weakly factoring with spectrally weighted gradient directions.} Now, taking $Y^{TV-\ell^2}$ from the classical $TV-\ell^2$-model above in (\ref{VAR-prob-Cy-2:eq}) with $\coYROF=\beta\lb{}\mu\coE
+\beta\ucoB^{\ast}\ucoB\rb^{-1}$ and defining
$$\ucoYROF :\GammaP \to \GammaP, \uH \mapsto \big(\coYROF (H_1),\ldots, \coYROF (H_P)\big)\,,$$ 
we place different weights on the spectral representation of the directional derivatives via
$$ \ucoB = \ucoR\,\ucoD\mbox{ and }\ucoYY = \ucoYROF  \ucoRT^2 $$
where $\uR = (R_1,R_2)\in \Gamma^2$ with
\begin{displaymath}
\widehat{\widetilde{R}}_{1}[k,\ell]:=r_{1}=:\widehat{R}_{1}[k,\ell]^{-1}\mbox{ and }\widehat{\widetilde{R}}_{2}[k,\ell]:=r_{2}=:\widehat{R}_{2}[k,\ell]^{-1}\,, r_1,r_2>0\,,
\end{displaymath}
for all $\allkl$,
%
such that 
$$\ucoBT^\ast = \ucoB^\ast \ucoYY = \beta\lb{}\mu+\beta\ucoD^{\ast}\ucoD\rb^{-1}\ucoD^{\ast} \ucoRT\,,$$ 
cf. Remark \ref{ast-factor:eq}.
With $\coA$ unchanged, by definition, we arrive at a weakly factoring model with three weights $\mu, r_1,r_2$ optimized over. Whenever $r_1\neq r_2$ this model is not strongly factoring.
 
 \begin{table}[!t]
\captionsetup{justification=centering, labelsep=newline}
\caption{\it Trained parameters; $\mu$ has not been trained for Models II and III but the optimal values from Model I have been used.}\label{tab:parameters}
\centering
\begin{tabular}{c|c|c|c|c|c} 
Image &  Model I  & \multicolumn{2}{|c|}{Model II} & \multicolumn{2}{|c}{Model III}\\
& $\mu$ & $y_{1}$ & $y_{2}$  & $r_{1}$ & $r_{2}$ \\      
\hline
 &&&&&\\[-0.8em]
\multirow{1}{*}{barbara}
& 0.0818 &  -0.07  & 0.042  & 0.7408  & 1.3499  \\
&&&&&\\[-0.8em]
\multirow{1}{*}{cameraman}
& 0.054 &  -0.07  & 0.014  & 0.7408  & 1.3499  \\
 &&&&&\\[-0.8em]
\multirow{1}{*}{boat}
 & 0.051 &  -0.07  & 0.014  & 0.7408  & 1.3499 \\
 &&&&&\\[-0.8em]
 \multirow{1}{*}{goldhill}
 & 0.0458 &  -0.07  & 0.014  & 0.7408  & 1.1972 \\
\end{tabular}
\end{table}

\begin{table}[!h]
\captionsetup{justification=centering, labelsep=newline}
\centering
\caption{\it Peak signal to noise ratio (PSNR): mean $\pm$ standard deviation from one hundred experiments each with \underline{underlined} best mean PSNR  and  \textbf{bold} mean PSNR exceeding the mean PSNR of Model I ($TV-\ell^{2}$) by more that $0.1$.  
}\label{tab:toy}
\resizebox{\columnwidth}{!}{
\begin{tabular}{c|c|c|c} 
Image & \multicolumn{3}{|c}{PSNR} \\
&  Model I & Model II & Model III
\\
\hline
 &&&\\[-0.8em]
\multirow{1}{*}{barbara}
&  $25.1158\pm{}0.0251$  & \textbf{25.4197\textpm{}0.0266}  & \underline{\textbf{25.5282\textpm{}0.0257}} \\
&&&\\[-0.8em]
\multirow{1}{*}{cameraman}
&  $26.8547\pm{}0.0664$  & \textbf{27.0461\textpm{}0.0582}  & \underline{\textbf{27.0957\textpm{}0.0689}} \\
 &&&\\[-0.8em]
\multirow{1}{*}{boat}
 & $27.0946\pm{}0.0296$  & \textbf{27.2183\textpm{}0.0273}  & \underline{\textbf{27.3681\textpm{}0.0303}}  \\
 &&&\\[-0.8em]
 \multirow{1}{*}{goldhill}
 &  $27.4352\pm{}0.0322$  & $27.4956\pm{}0.0326$  & \underline{$27.5305\pm{}0.0328$}  \\
\end{tabular}}
\end{table}

\paragraph{Model training.}
The model unspecific parameters are set to $\beta=0.1$ and $\kappa=2$, while we compute the solution of Problem~\ref{prob:gen_intro} with  Algorithm~\ref{alg:Generalized_Algorithm} up to accuracy $\epsilon = 10^{-5}$.
For Model I, this accuracy was reached well before $500$ iterations, for some of the runs for Models II and III after $500$ iterations only an accuracy of $\epsilon = 10^{-4}$ has been reached. While $\mu$ has been optimized via bisection, the parameters $r_1,r_2$ and $y_1,y_2$ have been optimized via a grid search, cf. \cite{Ric19b}.

\subsection{A Laplace-Spline-Riesz Model Separating Cartoon and Texture}\label{sec:cartoon}

As another illustration of our approach, we now build an elaborate filter family separating cartoon and texture. Consider the  \emph{isotropic polyharmonic B-splines} which, for  $\gamma \geq 1/2$, are given in the frequency domain by
\begin{equation}\label{eq:Spline}
\widehat{f}_{\gamma}(x,y):=\lb\frac{4\lb{}\sin^{2}\lb\frac{x}{2}\rb+\sin^{2}\lb\frac{y}{2}\rb\rb-\frac{8}{3}\lb{}\sin\lb\frac{x}{2}\rb\sin\lb\frac{y}{2}\rb\rb}{\lb{}x^{2}+y^{2}\rb}\rb^{\frac{\gamma}{2}}\,.
\end{equation} 
Van de Ville et al.~\cite{VBU05} explain how these splines are basis-splines (B-splines) for the discrete fractional Laplace operator $(\ucoD^*\ucoD)^\gamma$, and how, using (\ref{eq:Spline}) as the scaling function, a bi-orthogonal wavelet basis is constructed, drawing on a similar relation as between the Haar wavelet frame and the discrete gradient (see for example Cai et al.~\cite{CDOS12}). Using dyadic subsampling in the construction of the wavelet frame operators yields $3$ wavelets per scale. We consider in this example $J+1=4$ scales to derive the high-pass filter pair $(\uB,\uBT)$. Additionally, we let $(\uB,\uBT)$ involve Riesz transforms with $Z= 3$ directions, cf. \cite{USV09,UV10}, giving in total families with $P = 3Z(J+1)=36$ members. All of this is detailed in Appendix \ref{sep-cartoon-texture:appendix}.

With the 
 \emph{autocorrelation function}
\begin{equation}\label{eq:Auto}
a_{\gamma}(x,y):=\sum_{r,s\in\Z}\lb{}f_{\gamma}(x+2\pi{}r,y+2\pi{}s\rb^{2}\,,
\end{equation}
we obtain a low-pass filter $A$ determined by 
\begin{displaymath}
\widehat{A}[k,\ell]:=\lab\lab\frac{f_{\gamma}(2^{J}\omega_{k},2^{J}\omega_{\ell})}{\sqrt{a_{\gamma}(2^{J}\omega_{k},2^{J}\omega_{\ell})}}\rab\rab^{2}\,,
\end{displaymath}
with $\omega_{k}, \omega_{\ell}$ taken from (\ref{omega-def:eq}). As detailed in Appendix \ref{sep-cartoon-texture:appendix}, the resulting model is weakly factoring, it does not satisfy the (NEPC), however (some eigenvalues of $\ucoBT^\ast \ucoB$ exceed 1). To this end, introduce ($0\leq j \leq J,~1\leq z \leq Z,~1\leq s \leq 3$)
\begin{displaymath}
H[k,\ell]=\lb\widehat{A}[k,\ell]+\sum_{j,z,s}\overline{\widetilde{B}_{j,z,s}[k,\ell]}\widehat{B}_{j,z,s}[k,\ell]\rb^{-\frac{1}{2}}\,,
\end{displaymath}
to obtain the corrected matrix families 
\begin{equation}\label{adjusted-filters:eq}
\widehat{A}^{\text{cor}}:=H\odot{}H\odot{}\widehat{A}\,,~\widehat{B}^{\text{cor}}_{(j,z,s)}:=H\odot{}\widehat{B}_{(j,z,s)}\,,~\widehat{\widetilde{B}}^{\text{cor}}_{(j,z,s)}:=H\odot{}\widehat{\widetilde{B}}_{(j,z,s)}\,.
\end{equation}
Then, the weakly factoring model resulting (with an additional minor adjustment leading to $\uB^{\text{adj}},\uBT^{\text{adj}}$, detailed in Appendix \ref{sep-cartoon-texture:appendix}) from $(A^{\text{cor}},\uB^{\text{adj}},\uBT^{\text{adj}})$ satisfies (numerically verified) the (CPC) and we call it the \emph{Laplace-spline-Riesz} (LsR) model. We compare it to  the $TV-\ell^{2}$-model, cf. (\ref{prob:ROF}),the $OSV$-model (\cite{OSV03}, as an approximation of the  $TV-G$-model) and the $TV$-Hilbert model ((\ref{eq:Hilbert}) with $\uB = \uD$ and $M$ as detailed in \cite{BLMV10}[Eq.8].).
%

Since there is no benchmarking available for cartoon-texture decomposition, in Figures~\ref{fig:c_t_1} and~\ref{fig:c_t_2} we inspect desirable features of resulting cartoons from five classic images from~\cite{gra}, zooming in on specific details; for the complete set of cartoons obtained see Figures~\ref{fig:c_t_1_full} and~\ref{fig:c_t_2_full} in Appendix~\ref{app:cartoon}. We have manually chosen parameters (see Table \ref{texture-contrast:fig}) for each model, respectively, in order to remove all texture (specifically from the table cloth) from the "barbara" image (first column of Figure  \ref{fig:c_t_1}) while keeping all "large-scale" structures (specifically the upper edge separating the table cloth from the floor behind).

\begin{table}[!h]
\centering
 \begin{tabular}{c|ccccc}Model &$\kappa$ &$\mu$&$\beta$& $\nicefrac{1}{(2\pi\sigma)^4}$ & $\gamma$\\ \hline
$TV-\ell^2$ & $2$ &$0.02$&$1$& - & -\\
$OSV$      & $2$ &$0.01$&$1$& - & -\\
$TV-$Hilbert& $2$ &$0.125$&$1$& $0.125$ & -\\
LsR        & $1$ & - & $1$ & - & $1.2$\\
 \end{tabular}
 \caption{\it Parameters to remove texture while keeping contrast (for the definition of $\sigma$ see~\cite{BLMV10}[Eq.8]). While in the first three models the choice of $\beta$ has no effect on the solution, in the LsR model it does. \label{texture-contrast:fig}}
\end{table}

While $TV-\ell^2$ and $OSV$ retain some texture, especially along edges, (Figures~\ref{fig:c_t_1} (d) and (h) as well as Figures~\ref{fig:c_t_2} (d) and (h)), edges are sharper for $TV-$Hilbert and LsR (Figures~\ref{fig:c_t_1} (l) and (p) as well as Figures~\ref{fig:c_t_2} (j) and (m)). 
The same effects are visible in Column 3 of Figure~\ref{fig:c_t_1}. For  LsR this comes at a price of slight blurring and light artefacts between the dark squares (Figure~\ref{fig:c_t_1}, (r)). On the other hand, the irregular water pattern (Column 4  of Figure~\ref{fig:c_t_1}) is only fully removed by  LsR (Figure~\ref{fig:c_t_1}, (s)). Notably, only  $TV-\ell^2$ was not able to remove all texture from the table cloth without loosing the contrast between the upper edge of the table cloth and the floor behind (Figure ~\ref{fig:c_t_1}, (d)). Similarly only  $TV-\ell^2$ looses the contrast between the middle and bottom part of the sail (Figure~\ref{fig:c_t_2}, (e)).
Finer scale texture patterns on the fish (Column 3 of Figure~\ref{fig:c_t_2}) are again fully removed only by LsR ((Figure~\ref{fig:c_t_1}, (r)).

\begin{figure}
\centering
\vspace{-2cm}
\subfloat[][]{
	\includegraphics[scale=0.21]{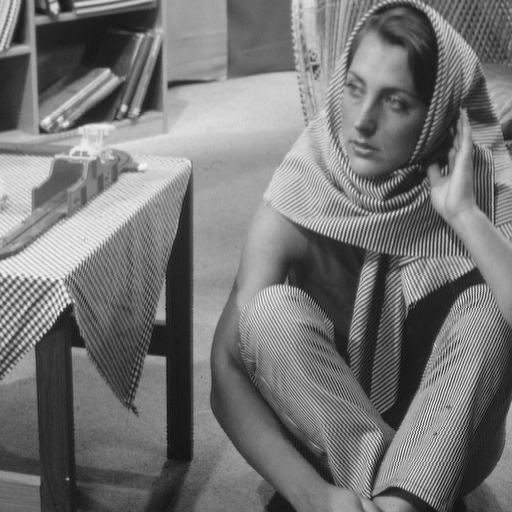}
}
\subfloat[][]{
	\includegraphics[scale=0.21]{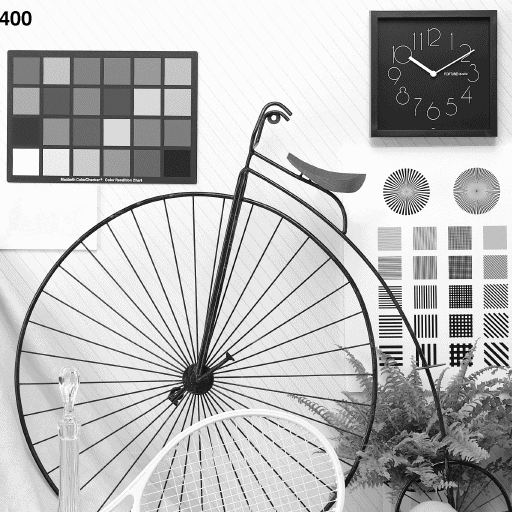}
}
\subfloat[][]{
	\includegraphics[scale=0.21]{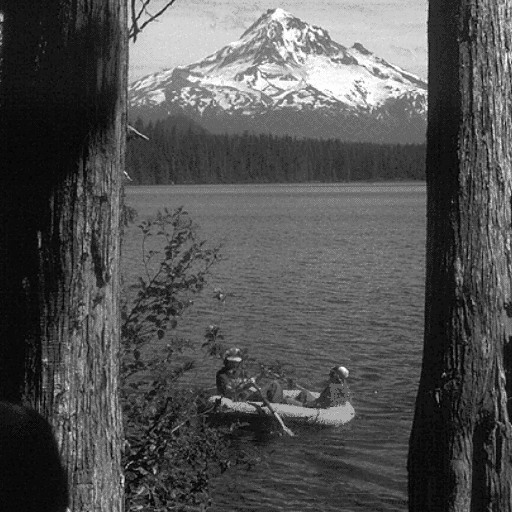}
}
\\
\subfloat[][]{
	\includegraphics[scale=0.55]{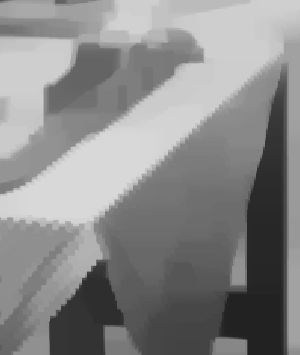}
}
\subfloat[][]{
	\includegraphics[scale=0.37]{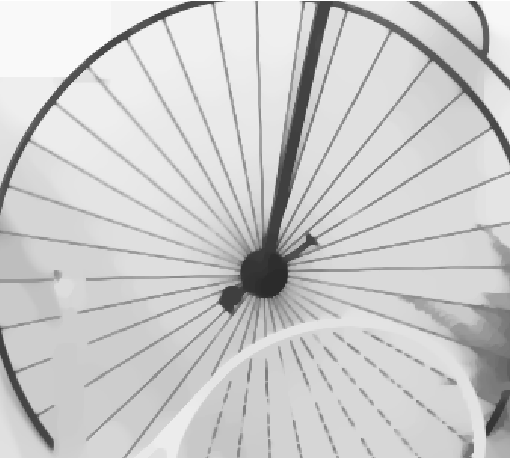}
}
\subfloat[][]{
	\includegraphics[scale=0.65]{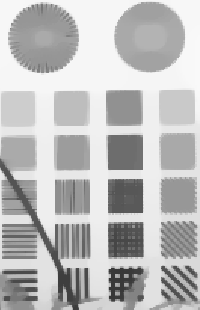}
}
\subfloat[][]{
	\includegraphics[scale=0.75]{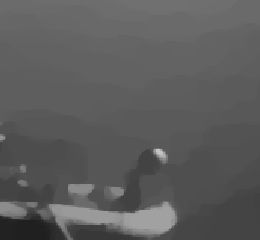}
}
\\
\subfloat[][]{
	\includegraphics[scale=0.55]{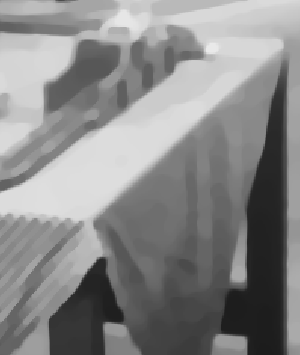}
}
\subfloat[][]{
	\includegraphics[scale=0.37]{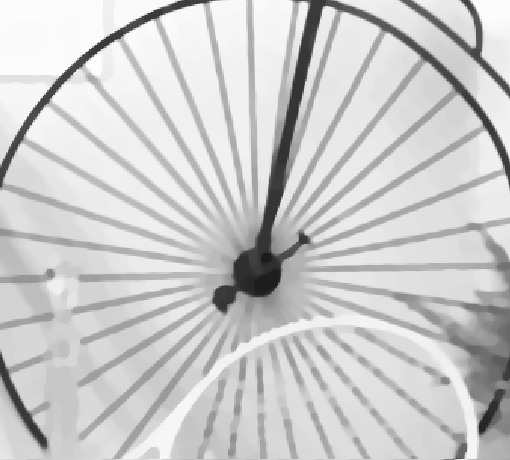}
}
\subfloat[][]{
	\includegraphics[scale=0.65]{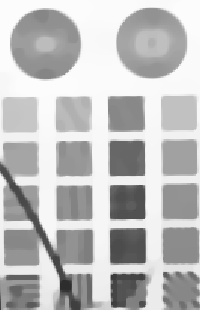}
}
\subfloat[][]{
	\includegraphics[scale=0.75]{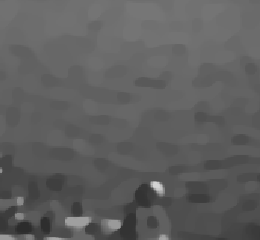}
}
\\
\subfloat[][]{
	\includegraphics[scale=0.55]{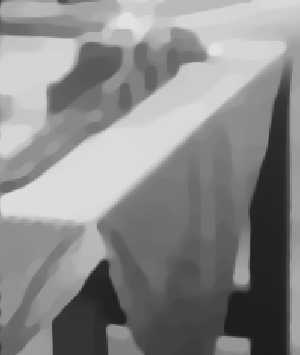}
}
\subfloat[][]{
	\includegraphics[scale=0.37]{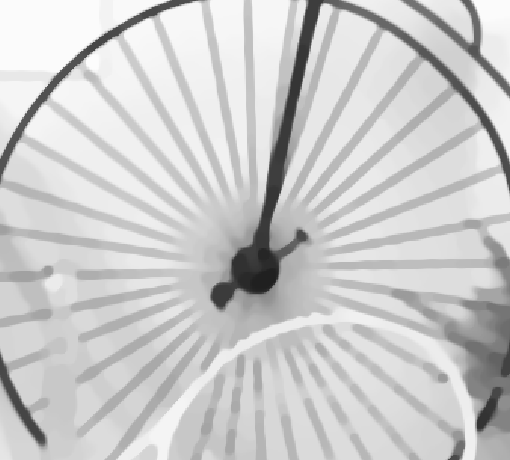}
}
\subfloat[][]{
	\includegraphics[scale=0.65]{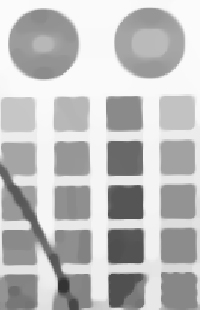}
}
\subfloat[][]{
	\includegraphics[scale=0.75]{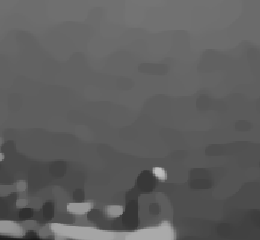}
}
\\
\subfloat[][]{
	\includegraphics[scale=0.55]{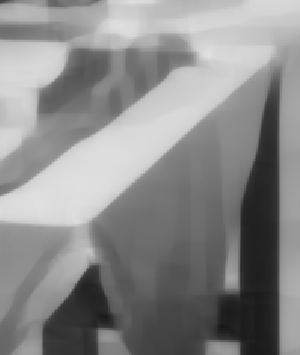}
}
\subfloat[][]{
	\includegraphics[scale=0.37]{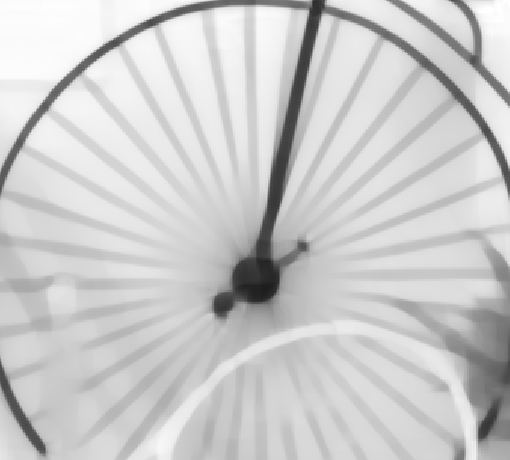}
}
\subfloat[][]{
	\includegraphics[scale=0.65]{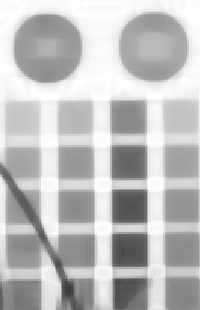}
}
\subfloat[][]{
	\includegraphics[scale=0.75]{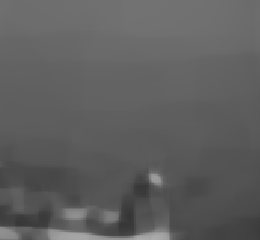}
}
\caption{\it Test images from \cite{gra} (1st row): barbara (a), bicycle (b), lake (c). 
Enlarged details from cartoons computed via $TV-\ell^{2}$ (2nd row), $OSV$ (3rd row), $TV$-Hilbert (4th row) and our LsR-model (5th row).} 
\label{fig:c_t_1}
\end{figure}

\begin{figure}
\centering
\vspace{-2cm}
\subfloat[][]{
	\includegraphics[scale=0.21]{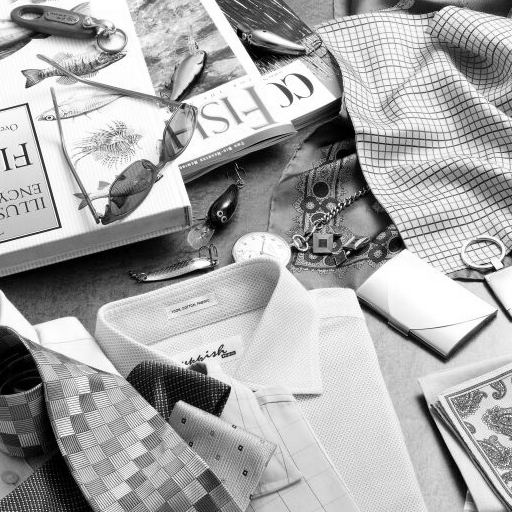}
}
\subfloat[][]{
	\includegraphics[scale=0.21]{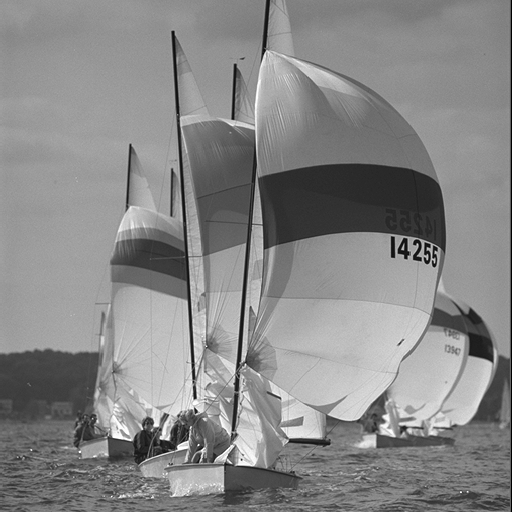}
}
\subfloat[][]{
	\includegraphics[scale=0.21]{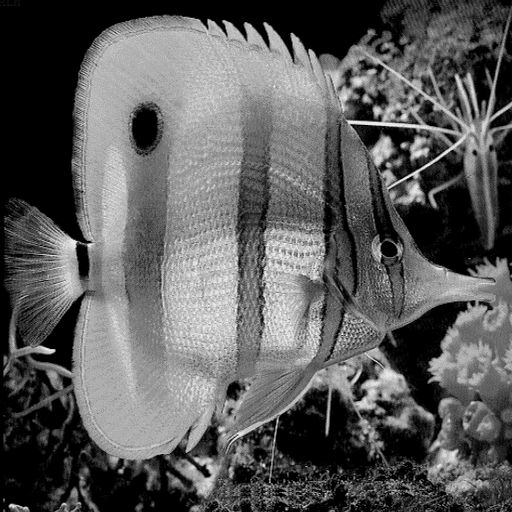}
}
\\
\subfloat[][]{
	\includegraphics[scale=0.47]{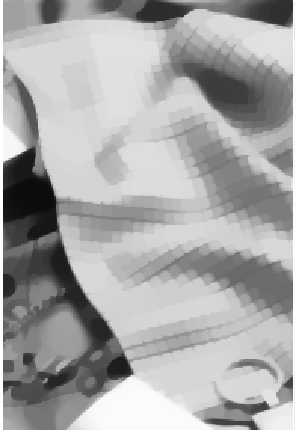}
}
\subfloat[][]{
	\includegraphics[scale=0.78]{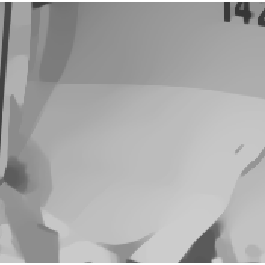}
}
\subfloat[][]{
	\includegraphics[scale=0.67]{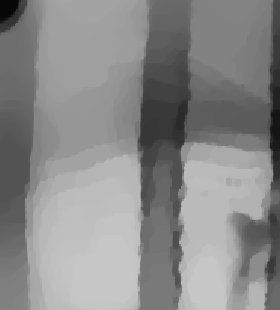}
}
\\
\subfloat[][]{
	\includegraphics[scale=0.47]{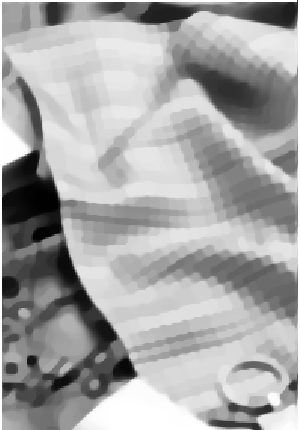}
}
\subfloat[][]{
	\includegraphics[scale=0.78]{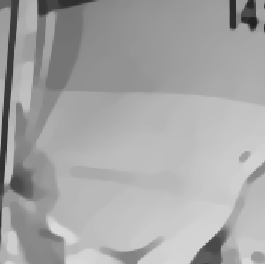}
}
\subfloat[][]{
	\includegraphics[scale=0.67]{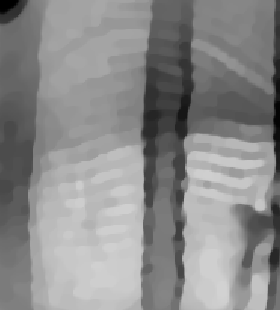}
}
\\
\subfloat[][]{
	\includegraphics[scale=0.47]{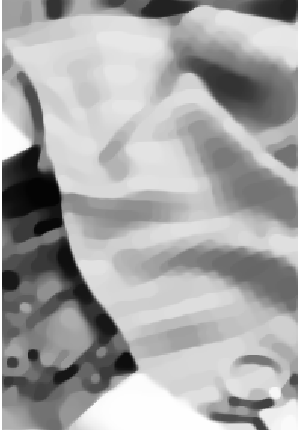}
}
\subfloat[][]{
	\includegraphics[scale=0.78]{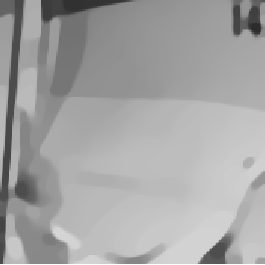}
}
\subfloat[][]{
	\includegraphics[scale=0.67]{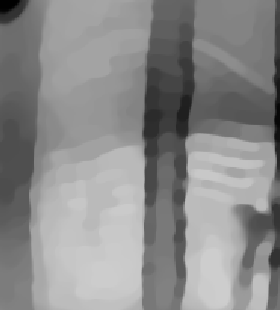}
}
\\
\subfloat[][]{
	\includegraphics[scale=0.47]{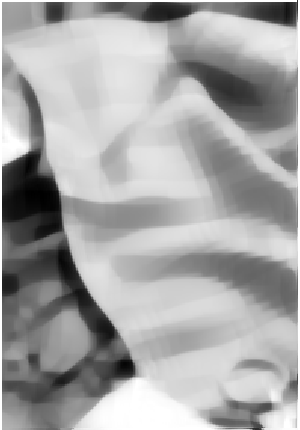}
}
\subfloat[][]{
	\includegraphics[scale=0.78]{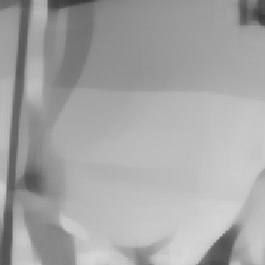}
}
\subfloat[][]{
	\includegraphics[scale=0.67]{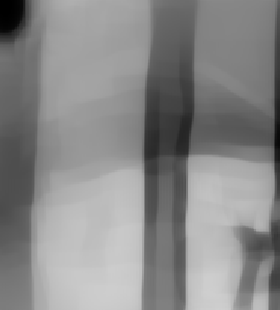}
}

\caption{\it 
 Test images from \cite{gra} (1st row): clutter (a), sail (b), fish.
Enlarged details from cartoons computed via $TV-\ell^{2}$ (2nd row), $TV-G$ (3rd row), $TV$-Hilbert (4th row) and our LsR-model (5th row).
}
\label{fig:c_t_2}
\end{figure}

Figures~\ref{fig:c_t_1} and~\ref{fig:c_t_2} show that the weakly factoring LsR filters lead to results  comparable in texture removal to classic models in the literature. Since the LsR filters come with higher flexibility (e.g. the additional parameter $\gamma>\frac{1}{2}$ and the number of scales can be increased), they have the potential to be better trained for a specific task.

\section{Outlook}

In this paper we have introduced a general intersection point problem which features a solution under (CPC) and weakly factoring filters. We have explored some consequences of these conditions. It is an open problem whether additional conditions are necessary to establish convergence and to obtain a unique intersection point. Both of which are granted if weakly factoring is replaced with strongly factoring. 
For the weakly factoring filter families introduced, we have observed numerical convergence for all of the test images considered. 

The weakly factoring filters satisfying the~(\ref{eq:CPC}), introduced, serve as a proof of concept. Due to their broad range of parameters, they can be used for (deep) learning the goals of denoising and cartoon-texture decomposition, as we have touched in our applications. In particular when a loss-function for a specific task is available, minimizing this loss over a larger class of problems is desirable. 

 
In current research we use the filters proposed in our applications as initial parameters that are fed into learning architectures, where filter families $\uB$ and corresponding factor families $\uY$ are optimized over for supervised and unsupervised learning.

\section*{Acknowledgment}
The first and the last author gratefully acknowledge funding by the DFG GRK 2088.

\addcontentsline{toc}{chapter}{Bibliography}
\bibliography{referencesRR}

\newcommand{\etalchar}[1]{$^{#1}$}
\begin{thebibliography}{VDVBU05}

\bibitem[AG06]{AG06}
Jean-Fran\c{c}ois Aujol and Guy Gilboa.
\newblock Constrained and {SNR}-based solutions for {TV}-{H}ilbert space image
  denoising.
\newblock {\em J. Math. Imaging Vision}, 26(1-2):217--237, 2006.

\bibitem[A{\"O}17]{AO17}
Jonas Adler and Ozan {\"O}ktem.
\newblock Solving ill-posed inverse problems using iterative deep neural
  networks.
\newblock {\em Inverse Problems}, 33(12):124007, 2017.

\bibitem[A{\"O}18]{AO18}
Jonas Adler and Ozan {\"O}ktem.
\newblock Learned primal-dual reconstruction.
\newblock {\em IEEE transactions on medical imaging}, 37(6):1322--1332, 2018.

\bibitem[BBD{\etalchar{+}}06]{Ber06}
Benjamin Berkels, Martin Burger, Marc Droske, Oliver Nemitz, and Martin Rumpf.
\newblock Cartoon extraction based on anisotropic image classification vision,
  modeling, and visualization.
\newblock In {\em Vision, Modeling, and Visualization Proceedings}, 2006.

\bibitem[BC11]{BC11}
Heinz~H. Bauschke and Patrick~L. Combettes.
\newblock {\em Convex analysis and monotone operator theory in {H}ilbert
  spaces}.
\newblock CMS Books in Mathematics/Ouvrages de Math\'{e}matiques de la SMC.
  Springer, New York, 2011.
\newblock With a foreword by H\'{e}dy Attouch.

\bibitem[BLMV10]{BLMV10}
A.~Buades, T.M. Le, J.-M. Morel, and L.A. Vese.
\newblock Fast cartoon + texture image filters.
\newblock {\em IEEE Trans. Image Process.}, 19(8):1978--1986, 2010.

\bibitem[BP10]{BP10}
Ma\"{i}tine Bergounioux and Loic Piffet.
\newblock A second-order model for image denoising.
\newblock {\em Set-Valued Var. Anal.}, 18(3-4):277--306, 2010.

\bibitem[BV04]{BV04}
Stephen Boyd and Lieven Vandenberghe.
\newblock {\em Convex optimization}.
\newblock Cambridge University Press, Cambridge, 2004.

\bibitem[CCN15]{CCN15}
V.~Caselles, A.~Chambolle, and M.~Novaga.
\newblock Total variation in imaging.
\newblock In {\em Handbook of mathematical methods in imaging. {V}ol. 1, 2, 3},
  pages 1455--1499. Springer, New York, 2015.

\bibitem[CDOS12]{CDOS12}
Jian-Feng Cai, Bin Dong, Stanley Osher, and Zuowei Shen.
\newblock Image restoration: total variation, wavelet frames, and beyond.
\newblock {\em J. Amer. Math. Soc.}, 25(4):1033--1089, 2012.

\bibitem[Cel69]{Cel69}
Arrigo Cellina.
\newblock Approximation of set valued functions and fixed point theorems.
\newblock {\em Ann. Mat. Pura Appl. (4)}, 82:17--24, 1969.

\bibitem[CP17]{CP17}
Yunjin Chen and Thomas Pock.
\newblock Trainable nonlinear reaction diffusion: A flexible framework for fast
  and effective image restoration.
\newblock {\em IEEE transactions on pattern analysis and machine intelligence},
  39(6):1256--1272, 2017.

\bibitem[CYP15]{CYP15}
Yunjin Chen, Wei Yu, and Thomas Pock.
\newblock On learning optimized reaction diffusion processes for effective
  image restoration.
\newblock In {\em Proceedings of the IEEE conference on computer vision and
  pattern recognition}, pages 5261--5269, 2015.

\bibitem[EB92]{EB92}
Jonathan Eckstein and Dimitri~P. Bertsekas.
\newblock On the {D}ouglas-{R}achford splitting method and the proximal point
  algorithm for maximal monotone operators.
\newblock {\em Math. Programming}, 55(3, Ser. A):293--318, 1992.

\bibitem[Gab83]{Gab83}
Daniel Gabay.
\newblock Applications of the method of multipliers to variational
  inequalities.
\newblock In {\em Studies in mathematics and its applications}, volume~15,
  pages 299--331. Elsevier, 1983.

\bibitem[GLMV07]{GLMV07}
John~B. Garnett, Triet~M. Le, Yves Meyer, and Luminita~A. Vese.
\newblock Image decompositions using bounded variation and generalized
  homogeneous {B}esov spaces.
\newblock {\em Appl. Comput. Harmon. Anal.}, 23(1):25--56, 2007.

\bibitem[GLT89]{GT89}
Roland Glowinski and Patrick Le~Tallec.
\newblock {\em Augmented {L}agrangian and operator-splitting methods in
  nonlinear mechanics}, volume~9 of {\em SIAM Studies in Applied Mathematics}.
\newblock Society for Industrial and Applied Mathematics (SIAM), Philadelphia,
  PA, 1989.

\bibitem[gra]{gra}
{Test Image Database by the Computer Vision Group of the University of
  Granada}.
\newblock http://decsai.ugr.es/cvg/CG/base.htm.
\newblock Accessed: 2020-04-30.

\bibitem[Jor75]{Jor75}
Camille Jordan.
\newblock Essai sur la g\'{e}om\'{e}trie \`a {$n$} dimensions.
\newblock {\em Bull. Soc. Math. France}, 3:103--174, 1875.

\bibitem[KBPS11]{KBPS11}
Florian Knoll, Kristian Bredies, Thomas Pock, and Rudolf Stollberger.
\newblock Second order total generalized variation (tgv) for mri.
\newblock {\em Magnetic resonance in medicine}, 65(2):480--491, 2011.

\bibitem[LCG18]{LCG18}
Yunyi Li, Xiefeng Cheng, and Guan Gui.
\newblock Co-robust-admm-net: Joint admm framework and dnn for robust sparse
  composite regularization.
\newblock {\em IEEE Access}, 6:47943--47952, 2018.

\bibitem[Mal98]{Mal98}
S.~Mallat.
\newblock {\em A wavelet tour of signal processing}.
\newblock Academic Press, Inc., San Diego, CA, 1998.

\bibitem[Mal02]{Mal02}
Fran\c{c}ois Malgouyres.
\newblock Minimizing the total variation under a general convex constraint for
  image restoration.
\newblock {\em IEEE Trans. Image Process.}, 11(12):1450--1456, 2002.

\bibitem[Mey01]{Mey01}
Y.~Meyer.
\newblock {\em Oscillating patterns in image processing and nonlinear evolution
  equations}, volume~22 of {\em University Lecture Series}.
\newblock American Mathematical Society, Providence, RI, 2001.
\newblock The fifteenth Dean Jacqueline B. Lewis memorial lectures.

\bibitem[MJU17]{MJU17}
Michael~T McCann, Kyong~Hwan Jin, and Michael Unser.
\newblock Convolutional neural networks for inverse problems in imaging: A
  review.
\newblock {\em IEEE Signal Processing Magazine}, 34(6):85--95, 2017.

\bibitem[MMG12]{MMG12}
Anca Morar, Florica Moldoveanu, and Eduard Gr{\"o}ller.
\newblock Image segmentation based on active contours without edges.
\newblock In {\em 2012 IEEE 8th International Conference on Intelligent
  Computer Communication and Processing}, pages 213--220. IEEE, 2012.

\bibitem[OR09]{OR09}
Enrique Outerelo and Jes\'{u}s~M. Ruiz.
\newblock {\em Mapping degree theory}, volume 108 of {\em Graduate Studies in
  Mathematics}.
\newblock American Mathematical Society, Providence, RI; Real Sociedad
  Matem\'{a}tica Espa\~{n}ola, Madrid, 2009.

\bibitem[OSV03]{OSV03}
Stanley Osher, Andr\'es Sol\'e, and Luminita Vese.
\newblock Image decomposition and restoration using total variation
  minimization and the {$H^{-1}$} norm.
\newblock {\em Multiscale Model. Simul.}, 1(3):349--370, 2003.

\bibitem[Ric19]{Ric19b}
Robin Richter.
\newblock {\em Cartoon-Residual Image Decompositions with Application in
  Fingerprint Recognition}.
\newblock PhD thesis, Georg-August-University Göttingen, 2019.

\bibitem[ROF92]{ROF92}
L.I. Rudin, S.~Osher, and E.~Fatemi.
\newblock Nonlinear total variation based noise removal algorithms.
\newblock {\em Physica D.}, 60(1-4):259--268, 1992.

\bibitem[RTGH]{RTGH20}
Robin Richter, Hoang-Duy Thai, Carsten Gottschlich, and Stephan Huckemann.
\newblock Filter design for image decomposition.

\bibitem[SB07]{SB07}
Johannes Sveinsson and Jon Benediktsson.
\newblock Combined wavelet and curvelet denoising of sar images using tv
  segmentation.
\newblock In {\em Geoscience and Remote Sensing Symposium, 2007. IGARSS 2007.
  IEEE International}, pages 503 -- 506, 08 2007.

\bibitem[SGG{\etalchar{+}}09]{Sch09}
Otmar Scherzer, Markus Grasmair, Harald Grossauer, Markus Haltmeier, and Frank
  Lenzen.
\newblock {\em Variational methods in imaging}, volume 167 of {\em Applied
  Mathematical Sciences}.
\newblock Springer, New York, 2009.

\bibitem[SWB{\etalchar{+}}04]{Stei04}
G.~Steidl, J.~Weickert, T.~Brox, P.~Mr\'azek, and M.~Welk.
\newblock On the equivalence of soft wavelet shrinkage, total variation
  diffusion, total variation regularization, and sides.
\newblock {\em SIAM J. Numer. Anal.}, 42(2):686--713, 2004.

\bibitem[USVDV09]{USV09}
M.~Unser, D.~Sage, and D.~Van De~Ville.
\newblock Multiresolution monogenic signal analysis using the {R}iesz-{L}aplace
  wavelet transform.
\newblock {\em IEEE Trans. Image Process.}, 18(11):2402--2418, 2009.

\bibitem[UVDV10]{UV10}
M.~Unser and D.~Van De~Ville.
\newblock Wavelet steerability and the higher-order {R}iesz transform.
\newblock {\em IEEE Trans. Image Process.}, 19(3):636--652, 2010.

\bibitem[VDVBU05]{VBU05}
D.~Van De~Ville, T.~Blu, and M.~Unser.
\newblock Isotropic polyharmonic {B}-splines: scaling functions and wavelets.
\newblock {\em IEEE Trans. Image Process.}, 14(11):1798--1813, 2005.

\bibitem[VO03]{VO03}
Luminita~A. Vese and Stanley~J. Osher.
\newblock Modeling textures with total variation minimization and oscillating
  patterns in image processing.
\newblock {\em J. Sci. Comput.}, 19(1-3):553--572, 2003.
\newblock Special issue in honor of the sixtieth birthday of Stanley Osher.

\bibitem[VO04]{VO04}
Luminita~A. Vese and Stanley~J. Osher.
\newblock Image denoising and decomposition with total variation minimization
  and oscillatory functions.
\newblock {\em J. Math. Imaging Vision}, 20(1-2):7--18, 2004.
\newblock Special issue on mathematics and image analysis.

\bibitem[Wan16]{Wan16}
Ge~Wang.
\newblock A perspective on deep imaging.
\newblock {\em Ieee Access}, 4:8914--8924, 2016.

\bibitem[WT10]{WT10}
C.~Wu and X.-C. Tai.
\newblock Augmented {L}agrangian method, dual methods, and split {B}regman
  iteration for {ROF}, vectorial {TV}, and high order models.
\newblock {\em SIAM J. Imaging Sci.}, 3(3):300--339, 2010.

\bibitem[YGO05]{YGO05}
Wotao Yin, Donald Goldfarb, and Stanley Osher.
\newblock Image cartoon-texture decomposition and feature selection using the
  total variation regularized l 1 functional.
\newblock In {\em Variational, Geometric, and Level Set Methods in Computer
  Vision}, pages 73--84. Springer, 2005.

\bibitem[YSLX16]{YSLX16}
Yan Yang, Jian Sun, Huibin Li, and Zongben Xu.
\newblock Deep admm-net for compressive sensing mri.
\newblock In {\em 30th Conference on Neutral Information Processing Systems
  (NIPS 2016)}, pages 10--18, 2016.

\end{thebibliography}
\bibliographystyle{alpha}
\newpage

\section{Appendix I - Some Basic Mapping Degree Theory}
\label{app:one}

Let $\cK\in \R^n$  be the closure of an open bounded set.

\begin{PropDef}[\cite{OR09}, §{}IV, Prop. and Def. 1.1]\label{def:degree_A}
Let $g:\cK\to\R^{n}$ be a differentiable mapping and $p\notin{}g(\parO)$ a regular value (with preimage of finite cardinality), then the \emph{degree} $\text{deg}(g,\cK,p)$ is defined as:
\begin{equation}\label{eq:degree}
\text{deg}(g,\cK,p):=\sum_{x\in{}g^{-1}(p)}\text{sign}\lb\text{det}\lb{}D_{x}g\rb\rb\,,
\end{equation} 
where the sum is $0$ in case of $g^{-1}(p)=\emptyset$ and $D_{x}g$ is the Jacobian matrix at point $x\in\cK$.
\end{PropDef}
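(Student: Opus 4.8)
The plan is to verify that the right-hand side of (\ref{eq:degree}) is a well-defined integer under the stated hypotheses, which is the substance of the proposition accompanying the definition. The argument splits into checking that each summand makes sense and that the sum is finite.

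First I would observe that, since $p$ is a \emph{regular} value of $g$, the Jacobian $D_{x}g$ is nonsingular for every $x\in g^{-1}(p)$. Hence $\det(D_{x}g)\neq 0$, so $\text{sign}(\det(D_{x}g))\in\{-1,+1\}$ is unambiguous and each term of the sum contributes exactly $\pm 1$. Next, although finiteness of $g^{-1}(p)$ is listed as a hypothesis, I would point out that it is in fact forced by the other assumptions: the condition $p\notin g(\parO)$ confines $g^{-1}(p)$ to the interior $\cK\setminus\parO$, and the inverse function theorem — applicable precisely because $D_{x}g$ is invertible at each preimage point — shows that every $x\in g^{-1}(p)$ is isolated. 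Being a closed subset of the compact set $\cK$, the preimage $g^{-1}(p)$ is compact, and a compact set of isolated points is finite. Consequently the sum ranges over finitely many indices (with the empty sum read as $0$) and returns a well-defined element of $\Z$.

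The main point to stress is that, for the statement exactly as phrased — with $p$ a \emph{fixed} regular value of finite preimage — there is no genuine obstacle: well-definedness is immediate from the two observations above. The real work of degree theory lies in the companion results that this definition is designed to support, namely that $\text{deg}(g,\cK,p)$ is constant in $p$ on each connected component of $\R^{n}\setminus g(\parO)$, and that it extends consistently to arbitrary values and to merely continuous $g$ via homotopy. Those facts require Sard's theorem (to ensure regular values are dense) together with homotopy-invariance estimates, and are exactly what is imported from the cited source~\cite{OR09}; I would defer to that reference rather than reprove them here.
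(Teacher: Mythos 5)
Your verification is correct and is exactly the standard well-definedness argument: regularity of $p$ makes each $\text{sign}(\det(D_{x}g))$ equal to $\pm 1$, and isolation of preimage points inside the compact set $\cK$ forces $g^{-1}(p)$ to be finite. The paper itself states this Proposition-and-Definition without proof, importing it from \cite{OR09}, so your argument simply supplies the routine check the paper leaves to the reference; the only cosmetic remark is that for merely differentiable $g$ the isolation of points of $g^{-1}(p)$ follows directly from the first-order expansion $g(x)=p+D_{x_{0}}g\,(x-x_{0})+o(\lab\lab{}x-x_{0}\rab\rab)$ at an invertible derivative, without needing the full inverse function theorem.
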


The notion of degree can be extended to continuous functions.

\begin{PropDef}[\cite{OR09}, §{}IV, Prop. and Def. 2.1]\label{def:degree_B}
Let $g:\cK\to\R^{n}$ be a continuous mapping and $p\notin{}g(\parO)$. Then there exists a differentiable mapping $h:\cK\to\R^{n}$ such that $\lab\lab{}g-h\rab\rab_{\infty}\leq{}d(p,g(\parO))$ and $p$ is a regular value of $h$. For every such $h$ the \emph{degree} $d(h,\cK,p)$ is well-defined and independent of the choice of $h$, so we can define the degree with respect to $g$ as:
\begin{displaymath}
\text{deg}(g,\cK,p):=\text{deg}(h,\cK,p)=\sum_{x\in{}h^{-1}(p)}\text{sign}\lb\text{det}\lb{}D_{x}h\rb\rb\,.
\end{displaymath} 
\end{PropDef}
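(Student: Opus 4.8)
The plan is to establish the two assertions in turn: first the \emph{existence} of a differentiable approximant $h$ for which $p$ is a regular value, and then the \emph{well-definedness}, i.e.\ the independence of $\text{deg}(h,\cK,p)$ of the particular such $h$. Throughout I set $\delta:=d(p,g(\parO))$; since $\parO$ is compact and $g$ continuous, $g(\parO)$ is compact, and as $p\notin g(\parO)$ we have $\delta>0$. This positive margin is what drives every estimate below.

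For existence I would first approximate $g$ uniformly by a smooth map: componentwise polynomial (Weierstrass) approximation, or mollification, yields a $C^{\infty}$ map $\widetilde{h}:\cK\to\R^{n}$ with $\lab\lab{}g-\widetilde{h}\rab\rab_{\infty}<\delta/2$. By Sard's theorem the critical values of $\widetilde{h}$ form a null set, so I can pick a regular value $q$ of $\widetilde{h}$ with $\lab\lab{}q-p\rab\rab<\delta/2$ and set $h:=\widetilde{h}+(p-q)$. Then $h$ is differentiable, the equation $h(x)=p$ is equivalent to $\widetilde{h}(x)=q$ with $D_{x}h=D_{x}\widetilde{h}$, so $p$ is a regular value of $h$, and $\lab\lab{}g-h\rab\rab_{\infty}\leq\lab\lab{}g-\widetilde{h}\rab\rab_{\infty}+\lab\lab{}p-q\rab\rab<\delta$. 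The strict estimate also gives $\lab\lab{}h(x)-p\rab\rab\geq\lab\lab{}g(x)-p\rab\rab-\lab\lab{}g(x)-h(x)\rab\rab>0$ for $x\in\parO$, so $p\notin h(\parO)$ and Proposition and Definition~\ref{def:degree_A} applies to $h$; finiteness of $h^{-1}(p)$ follows since a regular preimage is discrete and closed in the compact $\cK$ while avoiding $\parO$.

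For well-definedness, let $h_{1},h_{2}$ be two differentiable maps with $\lab\lab{}g-h_{i}\rab\rab_{\infty}<\delta$ and $p$ a regular value of each. I would invoke homotopy invariance of the differentiable degree along the straight-line homotopy $H(x,t):=(1-t)h_{1}(x)+t\,h_{2}(x)$, $t\in[0,1]$. The key point is that $p$ stays off the boundary throughout: for $x\in\parO$ the convex combination obeys $\lab\lab{}H(x,t)-g(x)\rab\rab\leq(1-t)\lab\lab{}h_{1}-g\rab\rab_{\infty}+t\lab\lab{}h_{2}-g\rab\rab_{\infty}<\delta\leq\lab\lab{}g(x)-p\rab\rab$, whence $H(x,t)\neq p$. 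Homotopy invariance then forces $\text{deg}(h_{1},\cK,p)=\text{deg}(h_{2},\cK,p)$, so the common value may legitimately be declared to be $\text{deg}(g,\cK,p)$.

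The main obstacle is precisely this last step, and it is where the margin $\delta$ must be spent with care: the straight-line homotopy avoids $p$ on $\parO$ only because both approximants lie \emph{strictly} within distance $\delta$ of $g$ and the ball of radius $\delta$ about $g(x)$ is convex. If one reads the statement literally at the borderline $\lab\lab{}g-h_{i}\rab\rab_{\infty}=\delta$, antiparallel extremal configurations on $\parO$ could in principle let $H(\cdot,t)$ touch $p$; this is repaired by first shrinking to a strict inequality, which the construction above already furnishes. A secondary technical point is that homotopy invariance is usually formulated for the smooth degree, so if $h_{1},h_{2}$ are merely $C^{1}$ one either develops the degree directly in the $C^{1}$ category or smooths $h_{1},h_{2}$ within the same $\delta$-margin without altering their degrees at $p$.
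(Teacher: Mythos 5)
This statement is quoted verbatim from \cite{OR09} (\S{}IV, Prop.\ and Def.\ 2.1) and the paper offers no proof of it --- it is imported as background in Appendix I --- so there is no in-paper argument to compare yours against. On its own merits, your proof is the standard one and is essentially correct. The existence half (mollify or Stone--Weierstrass to within $\delta/2$ of $g$, use Sard to pick a regular value $q$ with $\lab\lab q-p\rab\rab<\delta/2$, then translate by $p-q$) is clean, and your observation that the preimage of a regular value is discrete and closed in the compact $\cK$, hence finite, correctly supplies the hypothesis of Proposition and Definition~\ref{def:degree_A}. The well-definedness half via the straight-line homotopy and the estimate $\lab\lab H(x,t)-g(x)\rab\rab<\delta\leq\lab\lab g(x)-p\rab\rab$ on $\parO$ is also the standard route. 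Two of your own caveats deserve emphasis. First, the borderline case $\lab\lab g-h_i\rab\rab_\infty=\delta$ is in fact harmless: if the convex combination hit $p$ at some $x\in\parO$, equality in the triangle inequality would force $h_1(x)=h_2(x)=p$, contradicting $p\notin h_i(\parO)$, which is already required for $\text{deg}(h_i,\cK,p)$ to be defined via Proposition and Definition~\ref{def:degree_A}; so no preliminary shrinking is needed. Second, and more importantly, you must invoke homotopy invariance for the \emph{differentiable} degree here, not Lemma~\ref{lem:homoin} (which concerns the continuous degree being defined and would be circular); you flag this correctly, and since the straight-line homotopy between two differentiable maps is itself differentiable in $x$ for each $t$, the smooth-category invariance applies once one handles the fact that $p$ need not be a regular value of the intermediate maps $H(\cdot,t)$ --- a standard point in any construction of the smooth degree, but one your sketch leans on without proof.
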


The degree of a continuous function is homotopy-invariant. 

\begin{Lem}[\cite{OR09} §IV, Prop. 2.4]\label{lem:homoin}
Let $g$ and $h$ be continuous mappings from $\cK$ to $\R^{n}$ and let $H:[0,1]\times\cK\to\R^{n}$ be a homotopy between $g$ and $h$, i.e. a continuous mapping such that $H(0,\cdot{})=g$ and $H(1,\cdot{})=h$, such that for all $t\in[0,1]$ we have $p\notin{}H(t,\parO)$, then
\begin{displaymath}
\text{deg}(f,\cK,p)=\text{deg}(h,\cK,p)\,.
\end{displaymath}
\end{Lem}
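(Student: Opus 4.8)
The plan is to derive homotopy invariance from a more primitive \emph{stability} property of the degree under small uniform perturbations, and then to chain this stability along a sufficiently fine partition of the homotopy parameter $t$. (Throughout I read the conclusion as $\text{deg}(g,\cK,p)=\text{deg}(h,\cK,p)$, the symbol $f$ in the displayed equation being a typo for $g$.) First I would extract a uniform margin from the boundary: since $[0,1]\times\parO$ is compact, $H$ is continuous, and $p\notin H(t,\parO)$ for every $t$, the continuous function $(t,x)\mapsto\lab\lab H(t,x)-p\rab\rab$ attains a strictly positive minimum, so
\[
\delta:=\min_{t\in[0,1]}d\big(p,H(t,\parO)\big)>0\,.
\]

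The core step is the following \emph{stability lemma}: if $g_1,g_2:\cK\to\R^{n}$ are continuous with $p\notin g_1(\parO)$ and $\lab\lab g_1-g_2\rab\rab_{\infty}<d(p,g_1(\parO))=:r$, then $\text{deg}(g_1,\cK,p)=\text{deg}(g_2,\cK,p)$. Writing $\varepsilon:=\lab\lab g_1-g_2\rab\rab_{\infty}<r$, the reverse triangle inequality gives $\lab\lab g_2(x)-p\rab\rab\geq r-\varepsilon>0$ for every $x\in\parO$, hence $d(p,g_2(\parO))\geq r-\varepsilon$ and the degree of $g_2$ is defined as well. Using density of differentiable maps together with Sard's theorem, I would pick a differentiable $\psi:\cK\to\R^{n}$ for which $p$ is a regular value and $\lab\lab\psi-g_2\rab\rab_{\infty}<r-\varepsilon$. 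This single $\psi$ is an admissible approximant for \emph{both} maps in the sense of Proposition and Definition \ref{def:degree_B}: indeed $\lab\lab\psi-g_2\rab\rab_{\infty}<r-\varepsilon\leq d(p,g_2(\parO))$, while $\lab\lab\psi-g_1\rab\rab_{\infty}\leq\lab\lab\psi-g_2\rab\rab_{\infty}+\varepsilon<r=d(p,g_1(\parO))$. By the independence of the chosen approximant asserted there, $\text{deg}(g_1,\cK,p)=\text{deg}(\psi,\cK,p)=\text{deg}(g_2,\cK,p)$.

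Finally I would chain. By uniform continuity of $H$ on the compact set $[0,1]\times\cK$ there is a partition $0=t_{0}<t_{1}<\dots<t_{N}=1$ with $\lab\lab H(t_{i},\cdot)-H(t_{i+1},\cdot)\rab\rab_{\infty}<\delta$ for every $i$. Since $\delta\leq d(p,H(t_{i},\parO))$, the stability lemma applies on each subinterval and yields $\text{deg}(H(t_{i},\cdot),\cK,p)=\text{deg}(H(t_{i+1},\cdot),\cK,p)$; telescoping over $i$ gives
\[
\text{deg}(g,\cK,p)=\text{deg}(H(0,\cdot),\cK,p)=\text{deg}(H(1,\cdot),\cK,p)=\text{deg}(h,\cK,p)\,,
\]
which is the assertion.

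The main obstacle is the stability lemma, and within it the \emph{simultaneous} approximation: one must produce a single differentiable map with $p$ a regular value that is close enough to both $g_1$ and $g_2$ to count as a legitimate approximant for each in the sense of Proposition and Definition \ref{def:degree_B}. The margin bookkeeping above — using $d(p,g_2(\parO))\geq r-\varepsilon$ together with the triangle inequality — is exactly what makes this possible, and the existence of such a $\psi$ rests on Sard's theorem, which already underlies Proposition and Definition \ref{def:degree_B} itself. Everything else reduces to compactness and a finite telescoping argument.
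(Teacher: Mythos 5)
This lemma is not proved in the paper at all: it is quoted verbatim from the reference [OR09, \S IV, Prop.~2.4] as part of the background material on mapping degree in Appendix~I, so there is no in-paper argument to compare against. Your stability-plus-chaining proof is the standard one and is correct: the uniform margin $\delta>0$ follows from compactness of $[0,1]\times\parO$, the simultaneous approximant $\psi$ is legitimately admissible for both $g_1$ and $g_2$ under Proposition and Definition~\ref{def:degree_B} thanks to your margin bookkeeping (this is the one genuinely delicate step, and you handle it correctly by keeping the inequalities strict, which also sidesteps the borderline case $\lab\lab g-h\rab\rab_{\infty}=d(p,g(\parO))$ where $h$ could touch $p$ on $\parO$), and the telescoping over a $\delta$-fine partition is routine. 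Your reading of the displayed $f$ as a typo for $g$ is also the intended one.
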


We will need the following Corollary.

\begin{Cor}[\cite{OR09}, §{}IV, Cor. 2.5 (2)]\label{cor:notzero}
Let $g:\cK\to\R^{n}$ be a continuous mapping and let $p\notin{}g(\parO)$, if $\text{deg}(g,\cK,p)\neq{}0$ then $p\in\text{Im}(g)$.
\end{Cor}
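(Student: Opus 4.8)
The plan is to argue by contraposition: assuming $p\notin\text{Im}(g)=g(\cK)$, I will show $\text{deg}(g,\cK,p)=0$, which contradicts the hypothesis $\text{deg}(g,\cK,p)\neq 0$ and thereby forces $p\in\text{Im}(g)$. The guiding idea is that a tight enough differentiable approximation of $g$ inherits the property $p\notin\text{Im}(h)$, so that the defining sum in (\ref{eq:degree}) runs over the empty preimage and vanishes.

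First I would exploit compactness. Since $\cK$ is the closure of a bounded open set it is compact, hence $g(\cK)$ is compact and the continuous function $x\mapsto\lab\lab g(x)-p\rab\rab$ attains a positive minimum on $\cK$ under the assumption $p\notin g(\cK)$. Writing $\delta:=\min_{x\in\cK}\lab\lab g(x)-p\rab\rab>0$ and noting $\parO\subseteq\cK$, hence $g(\parO)\subseteq g(\cK)$, I obtain $\delta\leq d(p,g(\parO))$.

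Next I would produce a convenient differentiable approximation. The construction behind Proposition and Definition~\ref{def:degree_B} (polynomial approximation followed by a Sard-type perturbation making $p$ a regular value) yields, for any prescribed tolerance, a differentiable $h:\cK\to\R^{n}$ with $\lab\lab g-h\rab\rab_{\infty}$ as small as desired and $p$ a regular value of $h$. Choosing this tolerance strictly below $\delta$ gives such an $h$ with $\lab\lab g-h\rab\rab_{\infty}<\delta\leq d(p,g(\parO))$, so $h$ is admissible in the sense of Proposition and Definition~\ref{def:degree_B}. For every $x\in\cK$ the reverse triangle inequality then gives
\begin{displaymath}
\lab\lab h(x)-p\rab\rab\geq\lab\lab g(x)-p\rab\rab-\lab\lab g(x)-h(x)\rab\rab>\delta-\delta=0\,,
\end{displaymath}
so $h^{-1}(p)=\emptyset$. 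By the independence of the degree from the choice of admissible approximation (Proposition and Definition~\ref{def:degree_B}) together with formula (\ref{eq:degree}),
\begin{displaymath}
\text{deg}(g,\cK,p)=\text{deg}(h,\cK,p)=\sum_{x\in h^{-1}(p)}\text{sign}\lb\text{det}\lb D_{x}h\rb\rb=0\,,
\end{displaymath}
the empty sum being zero. This is the desired contradiction.

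The only delicate point is the choice of $h$: Proposition and Definition~\ref{def:degree_B} as quoted only guarantees an admissible approximation with $\lab\lab g-h\rab\rab_{\infty}\leq d(p,g(\parO))$, whereas I need one that is strictly closer than $\delta$ so that $p$ escapes the \emph{whole} image $h(\cK)$, not merely the boundary image $h(\parO)$. I would therefore appeal to the fact, inherent in the approximation and Sard machinery underlying that definition, that admissible approximations exist with arbitrarily small sup-norm error and that the degree coincides for all of them; exhibiting a single admissible $h$ with empty preimage then suffices to conclude.
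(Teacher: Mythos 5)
The paper never proves this corollary: it is imported verbatim from \cite{OR09} (\S{}IV, Cor.~2.5(2)) in Appendix I as part of the background on mapping degree, so there is no in-paper proof to compare against. Judged on its own, your argument is the standard textbook proof and it is correct: under the contraposition hypothesis $p\notin g(\cK)$, compactness of $\cK$ gives $\delta:=\min_{x\in\cK}\lVert g(x)-p\rVert>0$, the inclusion $g(\parO)\subseteq g(\cK)$ gives $\delta\leq d(p,g(\parO))$ (so in particular the degree is defined), and any admissible differentiable approximation $h$ with $\lVert g-h\rVert_{\infty}<\delta$ has $h^{-1}(p)=\emptyset$, forcing the defining sum in (\ref{eq:degree}) to vanish. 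The one delicate point you flag is real, and your resolution is the right one: the literal statement of Proposition and Definition~\ref{def:degree_B} only asserts existence of \emph{some} admissible $h$ within tolerance $d(p,g(\parO))$, whereas you need the error strictly below $\delta$. This is supplied by the construction underlying that definition: uniformly approximate $g$ by a smooth (e.g.\ polynomial) map within $\epsilon/2$, use Sard's theorem to choose a regular value $q$ of it with $\lVert q-p\rVert<\epsilon/2$, and translate by $p-q$; the result is admissible with $\lVert g-h\rVert_{\infty}<\epsilon$ for any $\epsilon<\delta$, has $p$ as a regular value with empty preimage, and the ``independent of the choice of $h$'' clause of Proposition and Definition~\ref{def:degree_B} then yields $\text{deg}(g,\cK,p)=\text{deg}(h,\cK,p)=0$, the desired contradiction. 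No gap remains.
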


\section{Appendix II - Proof of Lemma~\ref{lem:intersection}}\label{app:intersection}

W.l.o.g. assume that $\Xi,\Omega$ are linear subspaces, intersecting at the origin $v=0$, and their sum spans $\R^D$.

\paragraph{We first show that $\Xi\cap \phi(\Omega) \neq \emptyset$.}
Suppose that $1 \leq \dim(\Omega) = m < D$, $\dim(\Xi) = D-m$ and choose unit length, pairwise orthogonal vectors $h_1,\ldots,h_D\in \R^D$ such that the first $m$ are orthogonal to $\Xi$ and the latter $D-m$ span $\Xi$. With the matrices $H=(h_1,\ldots,h_m)$ and $S=(h_{m+1},\ldots,h_D)$ define the mappings
\begin{equation}\label{eq:Psi}
\begin{aligned}
\Psi_{1}&:{}\Omega\to\R^{m}\,;\omega\mapsto{}H^{T}\omega\\
\Psi_{2}&:{}\Omega\to\R^{m}\,;\omega\mapsto{}H^{T}\phi(\omega)\,,
\end{aligned}
\end{equation}
and note that $H^Tu =0 \Leftrightarrow u \in \Xi$, as well as  
\begin{equation*}
\begin{aligned}
\lab\lab{}S^{T}\omega\rab\rab^{2}&=\sum_{j=m+1}^{n}\lb{}h_{j}^{T}\omega\rb^{2} ={}
\max_{\xi\in{}\Xi,\lab\lab{}\xi\rab\rab=1}\lb{}\xi^{T}\omega\rb^{2}\\
&\leq{}\lab\lab{}\omega\rab\rab^{2}\underset{\begin{array}{c}\xi\in{}\Xi:\lab\lab{}\xi\rab\rab=1\\
\widetilde{\omega}\in{}\Omega:\lab\lab{}\widetilde{\omega}\rab\rab=1\end{array}}{\max}\lb{}\xi^{T}\widetilde{\omega}\rb^{2}
=\lab\lab{}\omega\rab\rab^{2}\cos^{2}(\theta)\,,
\end{aligned}
\end{equation*}
where $\theta\in[0,1)$ is the first principal angle between $\Xi$ and $\Omega$.
Hence, we have for all $\omega\in \Omega$ that
\begin{equation} \label{eq:alltogether}
\lab\lab{}\Psi_{1}(\omega)\rab\rab^{2}=
\omega^T(H|S)\left(\begin{array}{c}H^T\\S^T\end{array}\right)\omega - \omega^TSS^T\omega 
\geq{}\lab\lab{}\omega\rab\rab^{2}\lb{}1-
\cos^{2}(\theta)\rb=\lab\lab{}\omega\rab\rab^{2}\sin^{2}(\theta)\,.
\end{equation}
Thus, with the closed ball
\begin{displaymath}
\cK:=\lbb{}\omega\in{}\Omega:\lab\lab{}\omega\rab\rab\leq{}C\lb{}1+\frac{2}{\sin(\theta)}\rb\rbb\,,
\end{displaymath}
in $\Omega$ and  $\partial\cK$, its boundary within $\Omega$, we have in particular for $a\in{}\partial\cK$ that 
\begin{equation}\label{eq:PsiB}
\begin{aligned}
\lab\lab{}\Psi_{1}(a)\rab\rab\geq{}\lab\lab{}a\rab\rab\sin(\theta)=C\lb{}\sin(\theta)+2\rb\geq{}2C\,.
\end{aligned}
\end{equation}
Moreover, by hypothesis on $\phi$ we have for all $a\in\Omega$, in particular for all $a\in{}\partial\cK$, that
\begin{equation}\label{eq:fromW}
\begin{aligned}
\lab\lab{}\Psi_{2}(a)-\Psi_{1}(a)\rab\rab&=\lab\lab{}H^{T}\phi(a)-H^{T}a\rab\rab 
\leq{}\lab\lab{}\phi(a)-a\rab\rab\leq{}C\,,
\end{aligned}
\end{equation}
by orthonormality of the columns of $H$.

Now, introducing a homotopy
\begin{displaymath}
\nu:[0,1]\times{}\Omega\to\R^{m}\,,\quad{}(t,w)\mapsto{}t\Psi_{1}(w)+(1-t)\Psi_{2}(w)\,,
\end{displaymath}
between $\Psi_{1}$ and $\Psi_{2}$, exploiting~(\ref{eq:PsiB}) and~(\ref{eq:fromW}), we have for all $a\in{}\partial\cK$ and $t\in [0,1]$ that
\begin{displaymath}
\begin{aligned}
\lab\lab{}\nu(t,a)\rab\rab&=\lab\lab{}t\Psi_{1}(a)+(1-t)\Psi_{2}(a)\rab\rab\\
&=\lab\lab{}\Psi_{1}(a)-(1-t)\lb{}\Psi_{1}(a)-\Psi_{2}(a)\rb\rab\rab\\
&\geq{}\lab\lab{}\Psi_{1}(a)\rab\rab-\lab\lab{}(1-t)\lb{}\Psi_{1}(a)-\Psi_{2}(a)\rb\rab\rab\geq{}C\,,
\end{aligned}
\end{displaymath}
yielding
\begin{equation}\label{eq:0_not_in_A}
\nexists{}\,(t,a)\in{}[0,1]\times{}\partial\cK\text{ such that }\nu(t,a)=0\,.
\end{equation}

Finally, let $B=(b_1,\ldots,b_m)$ with an orthonormal basis $b_{1},b_{2},\dots{},b_{m}$ of $\Omega$ and define the isomorphism
\begin{displaymath}
f:\R^{m}\to{}\Omega\,,\quad{}x \mapsto Bx\,. 
\end{displaymath}
Then
\begin{displaymath}
\nu_{f}:[0,1]\times{}f^{-1}(\cK),\qquad{}(t,x)\mapsto{}\nu(t,f(x))\,,
\end{displaymath}
is a homotopy between the isomorphism $\Psi_{1}\circ{}f : \R^m \to \R^m$ and $\Psi_{2}\circ{}f$, and by (\ref{eq:0_not_in_A}) we have
\begin{displaymath}
\nexists{}\,(t,a)\in{}[0,1]\times{}\lb{}f^{-1}(\partial\cK)\rb\text{ such that }\nu_{f}(t,a)=0\,.
\end{displaymath}
Hence, we can apply Lemma~\ref{lem:homoin} to obtain
\begin{displaymath}
\begin{aligned}
\text{deg}\lb{}\Psi_{2}\circ{}f,f^{-1}(\cK),0\rb&=\text{deg}\lb{}\Psi_{1}\circ{}f,f^{-1}(\cK),0\rb\\
&=\text{sign}\lb{}\text{det}(D_{0}[\Psi_{1}\circ{}f])\rb\\
&=\text{sign}\lb\text{det}(D_{0}[\Psi_{1}]D_{0}[f])\rb\\
&=\text{sign}\lb\text{det}(H^{T}B)\rb\neq 0\,. 
\end{aligned}
\end{displaymath}
By Corollary~\ref{cor:notzero} we obtain the existence of at least one $x\in\R^{m}$ such that $H^{T}(\phi{}\circ{}f)(x)=0$, yielding that there must exist at least one $\omega\in\phi(\Omega)$ such that $H^{T}\omega=0$. By construction of $H$ such an $\omega$ is in $\Xi$ proving the first assertion.

\paragraph{Next, we show the asserted inequality.} Let $u=\phi(\omega)\in{}\Xi\cap\phi(\Omega)$ with $\omega\in\Omega$. Then, we have $||u-\omega||\leq{}C$ by hypothesis on $\phi$ and 
by (\ref{eq:fromW}) that
\begin{displaymath}
||\Psi_{1}(\omega)||=||H^{T}\omega||=||H^{T}\omega-H^{T}u||
=||\Psi_{1}(\omega)-\Psi_{2}(\omega)||\leq{}C\,.
\end{displaymath}
Furthermore, by~(\ref{eq:alltogether}) we have
\begin{displaymath}
||\omega||\leq{}\frac{C}{\sin(\theta)}\,,
\end{displaymath}
yielding finally,
\begin{displaymath}
||u||\leq{}||u-\omega||
+||\omega||\leq{}C\lb{}1+\frac{1}{\sin(\theta)}\rb\,.
\end{displaymath}

\section{Appendix III - Detail to 
Separating Cartoon and Texture}\label{sep-cartoon-texture:appendix}

Recalling (\ref{omega-def:eq}) and (\ref{eq:Spline}) 
introduce the \emph{refinement filter function} 
\begin{displaymath}
h_{\gamma}(x,y):=\frac{2\widehat{f}_{\gamma}\lb{}-2x,-2y\rb}{\widehat{f}_{\gamma}\lb{}-x,-y\rb}\,,
\end{displaymath}
see~\cite[Eq.(29)]{VBU05},
obtain for a scale $j\in\lbb{}0,1,\dots{},J-1,J\rbb{}$  via dyadic sub-sampling (cf. \cite{Mal98}) the primal wavelet frames $T^{\text{primal}}_{j,s}\in\SpM$, for $s=1,2,3$,
\begin{equation}\label{eq:Tprimal}
\begin{aligned}
T^{\text{primal}}_{j,1}[k,\ell]&=\frac{1}{2}\exp\lb{}-i(2^{j-1}\omega_{k}+\pi)\rb{}h_{\gamma}(2^{j-1}\omega_{k}+\pi,2^{j-1}\omega_{\ell})\\
&\quad{}\times{}a_{\gamma}\lb{}2^{j-1}\omega_{k}+\pi, 2^{j-1}\omega_{\ell}\rb{}\widehat{f}_{\gamma}\lb{}2^{j-1}\omega_{k},2^{j-1}\omega_{\ell}\rb{}\,,\\
T^{\text{primal}}_{j,2}[k,\ell]&=\frac{1}{2}\exp\lb{}-i(2^{j-1}\omega_{k}+\pi)\rb{}h_{\gamma}(2^{j-1}\omega_{k},2^{j-1}\omega_{\ell}+\pi)\\
&\quad{}\times{}a_{\gamma}\lb{}2^{j-1}\omega_{k}, 2^{j-1}\omega_{\ell}+\pi\rb{}\widehat{f}_{\gamma}\lb{}2^{j-1}\omega_{k},2^{j-1}\omega_{\ell}\rb{}\,,\\
T^{\text{primal}}_{j,3}[k,\ell]&=\frac{1}{2}\exp\lb{}-i(2^{j-1}\omega_{k}+\pi)\rb{}h_{\gamma}(2^{j-1}\omega_{k}+\pi,2^{j-1}\omega_{\ell}+\pi)\\
&\quad{}\times{}a_{\gamma}\lb{}2^{j-1}\omega_{k}+\pi, 2^{j-1}\omega_{\ell}+\pi\rb{}\widehat{f}_{\gamma}\lb{}2^{j-1}\omega_{k},2^{j-1}\omega_{\ell}\rb{}\,.
\end{aligned}
\end{equation}
and their dual counterparts 
\begin{equation}\label{eq:Tdual}
\begin{aligned}
T^{\text{dual}}_{j,1}[k,\ell]&=\frac{1}{2}\exp\lb{}-i(2^{j-1}\omega_{k}+\pi)\rb{}\frac{h_{\gamma}(2^{j-1}\omega_{k}+\pi,2^{j-1}\omega_{\ell})}{a_{\gamma}\lb{}2^{j}\omega_{k},2^{j}\omega_{\ell}\rb}\\
&\quad{}\times{}\frac{\widehat{f}_{\gamma}\lb{}2^{j-1}\omega_{k},2^{j-1}\omega_{\ell}\rb}{a_{\gamma}\lb{}2^{j-1}\omega_{k}, 2^{j-1}\omega_{\ell}\rb{}}\,,\\
T^{\text{dual}}_{j,2}[k,\ell]&=\frac{1}{2}\exp\lb{}-i(2^{j-1}\omega_{k}+\pi)\rb{}\frac{h_{\gamma}(2^{j-1}\omega_{k},2^{j-1}\omega_{\ell}+\pi)}{a_{\gamma}\lb{}2^{j}\omega_{k},2^{j}\omega_{\ell}\rb}\\
&\quad{}\times{}\frac{\widehat{f}_{\gamma}\lb{}2^{j-1}\omega_{k},2^{j-1}\omega_{\ell}\rb}{a_{\gamma}\lb{}2^{j-1}\omega_{k},2^{j-1}\omega_{\ell}\rb{}}\,,\\
T^{\text{dual}}_{j,3}[k,\ell]&=\frac{1}{2}\exp\lb{}-i(2^{j-1}\omega_{k}+\pi)\rb{}\frac{h_{\gamma}(2^{j-1}\omega_{k}+\pi,2^{j-1}\omega_{\ell}+\pi)}{a_{\gamma}\lb{}2^{j}\omega_{k},2^{j}\omega_{\ell}\rb}\\
&\quad{}\times{}\frac{\widehat{f}_{\gamma}\lb{}2^{j-1}\omega_{k},2^{j-1}\omega_{\ell}\rb}{a_{\gamma}\lb{}2^{j-1}\omega_{k}, 2^{j-1}\omega_{\ell}\rb{}}\,.\\
\end{aligned}
\end{equation}
%

Further we link the isotropic polyharmonic B-splines to the $Z$-th order \emph{Riesz transform} in order to model directionality, cf. \cite{USV09} and \cite[Eqns. (4) and (10)]{UV10}, and define for $z=1,2,\dots{},Z$ the matrices $R_{z}\in\SpM$ by
\begin{displaymath}
R_{z}[k,\ell]:=(-i)^{Z}\sqrt{\frac{Z!}{z!(Z-z)!}}\frac{\omega_{k}^{z}\omega_{\ell}^{Z-z}}{\lb\omega_{k}^{2}+\omega_{\ell}^{2}\rb^{\frac{Z}{2}}}\,.
\end{displaymath}

Now we are ready to define $\uB$ and $\uBT$.
For $j=0,\dots{},J$, $z=1,\dots{},Z$, $s=1,2,3$ define
\begin{displaymath}
\begin{aligned}
\widehat{B}_{j,z,s}[k,\ell]&=T^{\text{primal}}_{j,s}[k,\ell]R_{z}[k,\ell]\\
\widehat{\widetilde{B}}_{j,z,s}[k,\ell]&=T^{\text{dual}}_{j,s}[k,\ell]R_{z}[k,\ell]\,.
\end{aligned}
\end{displaymath}
From~(\ref{eq:Tprimal}) and~(\ref{eq:Tdual}) we have at once the factor matrix-family of $(\uB,\uBT)$ 
\begin{displaymath}
\begin{aligned}
\widehat{Y}_{j,z,1}[k,\ell]&=\lb{}a_{\gamma}(2^{j-1}\omega_{k}+\pi,2^{j-1}\omega_{\ell})a_{\gamma}(2^{j-1}\omega_{k},2^{j-1}\omega_{\ell})a_{\gamma}(2^{j}\omega_{k},2^{j}\omega_{\ell})\rb{}^{-1}\,,\\
\widehat{Y}_{j,z,2}[k,\ell]&=\lb{}a_{\gamma}(2^{j-1}\omega_{k},2^{j-1}\omega_{\ell}+\pi)a_{\gamma}(2^{j-1}\omega_{k},2^{j-1}\omega_{\ell})a_{\gamma}(2^{j}\omega_{k},2^{j}\omega_{\ell})\rb{}^{-1}\,,\\
\widehat{Y}_{j,z,3}[k,\ell]&=\lb{}a_{\gamma}(2^{j-1}\omega_{k}+\pi,2^{j-1}\omega_{\ell}+\pi)a_{\gamma}(2^{j-1}\omega_{k},2^{j-1}\omega_{\ell})a_{\gamma}(2^{j}\omega_{k},2^{j}\omega_{\ell})\rb{}^{-1}\,.
\end{aligned}
\end{displaymath}
Since $\widehat{Y}_{j_{1},z_{1},s_{1}}\neq{}\widehat{Y}_{j_{2},z_{2},s_{2}}$ for $j_{1}\neq{}j_{2}$ or $s_{1}\neq{}s_{2}$ we have that $(\uB,\uBT)$ is weakly but not strongly factoring.

In the numerical implementation, we approximate the infinite sum in~(\ref{eq:Auto}) by 
\begin{equation*}
a_{\gamma}(x,y):=\sum_{r,s\in\lbb{}-10,\dots{},10\rbb}\lb{}\widehat{f}_{\gamma}(x+2\pi{}r,y+2\pi{}s\rb^{2}\,.
\end{equation*}
Further due to correction by $H$ in equation (\ref{adjusted-filters:eq}), the resulting entries of the filters $\uB^{\text{cor}}$ and $\uBT^{\text{cor}}$, when transformed back to the spacial domain, have small, but non-zero imaginary parts. We cut these imaginary parts off. Since we defined filters with real entries in the frequency domain, by removing their imaginary part in the spatial domain, the matrix entries in the frequency domain remain real, so that all eigenvalues of the thus adjusted $\ucoBT^{\ast}\ucoB$ are real and the factor matrix-family has real entries, making the resulting filters weakly factoring. Since we have
\begin{eqnarray*}
\lefteqn{\lab{}\sum_{j,n,s}\overline{\widehat{\widetilde{B}}^{\text{cor}}_{j,n,s}[k,\ell]}\widehat{B}^{\text{cor}}_{j,n,s}[k,\ell]\rab}\\
&=&\lab\lb\sum_{j,n,s}\overline{\widehat{\widetilde{B}}_{j,n,s}[k,\ell]}\widehat{B}_{j,n,s}[k,\ell]\rb\lb\widehat{A}[k,\ell]+\sum_{j,n,s}\overline{\widetilde{B}_{j,n,s}[k,\ell]}\widehat{B}_{j,n,s}[k,\ell]\rb^{-1}\rab\\&\leq{}&1
\end{eqnarray*}
and observe that the above sum on the left-hand-side only gets smaller if the imaginary parts of $\uB,\uBT$ are removed, the resulting filters satisfy the~(\ref{eq:NEPC}). In all experiments conducted in this paper numerically the above inequality was strict, so that all filters used for the LbS model satisfied  the~(\ref{eq:CPC}).

\section{Appendix IV - More Cartoons}\label{app:cartoon}

\begin{figure}
\centering
\vspace{-2cm}
\subfloat[][]{
	\includegraphics[scale=0.33]{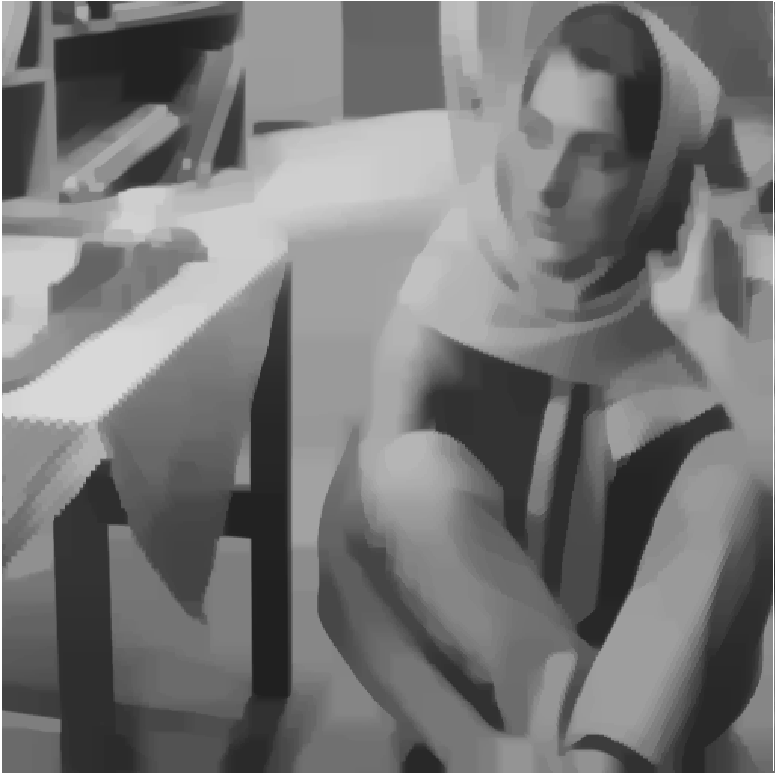}
}
\subfloat[][]{
	\includegraphics[scale=0.33]{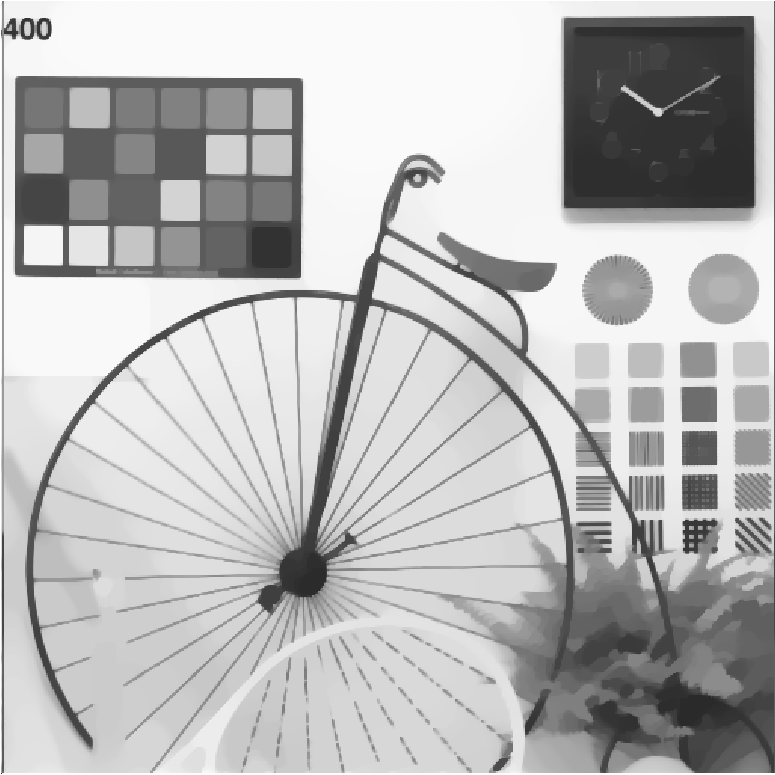}
}
\subfloat[][]{
	\includegraphics[scale=0.33]{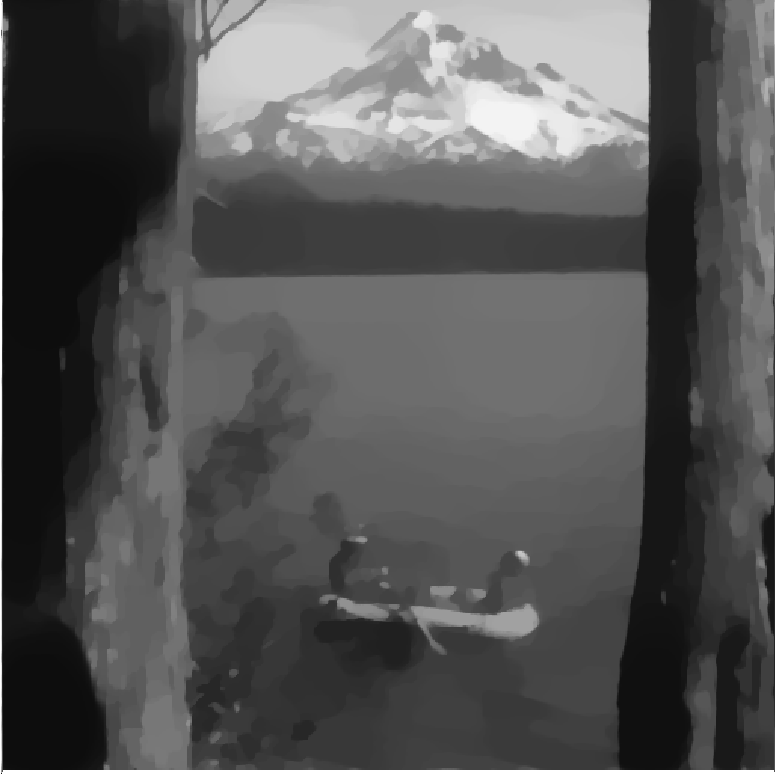}
}
\\
\subfloat[][]{
	\includegraphics[scale=0.33]{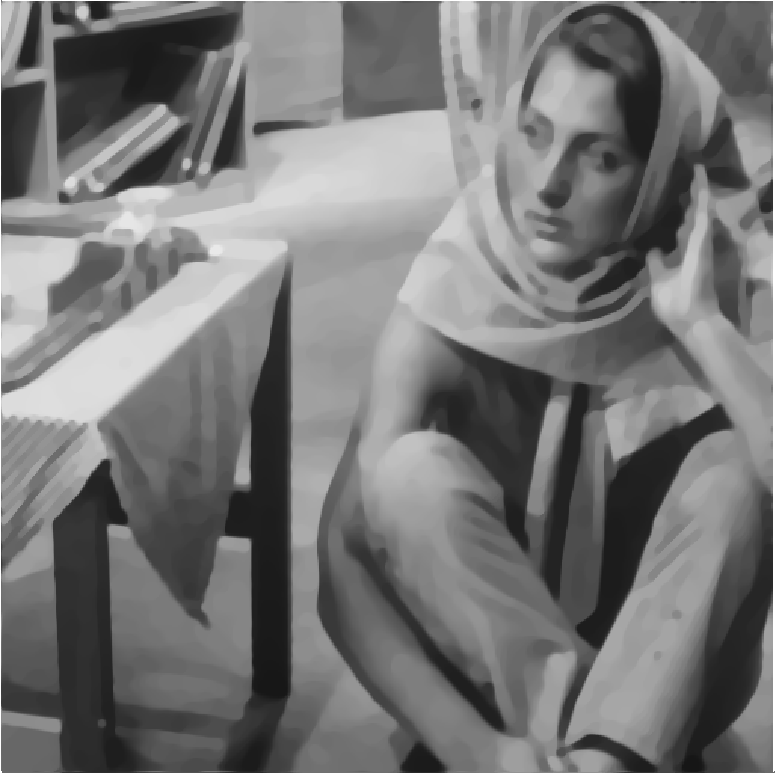}
}
\subfloat[][]{
	\includegraphics[scale=0.33]{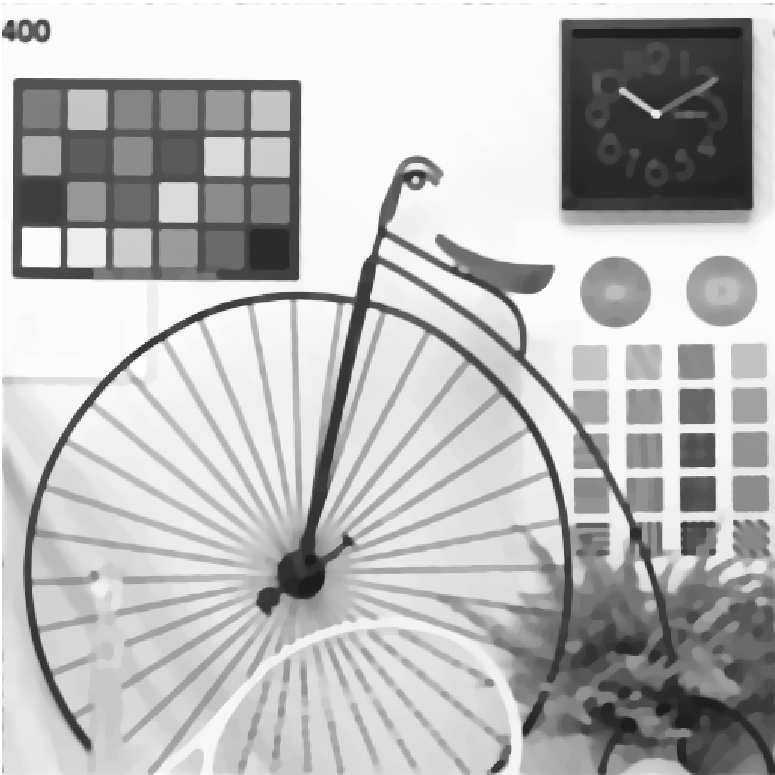}
}
\subfloat[][]{
	\includegraphics[scale=0.33]{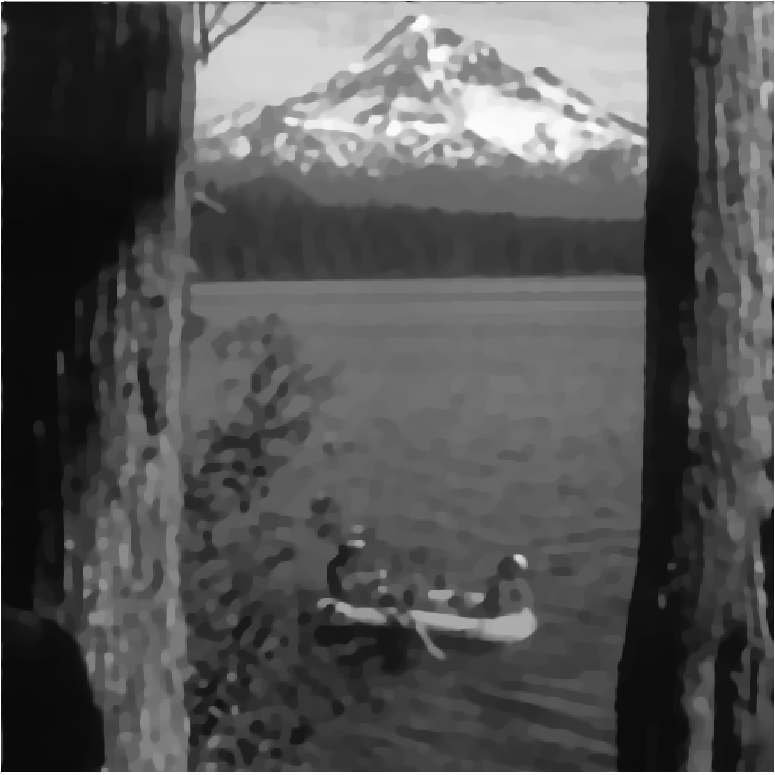}
}
\\
\subfloat[][]{
	\includegraphics[scale=0.33]{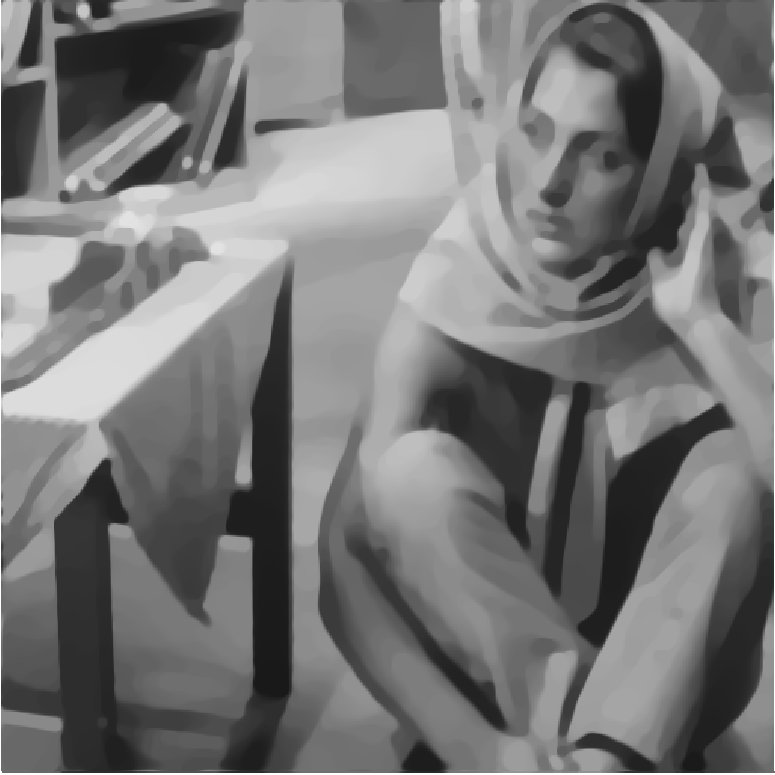}    
}
\subfloat[][]{
	\includegraphics[scale=0.33]{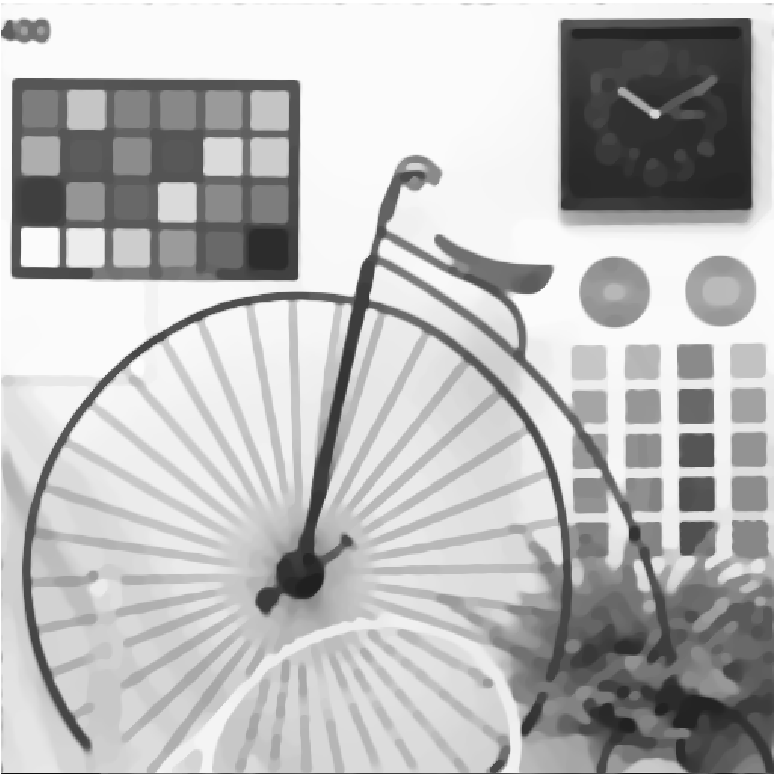}
}
\subfloat[][]{
	\includegraphics[scale=0.33]{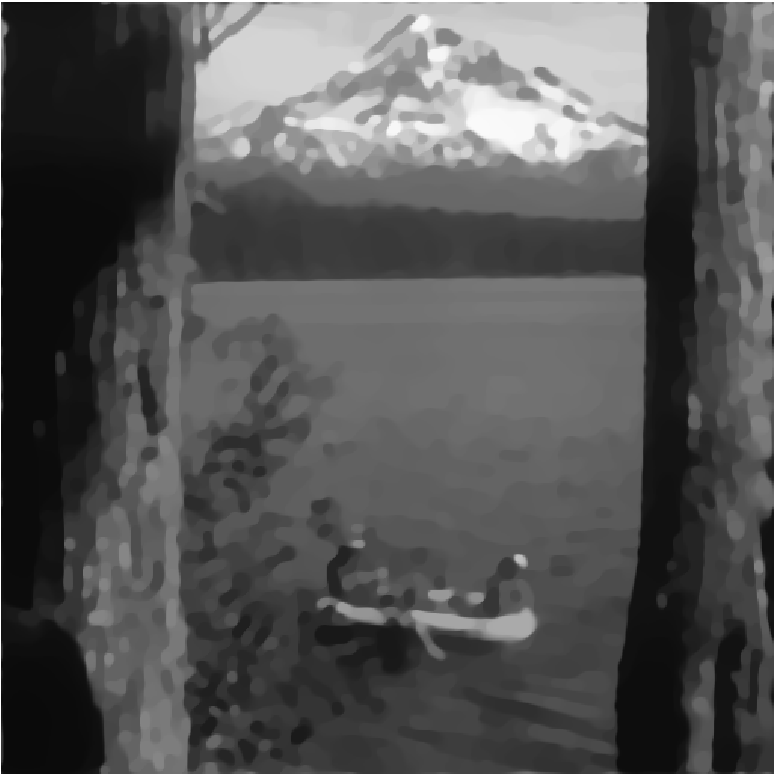}
}
\\
\subfloat[][]{
	\includegraphics[scale=0.33]{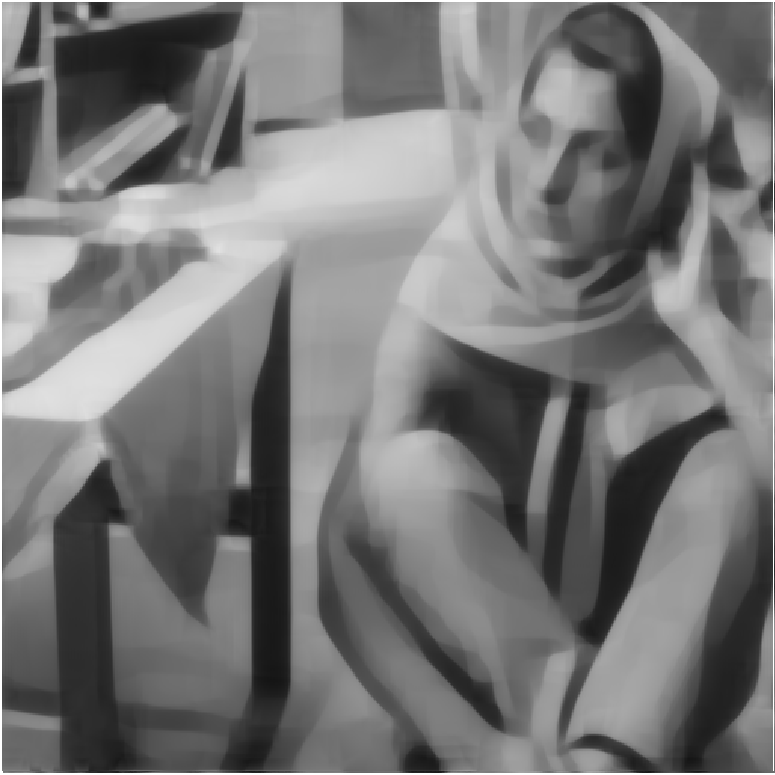}
}
\subfloat[][]{
	\includegraphics[scale=0.33]{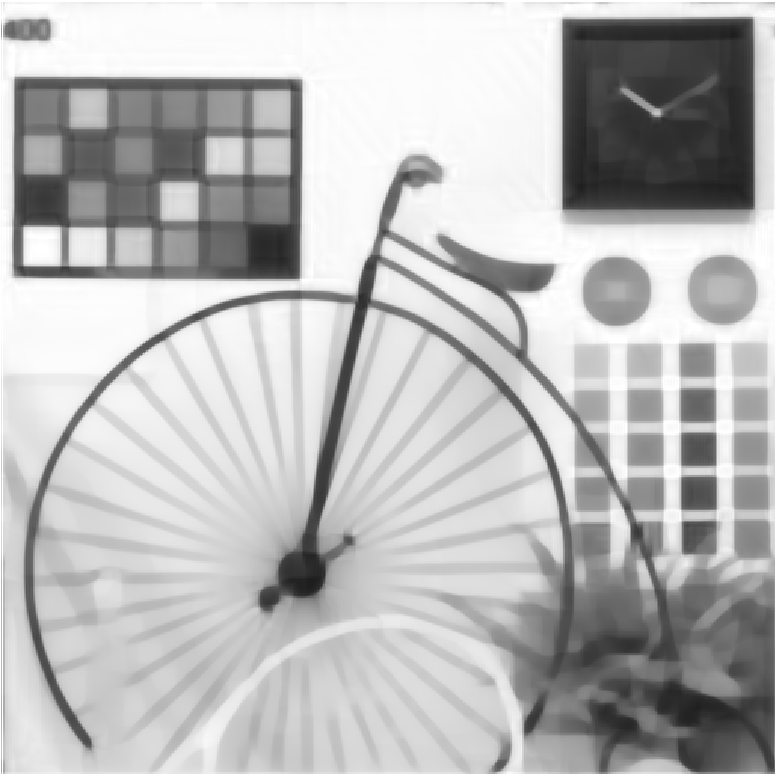}
}
\subfloat[][]{
	\includegraphics[scale=0.33]{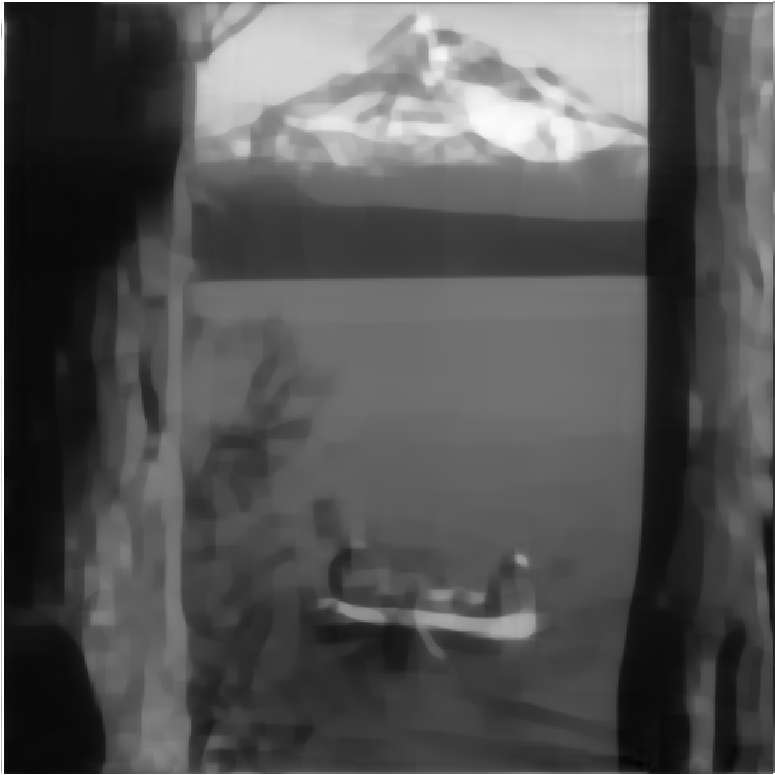}
}

\caption{Cartoons computed from the test images barbara, bicycle, lake from Figure \ref{fig:c_t_1} (see~\cite{gra} and ) via $TV-\ell^{2}$ (1st row: a-c), $OSV$ (2nd row: d-f), $TV$-Hilbert (3rd row: g-i) and the proposed LsR (4th row: j-l).} 
\label{fig:c_t_1_full}
\end{figure}

\begin{figure}
\centering
\vspace{-2cm}
\subfloat[][]{
	\includegraphics[scale=0.33]{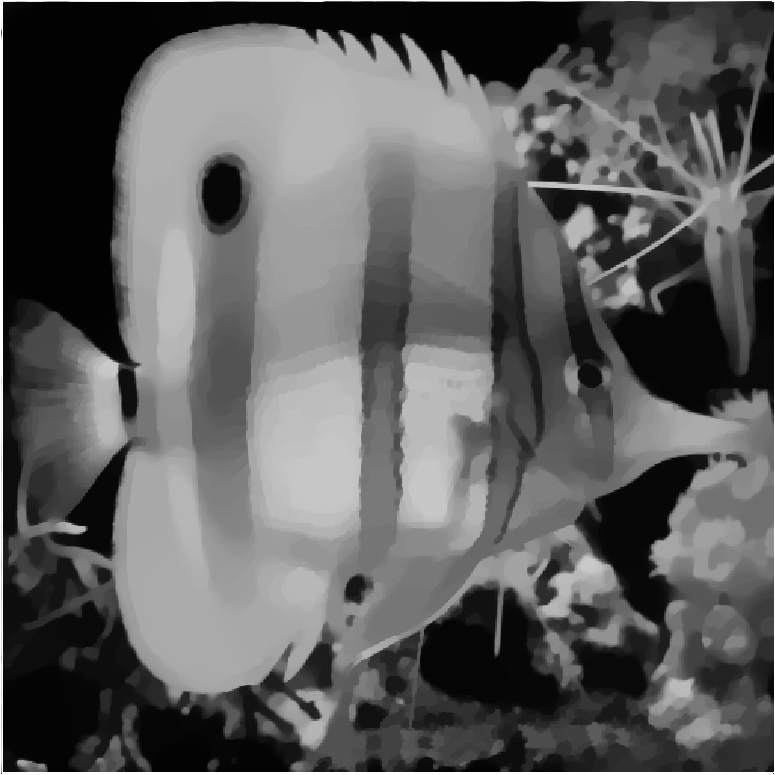}
}
\subfloat[][]{
	\includegraphics[scale=0.33]{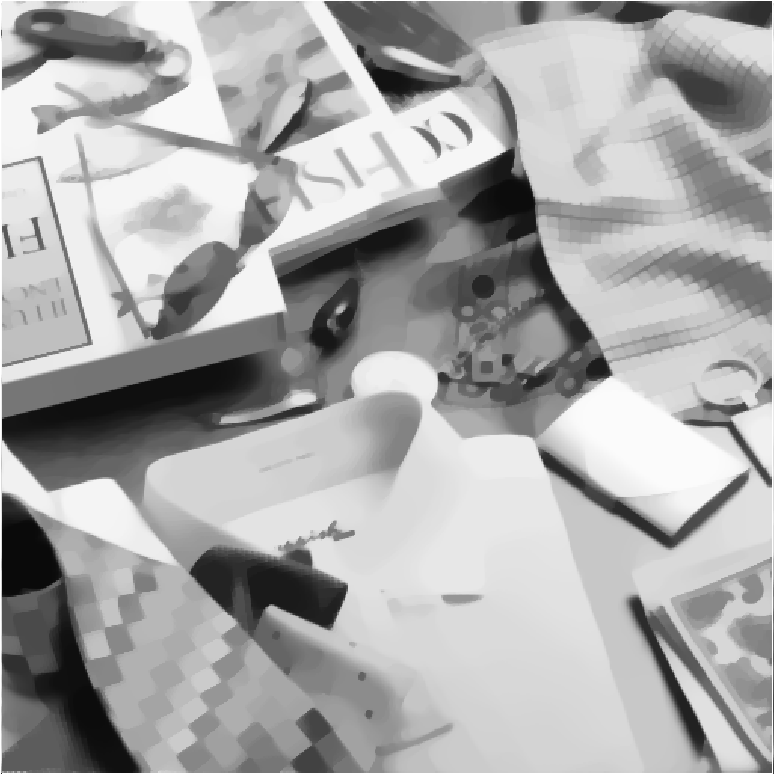}
}
\subfloat[][]{
	\includegraphics[scale=0.33]{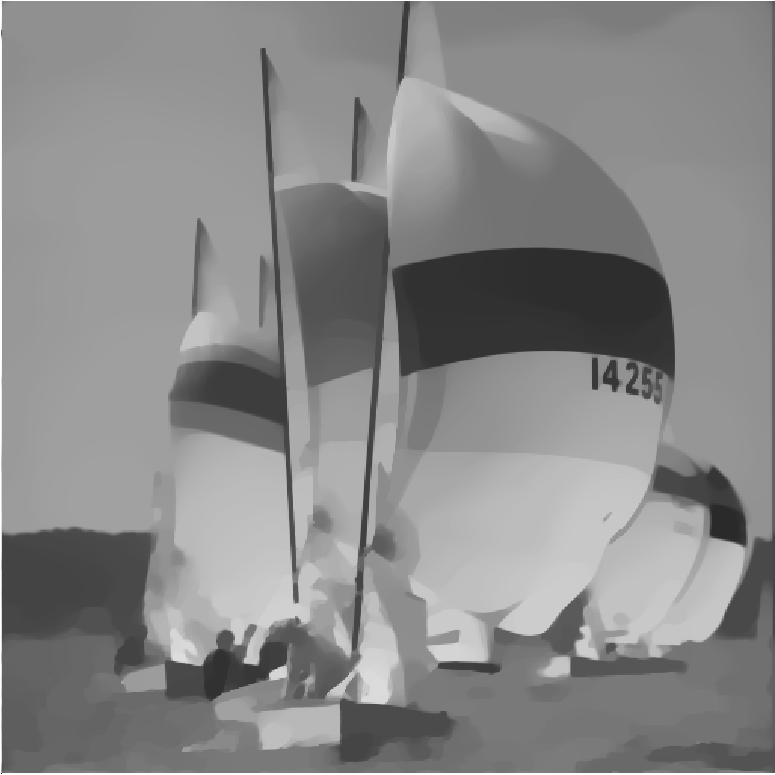}
}
\\
\subfloat[][]{
	\includegraphics[scale=0.33]{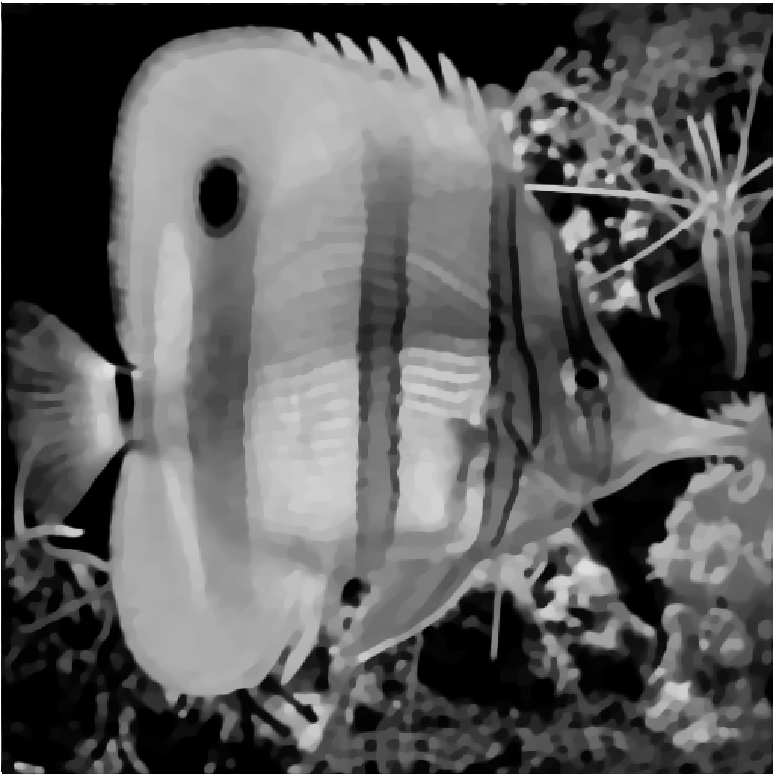}
}
\subfloat[][]{
	\includegraphics[scale=0.33]{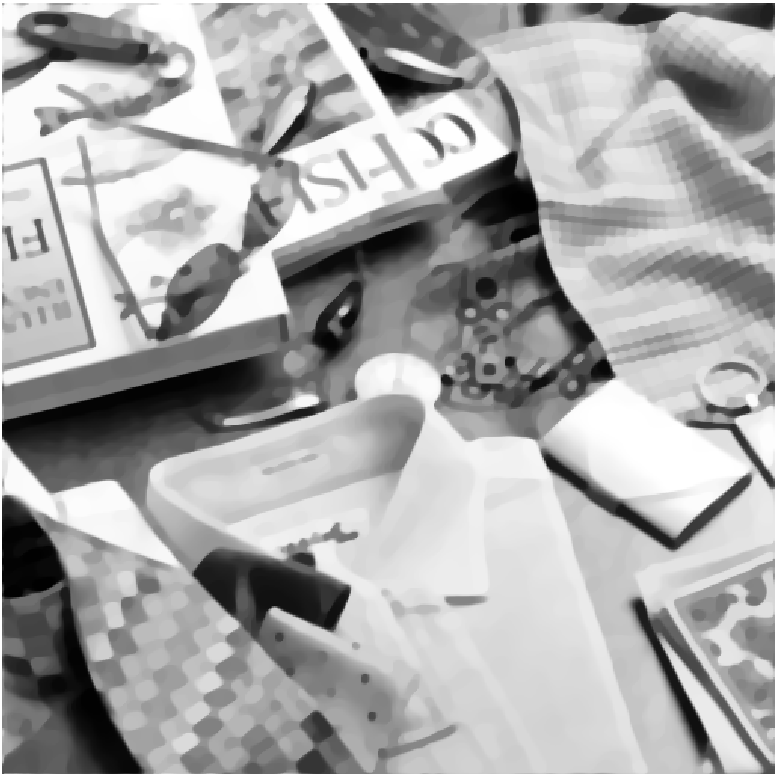}
}
\subfloat[][]{
	\includegraphics[scale=0.33]{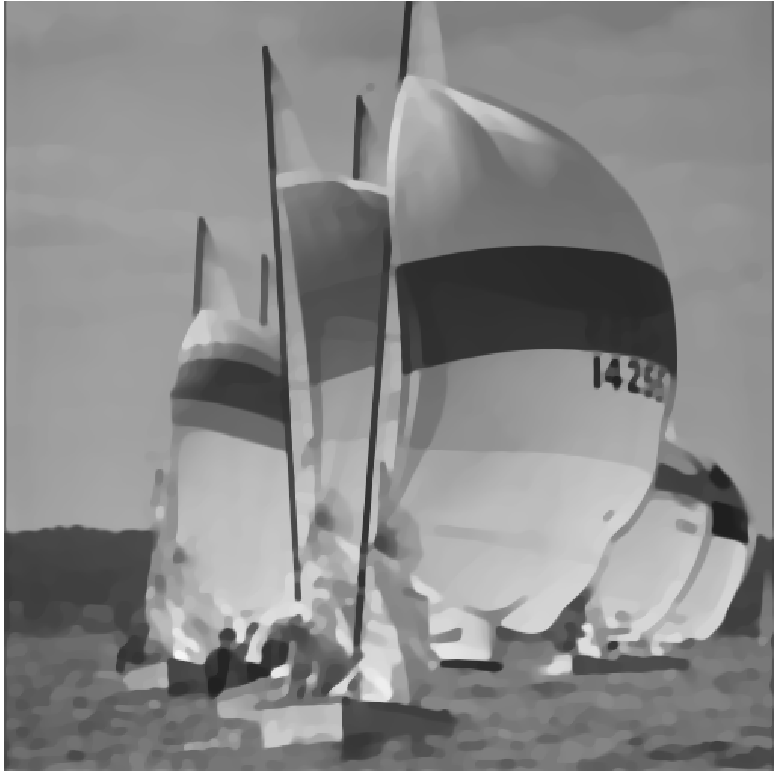}
}
\\
\subfloat[][]{
	\includegraphics[scale=0.33]{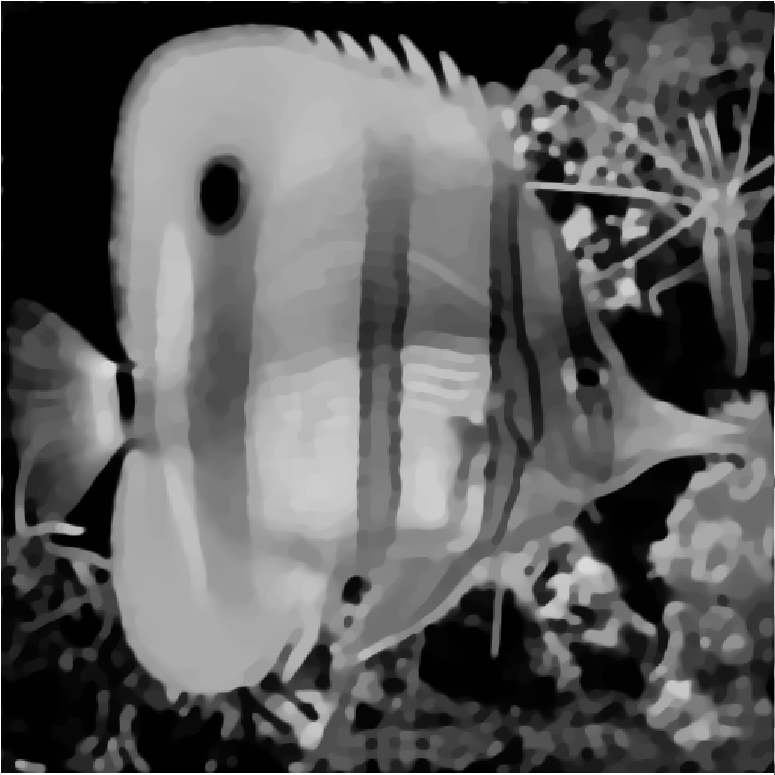}
}
\subfloat[][]{
	\includegraphics[scale=0.33]{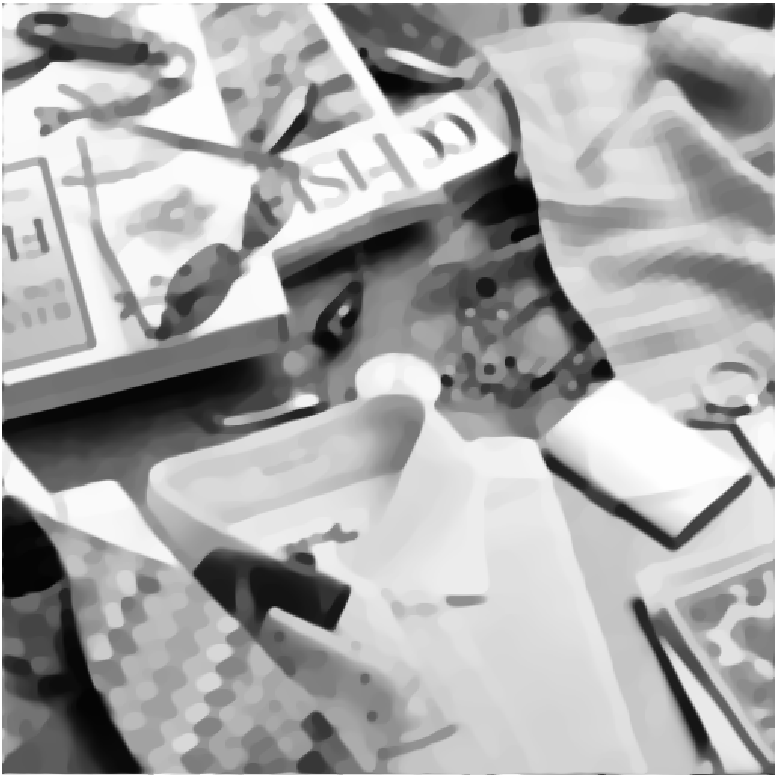}
}
\subfloat[][]{
	\includegraphics[scale=0.33]{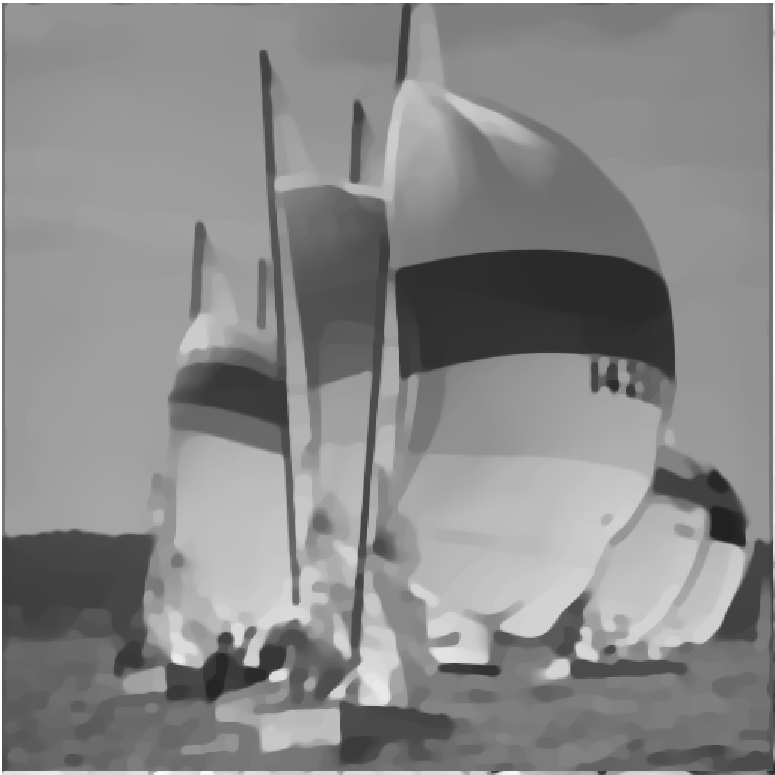}
}
\\
\subfloat[][]{
	\includegraphics[scale=0.33]{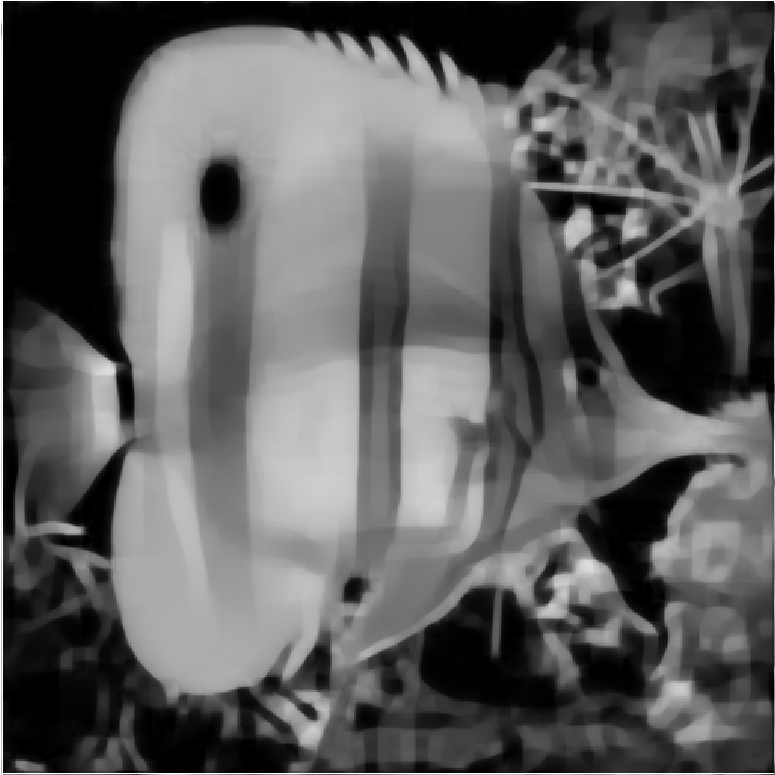}
}
\subfloat[][]{
	\includegraphics[scale=0.33]{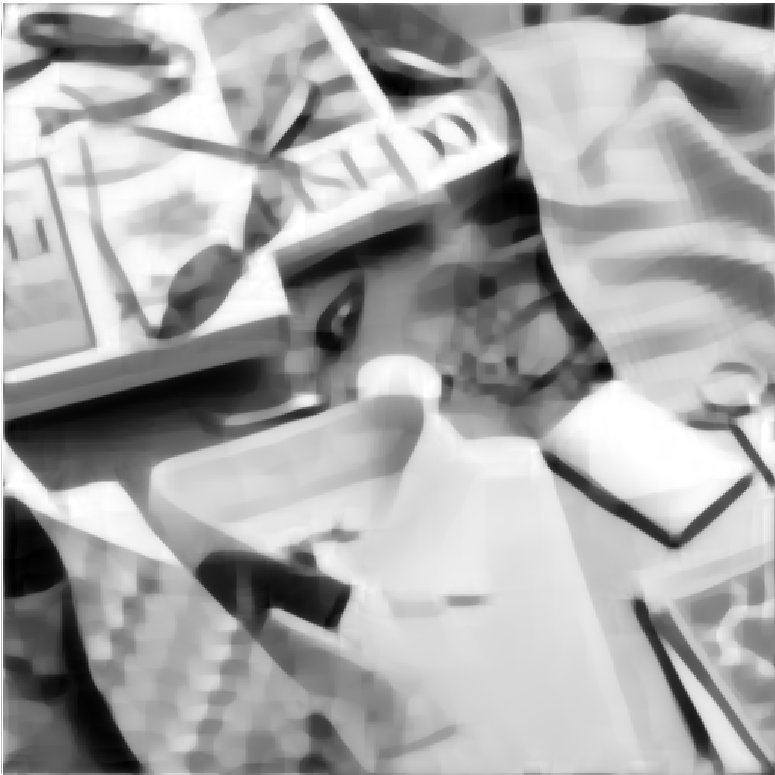}
}
\subfloat[][]{
	\includegraphics[scale=0.33]{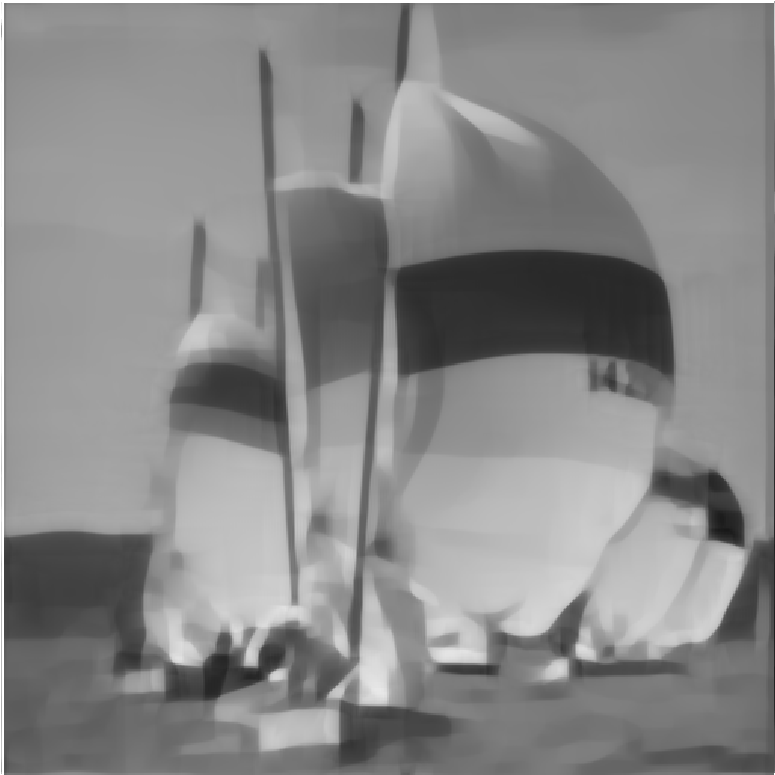}
}

\caption{Cartoons computed from the test fish, clutter, sail from Figure \ref{fig:c_t_2} (see~\cite{gra} and ) via $TV-\ell^{2}$ (1st row: a-c), $OSV$ (2nd row: d-f), $TV$-Hilbert (3rd row: g-i) and the proposed LsR (4th row: j-l).} 
\label{fig:c_t_2_full}
\end{figure}

\end{document}